\documentclass[11pt]{article}
\usepackage[margin=1in]{geometry}
\usepackage[utf8]{inputenc}
\usepackage[T1]{fontenc}
\usepackage{textcomp}
\usepackage{amsmath,amssymb,amsthm,amsfonts}
\usepackage{booktabs}
\usepackage{graphicx}
\usepackage{xcolor}
\usepackage{array}
\usepackage{enumitem}
\usepackage{placeins}
\usepackage{tabularx}
\usepackage{xltabular}   
\newcolumntype{L}{>{\raggedright\arraybackslash}X}
\usepackage[hidelinks]{hyperref}
\usepackage[numbers,sort&compress]{natbib}
\usepackage{authblk}
\usepackage{caption}
\graphicspath{{figs/}}

\newtheorem{proposition}{Proposition}
\newtheorem{lemma}{Lemma}

\newtheorem{definition}{Definition}
\newtheorem{remark}{Remark}

\newcommand{\NESTED}{\ensuremath{\mathcal{L}_{\mathrm{nest}}}}
\newcommand{\PK}{\ensuremath{\mathcal{L}_{\mathrm{pk}}}}
\newcommand{\CF}{\textsc{cf}}
\newcommand{\MCF}{\textsc{mcf}}
\newcommand{\Ocf}{\ensuremath{\Omega_{\mathrm{cf}}}}
\newcommand{\Opk}{\ensuremath{\Omega_{\mathrm{pk}}}}
\newcommand{\Rze}[2]{\ensuremath{\mathcal{R}_{#1}(#2)}}
\definecolor{vExact}{HTML}{1B5E20}
\definecolor{vEst}{HTML}{2E7D32}
\definecolor{vNarrow}{HTML}{6B8E23}
\definecolor{vWeak}{HTML}{C55A11}
\definecolor{vNone}{HTML}{9E2A2B}
\newcommand{\chip}[2]{\colorbox{#1}{\textcolor{white}{\footnotesize\textbf{\strut #2}}}}

\newsavebox{\tblbox}

\newcommand{\match}{\ensuremath{\mathcal{A}_{\mathrm{match}}}}
\newcommand{\umfe}{\ensuremath{\mathcal{A}_{\mathrm{uMFE}}}}
\newcommand{\pfe}{\ensuremath{\mathcal{A}_{\mathrm{pfe}}}}

\title{\textbf{Auditing Discovery Claims:\\A Two-Sided Criterion for Agentic Science}\\[5pt]
\large With the negative side decidable, instantiated on RNA pseudoknot design,\\
where one fallible oracle credits $43$ designs and three predictors credit $1$}
\author[1]{Wenhui Chen\thanks{\texttt{mc35092@um.edu.mo}}}
\author[2]{Jianlin Chen\thanks{\texttt{202330450231@mail.scut.edu.cn}}}
\author[1]{Ziyao Lin\thanks{\texttt{mc35081@um.edu.mo}}}
\author[1]{Chi Man Vong\thanks{Corresponding author. \texttt{cmvong@um.edu.mo}}}
\affil[1]{University of Macau}
\affil[2]{South China University of Technology}
\date{}

\begin{document}
\maketitle

\begin{abstract}
\noindent When a self-improving AI-for-science system claims a new capability, the evidence is usually a
benchmark delta, a description-length gate, or a $p$-value. None separates a real gain from extra search, from
a changed verifier, or from adaptation to a fallible oracle. We build a two-sided audit whose negative side is
a formal fact. A pseudoknot-free oracle provably cannot represent a crossing base pair, so the prior
\emph{verifier's} range is bounded exactly, offline, before any run. Two conventions are weaker than the words
suggest: ``new'' is relative to the agent's \emph{prior self}, never to the base model, and ``solver-free''
means only that no packaged solver sits on the method path, since external folding oracles are queried
throughout.

\textbf{First, how far a single fallible oracle can inflate a capability claim.} An invented, solver-free
operator solves $43/60$ crossing RNA targets under the predictor it optimizes, above a context-free floor of
$0/60$; under three predictors, $1/60$ survives. The discriminating evidence is paired on the \emph{same} $43$
targets. A predictor the operator never saw confirms $2$ of its designs against $26$ for a
minimum-free-energy solver ($p{=}8\!\times\!10^{-7}$). No statistic computed from the system and its own
oracle sees that gap, and a compression gate or a benchmark delta is such a statistic.

\textbf{Second, that agent-written procedures can beat a human-written one under a judge no objective can
flatter, at a fraction of the compute.} Of six frontier models given only primitives, \emph{the two whose
operators ran without timeouts} were replicated at depth. On targets they and our operator both solved under
the same in-loop predicate they carry over at $0.293$ against our $0.095$ ($n{=}951$ paired units,
target-clustered $[{+}0.108,{+}0.297]$, $p{=}5\!\times\!10^{-5}$), while spending $4.6$--$10\times$
\emph{fewer} oracle calls. The claim covers \emph{those two operators}, not agent-written code as a class,
since the other four were excluded by a selection we cannot fully audit. Three rungs: difference under an
outside adjudicator (reached), not bought with compute (reached, in both directions), mechanism identified
and transferable (not reached; seven candidates tested, none moves the statistic).

\textbf{The ceiling on all of it is the panel itself.} Its three predictors share nearest-neighbour
thermodynamic parameters, and two agree at $\kappa{=}0.673$ on our own calibration set, so ``held out'' means
held out of the optimization loop and \emph{not} mechanistically independent. A predictor from a different
model class is the experiment this study most owes. The audit is as unsparing about our own system. Matched
undirected search is an exact zero, a search-free probe puts $84\%$ of our headline effect on targets a random
sequence already solves, and the honest description of our operator is a stochastic search with a better
success predicate and no better guidance. That is the instrument working on its author.
\end{abstract}

\section{Introduction}\label{sec:intro}
Self-improving agents claiming new scientific or design capabilities are proliferating, but the field
adjudicates the claim with soft statistics: a reward-model score, a held-out delta, an MDL gate, a $p$-value.
Such a number rises whether the agent acquired something or merely searched harder, and it rises equally when
the \emph{verifier} changed underneath. Rarely separated: memorized targets, luck, a borrowed external tool,
or an over-optimized fallible oracle. What is missing is a criterion that is two-sided, reporting ``can now''
and ``could not be \emph{certified} before'' as distinct claims, and decidable on at least one side. Which side is which matters. Only the statement about the prior \emph{verifier's} range is exact; the statement
about the prior \emph{policy} is an empirical floor at a finite budget, and we never upgrade the second to the
first.

\paragraph{Why RNA.} In RNA inverse design the target's topology is a position in the Chomsky hierarchy:
nested pairing is context-free, while \emph{selected unbounded families} of crossing (pseudoknotted)
structures are mildly context-sensitive \citep{rivaseddy2000,nebelweinberg,katosekikasami2006}. The
pseudoknot-free folding oracle cannot represent a crossing pair, which turns ``the prior repertoire could not
be certified through its pseudoknot-free regime'' into a decidable statement, not a failed search; the
stronger claim, that the prior policy produces no crossing design under a crossing-aware adjudicator, is a
separate empirical floor ($0/60$).

\emph{The difficulty axis is not ours to choose.} Where a target sits is a property of its topology,
fixed by \citet{rivaseddy2000,nebelweinberg,katosekikasami2006} before this system or this benchmark existed.
So the negative side cannot be moved by selecting a favourable target set, tuning a threshold, or recomputing
the criterion from an improved system's own fit. A description-length or
information-criterion gate is computed from the revised system and the evidence it was fitted on, and has no
such external anchor (\S\ref{sec:related}).

We claim no more, and two concessions follow. The exclusion is \emph{shallow}:
$\operatorname{im}\Ocf\subseteq\NESTED$ can be stated from the oracle's output space alone. And ``crossing''
here is a \emph{structural} predicate on a target, decidable in $O(|P|^2)$; the mildly-context-sensitive reading
attaches to the family $\mathcal L_{\mathrm H}$ and never to a design, so a reader who substitutes
``pseudoknot-free versus pseudoknotted'' throughout loses the family-level statement and nothing else.

\emph{What that family-level statement turned out to be worth} is the reason a $100$-target test set exists.
Pseudobase++ is structurally exhausted at $24$ development-disjoint clusters, and the grid of \S\ref{sec:exp}
is constructible only because $\mathcal L_{\mathrm H}$ is a formally characterized unbounded family whose
members are non-context-free by construction. The hierarchy's dividend is not depth in the exclusion but that
it makes the only unbounded axis in this study constructible and interpretable.

\paragraph{Thesis.} \emph{Under a pseudoknot-aware oracle, a self-improving design agent realizes held-out
crossing targets via a reusable, solver-free procedure, while its prior \CF-verified repertoire is confined by
its pseudoknot-free verifier to nested structures and does not realize the targets empirically ($0/60$). We
audit this as an empirical capability-acquisition event of the oracle-augmented system, reporting which
components it establishes and which it does not; the class-level $\CF\to\MCF$ reading is evidence from an
unbounded crossing family, not a language-class proof, since any finite target set is regular
(\S\ref{sec:formal}). The negative side pairs a decidable verifier-range confinement with a separate empirical
floor; the positive side is model-relative, adjudicated by an oracle panel.}

\paragraph{Which ``new''?} A self-improving agent runs on a base model that has read the literature, so a
capability may be latent even when the agent appears to acquire it. We do not claim novelty relative to the
base model: that notion is neither measurable nor necessary, since landmark results advance a frontier without
proving out-of-corpus originality. We certify an operational ``new'', relative to the agent's prior self,
whose repertoire provably could not be \emph{certified} to realize the target through its pseudoknot-free
regime and does not realize it empirically, with no packaged solver called. We do \emph{not} write
``robustly'': the one genuinely held-out predictor separates weakly, matched undirected search is not
separated, cross-source consistency is $0/60$, execution noise is of the same order as the method effect, and
the evaluation split is not structurally disjoint (Table~\ref{tab:verdict}).

\paragraph{Contributions.}
\begin{enumerate}[leftmargin=1.3em,itemsep=1.5pt]
\item A two-sided criterion for capability acquisition (\S\ref{sec:cert}). It does \emph{not} rule out search;
what it controls for is target memorization and one-off luck, matched-cap undirected search, delegation to a
packaged solver, and mistaking one fallible oracle's verdict for a capability.
\item A formalization of the object a pumping lemma can constrain, the realizable-target language
$\Rze{\Omega}{\pi}$, with a decidable confinement fact over it (Prop.~\ref{prop:confine}) discharged offline
rather than searched (\S\ref{sec:formal}). We \emph{invoke} the MCFG-2 characterization and verify only the
non-context-freeness of the family we construct ourselves, since no citation covers it.
\item An explicit treatment of the audit's asymmetry (\S\ref{sec:cert}, \S\ref{sec:oracle}): the negative side
has an exact verifier-range component and a separate empirical prior-policy floor; the positive side is
heuristic and oracle-dependent, and is adjudicated as such. Two impossibilities are kept apart,
repertoire-confinement and target-undesignability (\S\ref{sec:impossible}).
\item An empirical demonstration that the criterion catches oracle-gaming (\S\ref{sec:exp}): an invented,
solver-free operator scores $43/60$ under a single pseudoknot oracle and $1/60$ panel-unanimous, above a
context-free floor of $0/60$. Re-optimizing for cross-oracle agreement recovers $3$--$8/60$.
\item \textbf{A two-stage attribution gate, and evidence that its first stage is not enough}
(\S\ref{sec:exp}). Capability differences in learned systems are normally attributed by finding a property that differs between
the systems and correlates with the outcome. We ran that step under composition-adjusted controls, then
\emph{intervened} on every property that survived. Four times a statistically supported and mechanistically
plausible attribution was wrong, and a control fixed in advance caught it; the standard procedure alone would
have yielded two independent false mechanisms with clean $p$-values.
\item \textbf{A positive result on the same instrument, on three named rungs} (\S\ref{sec:exp},
\S\ref{sec:disc}). An audit nothing passes and an audit that cannot see produce the same table, so we measure an arm that does
clear the bar. Frozen LLM-written operators carry over at $0.293$ against the hand-built $0.095$ (paired
${+}0.199$, target-clustered $[{+}0.108,{+}0.297]$, $p{=}5\!\times\!10^{-5}$), at fewer oracle calls per
target: \emph{D1 and D2 established, D3 open}. The criterion we position against (\S\ref{sec:related})
reaches neither D1 nor D2 on its own reported evidence, and D3 is reached by neither line.
\end{enumerate}

\noindent Figure~\ref{fig:arch} shows how these compose into one pipeline. One caveat about its last step:
the $\pm$solver ablation cannot decide whether ``the ascent is the agent's own'', since the system leans
heavily on external crossing-aware folding oracles throughout and the ablation speaks only to the declared
solver class (Rem.~\ref{rem:autonomy}).

\begin{figure}[t]\centering
\includegraphics[width=\linewidth]{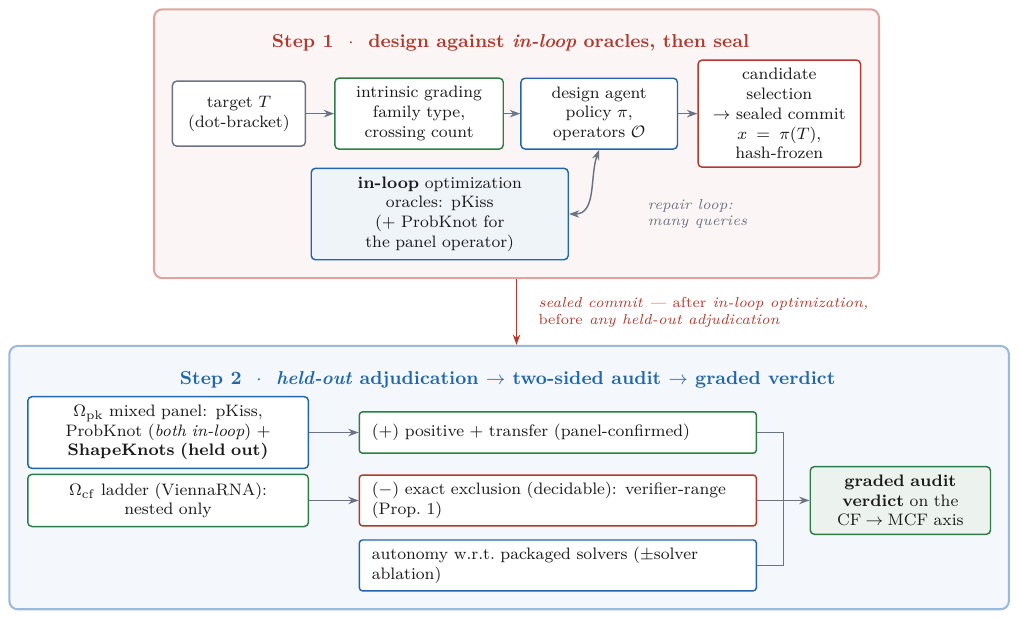}
\caption{\textbf{End-to-end architecture.} A target is graded by family type and crossing-stem count and assigned to a
pool of Table~\ref{tab:pools}. Step~1: the design agent queries crossing-aware oracles (pKiss, and ProbKnot for
the panel operator) inside its repair loop, then commits a candidate hash-frozen. The seal falls \emph{after}
in-loop optimization and \emph{before} any held-out adjudication, so it forecloses post-hoc selection on the
held-out oracle but not overfitting to the in-loop ones. Step~2: the pseudoknot-free ladder $\Ocf$ and the
mixed panel $\Opk$ adjudicate, abstaining on disagreement. The audit reports six findings
(Table~\ref{tab:verdict}), never a single bit. Only the exclusion is exact.}\label{fig:arch}
\end{figure}

\paragraph{Scope: three results, weighted.} Three things to take from this paper, and it is worth saying
plainly which carry how much. \textbf{(1) The instrument, and the $43\!\to\!1$ measurement it produces}
(\S\ref{sec:exp}). On the $60$-target development/evaluation pool a $43/60$ single-oracle rate collapses to
$1/60$ under three predictors, above a $0/60$ context-free floor, with the discriminating evidence paired on a
common $43$ targets. This is what we are most confident in. \textbf{(2) The positive control that turns into a
positive result} (Table~\ref{tab:rungs}). An agent-written procedure beats the human-written one under a
predictor outside every loop, at a fraction of the oracle calls, on a paired target-clustered comparison. We
report it at D1 and D2 and withhold D3. It is not secondary. It is the only evidence here that the panel can
be cleared at all, and therefore the only thing separating a stringent audit from a blind one. \textbf{(3) The
audit's verdict on our own system}, largely negative and reported in full. It separates from the gamer only
narrowly, matched undirected search is an exact zero, and $84\%$ of the headline difference sits where a
random sequence already succeeds. We regard (3) as the instrument working on its author, not as a result
withheld.

Two things are \emph{not} claimed: (i) a robust autonomous ascent by \emph{our} operator, since at $1/60$
panel-unanimous the instrument only exposes an oracle artifact; (ii) any scientific discovery of a new design
principle, by anyone here, which by our own standard requires D3 (Table~\ref{tab:rungs}) and which no arm
reaches.

\paragraph{The nearest criterion, and where the two differ.} The closest programme is categorical
self-revising discovery \citep{selfrevising}, an arXiv preprint at the time of writing, which formalizes
discovery as the residual beyond a functorial
transport of prior artifacts and gates it on an MDL/AIC criterion. We state the comparison here, in the terms
of Table~\ref{tab:rungs}, because it is the reason this paper is shaped as an audit; \S\ref{sec:related}
gives it in full. An information criterion is computed from the revised system and the evidence that system
was fitted on. It therefore has no predictor held outside the loop, which is exactly the quantity D1
requires and exactly the failure mode our $43\!\to\!1$ collapse exhibits, since a system can improve a statistic of
its own fit while a predictor it never optimized sees almost nothing. No compute control is reported, so D2
is not addressed. And by its authors' own statement the residual measures the \emph{additional bits} a
revision needs, not the prior regime's incapacity, so there is no counterpart to the exact exclusion of
Prop.~\ref{prop:confine}. We intend this as a difference in \emph{reported evidence}, not in ambition. A
categorical account of regime change does different work from an audit, and D3 is reached by neither line. The claim we do make is narrow and checkable. On the rungs where evidence can be demanded, this paper
supplies a held-out adjudicator, a compute control, matched baselines, ablations, intervals, and its own two
unestablished components.

\paragraph{Classification rule, and how to count the components.} \emph{Exact}: proved, discharged offline.
\emph{Established}: the effect's $95\%$ interval excludes zero at the deepest resampling level the component's
design contains. \emph{Weak}: directionally consistent, but the interval or the replication does not support
it. \emph{Not established}: the interval spans zero. By that rule this study returns \textbf{one exact, three
established, one weak, one not established}, and every count stated elsewhere in this paper is this one.

Two caveats on applying it. Where a component admits more than one nesting variant we name which is primary
before seeing the answer: for $\Delta_{\mathrm{gamer}}$ the primary estimator draws all
three executions inside each drawn seed, because that is the design that was run, and the
one-execution-per-seed variant is a conservative sensitivity. We hold no timestamped pre-registration, so
``fixed in advance'' is our word and the choice between variants is a degree of freedom a reader cannot audit;
the resolution we would defend is not the choice but the deterministic re-run that removes it, which has no
execution level to nest (Table~\ref{tab:det}). And the count should not be read without the second question
Table~\ref{tab:verdict} answers, flagged \textsc{no counterfactual} and \textsc{weak counterfactual}: whether a
component contrasts against an arm that could have come out otherwise. Two do not. $E_{\mathrm{range}}$ is a
theorem and $g_{\mathrm{aut}}$ records that a solver was never on the method path, so neither can be surprised
by data; $\Delta_{\mathrm{prior}}$ is weak in this sense because the prior arm optimizes a different objective.
Only $\Delta_{\mathrm{gamer}}$, $\Delta_{\mathrm{undirected}}$ and $\Delta_{\mathrm{held\text{-}out}}$ are
counterfactual contrasts, and of those one is narrow, one null and one weak. The rungs of
Table~\ref{tab:rungs} use the same words for a different arm and are not part of this count.

\section{Background and the Chomsky--RNA Correspondence}\label{sec:bg}
\paragraph{Formal languages and the Chomsky hierarchy.} A grammar generates a language of strings; the
Chomsky hierarchy \citep{chomsky1956,hopcroftullman1979} orders grammars by generative power. Context-free
grammars (CFGs) generate exactly the languages recognizable with one stack, of which balanced-bracket (Dyck)
languages are canonical. The pumping and Ogden lemmas \citep{ogden1968,hopcroftullman1979} give necessary
conditions for context-freeness, so a word family that cannot be pumped proves a language is not context-free.
Multiple context-free grammars (MCFGs) \citep{seki1991mcfg} sit just above, deriving tuples of substrings at
several positions, the machinery for crossing dependencies one stack cannot track, with dimension~2
sufficing for the patterns we treat; the canonical witness is the count language
$\{\,a^{n}b^{m}c^{n}d^{m}\,\}$, whose interleaved constraints force two substrings to grow in lockstep at
separated positions \citep{hopcroftullman1979}. \textbf{We invoke this literature rather than extend it}, and
use exactly two facts from it, both established elsewhere: the crossing family we target is non-context-free
\citep{nebelweinberg}, and specific pseudoknot classes sit in low-dimension MCFG subclasses
\citep{katosekikasami2006}. We make no claim that every pseudoknot topology is captured at dimension~2, and
nothing below turns on the MCFG machinery beyond those two citations. Their role is to make the family of
\S\ref{sec:exp} well defined, not to prove anything new.

\paragraph{RNA secondary structure, folding, inverse design.} An RNA is $x\in\{A,C,G,U\}^n$; a
\emph{secondary structure} is a set $P$ of base pairs (each index paired at most once). Pairs
$(i,j),(k,l)$ with $i<k$ \emph{cross} iff $i<k<j<l$; $P$ is \emph{nested} if none cross,
\emph{pseudoknotted} otherwise. A folding oracle maps a sequence to a structure under a thermodynamic model,
from base-pair maximization \citep{nussinov1980} through minimum-free-energy dynamic programming
\citep{zuker1981} and the equilibrium partition function \citep{mccaskill1990} to ViennaRNA
\citep{lorenz2011}. \textbf{We fix \Ocf{} as a named function, not as a software package}: it is
ViennaRNA's minimum-free-energy fold (\texttt{RNA.fold}, Turner parameters), whose dot-bracket output
alphabet is $\{\texttt{(},\texttt{)},\texttt{.}\}$. This matters for the scope of Prop.~\ref{prop:confine} and
we would rather state it here than have it read as an oversight. The same distribution ships
\texttt{RNAPKplex}, which \emph{does} emit crossings; it is never called anywhere on any path in this work,
and if a prior system's verifier were \texttt{RNAPKplex} then $\operatorname{im}\Omega\not\subseteq\NESTED$
and the exclusion would simply not apply to it. That is the correct behaviour of a scoped theorem, not a gap. The negative side is a statement about \emph{a declared verifier}, and an audit that let the verifier be
renegotiated after the fact would establish nothing. The \emph{empirical} prior-policy floor is unaffected
either way, since it is measured on the prior system as it actually ran. So \Ocf{} is pseudoknot-free: its
output space contains only nested structures ($\operatorname{im}\Ocf\subseteq\NESTED$), and it cannot represent a
crossing pair. Inverse design (given $T$, find $x$ with $\mathrm{fold}(x)=T$) is mature, with heuristic
\citep{busch2006inforna}, ensemble-defect \citep{zadeh2011nupack}, learned \citep{runge2019learna},
ensemble-optimization \citep{zhou2023samfeo}, and pseudoknot-capable \citep{wirecki2025desirna}
solvers. Pseudoknot-aware prediction oracles \Opk{}, pKiss \citep{janssen2015pkiss}, ProbKnot
\citep{bellaousov2010probknot}, and ShapeKnots \citep{hajdin2013shapeknots} in the lineage of
\citep{rivas1999pknots}, output crossings but are heuristic, not biophysical ground truth.
Pseudoknot-free folding is polynomial; pseudoknotted folding is NP-complete
\citep{lyngso2000pknot}, and inverse design is NP-hard even in the minimal
base-pair-maximization model \citep{bonnet2020hard}. The sequence$\to$structure map is many-to-one, with large
neutral networks \citep{schuster1994,aguirre2011}; undesignability (no sequence realizes $T$) is provable by
exhibiting, for every candidate, a rival structure at least as favorable \citep{zhou2024rigende,zhou2025motifs}.

\paragraph{The verifier ladder (a reused instrument).} Our companion work \citep{verifiereconomics} turns
``solved'' into a nested family of decidable predicates on one target,
$\match\supseteq\umfe\supseteq\pfe$, under \Ocf, with a separate pseudoknot-aware branch for crossing
targets. We reuse it as the adjudicating instrument and stake an orthogonal thesis (capability-class
acquisition, not verifier over-acceptance). It also measured the pseudoknot verdict's oracle-dependence: the
fraction of state-of-the-art pseudoknot designs judged ``formed'' swings across predictors
(pKiss/ProbKnot/ShapeKnots) over a $[\,15,60\,]\%$ envelope, which we carry into our positive-side design
(\S\ref{sec:oracle}).

\subsection{RNA Topology Is Native to the Chomsky Hierarchy}\label{sec:cf}
Reading biological sequence structure as a formal language is a long tradition \citep{searls1992,searls2002}.
Nested arcs are balanced brackets, context-free and generated by the RNA stochastic context-free grammar
$S\to(\,S\,)\,S\mid\mathord{.}\,S\mid\varepsilon$ \citep{rivaseddy2000,durbin1998}, with SCFGs the standard
nested-RNA model \citep{sakakibara1994,knudsenhein1999,dowelleddy2004} and thermodynamic pseudoknot folders
operating over the mildly context-sensitive class \citep{reedergiegerich2004}. Crossing arcs are provably not
context-free (an Ogden-lemma argument on the interleaved dependencies rules out any CFG \citep{nebelweinberg})
and are captured by an MCFG of dimension $\le 2$; the canonical pseudoknot grammar of \citet{rivaseddy2000} is
exactly such \citep{katosekikasami2006}. The pseudoknot-free/pseudoknot split a design pipeline already treats
as two verifier branches therefore straddles the $\CF\!\to\!\MCF$ boundary \emph{for the families we treat}; we do not claim the whole crossing set $\PK$ falls in one
fixed low-dimension MCFG
class.

\paragraph{Why one stack fails, concretely.} Reading a strand $5'\!\to\!3'$, the nested grammar
$S\to(\,S\,)\,S\mid\mathord{.}\,S\mid\varepsilon$ folds with a single stack: each opening base is pushed and
its partner pops it, and well-nesting guarantees the most-recently-opened pair closes first. A canonical
H-type pseudoknot violates this. In dot-bracket form with two pair classes, $(((([[[[))))]]]]$, the
square-bracket stem opens while the round-bracket pairs are still unclosed and must be remembered past the
closing of the round stem, requiring a second, independent stack. This is the copy-language pattern, not the
Dyck pattern: an Ogden-lemma argument rules out every CFG \citep{nebelweinberg}, whereas an MCFG carrying the
two stems as a coordinated pair of substrings generates it at dimension~2 \citep{katosekikasami2006}. The two
verifier branches a design pipeline already maintains thus straddle a generative-class boundary between the
nested family and the non-CF H-type family, developed formally in \S\ref{sec:formal}.

\paragraph{Generative power is the main line; folding complexity is a distinct axis.} The Chomsky hierarchy
measures generative power (which structures a grammar produces); computational complexity measures the cost of
parsing and folding. Pseudoknotted folding is NP-complete and inverse design is NP-hard
\citep{lyngso2000pknot,bonnet2020hard}, but that hardness is a \emph{distinct} axis, depending on the
allowed pseudoknot topology, energy model, and optimization objective, and is not implied by the generative
class (fixed fan-out MCFGs, for instance, parse in polynomial time). Our sole main-line quantity is the
generative jump $\CF\!\to\!\MCF$; we treat the two axes as related but independent.
\section{The Two-Sided Capability Audit}\label{sec:cert}
Each target receives an intrinsic difficulty grade (crossing-stem count and nested-vs-crossing type), fixed
independently of what the agent solves, which avoids survivorship bias. Crossing targets (members of the MCFG-characterized crossing families) split by
grade into the pools named in Table~\ref{tab:pools}.

\begin{definition}[Audited capability acquisition]\label{def:cert}
\emph{``Capability acquisition'' names the class of claim this instrument grades, not a claim we make.} By our
own standard (Table~\ref{tab:rungs}) no arm in this paper reaches D3, so nothing here is asserted to be an
acquired design principle; what is asserted is a measured difference and the alternatives it excludes.
\emph{The audit returns a vector of six findings, not a bit.} A pass/fail conjunction would be the wrong
output: a single threshold on a positive rate is passed by one lucky target, and encodes neither the
separation from a gaming baseline, nor the separation from matched undirected search, nor spread, nor
uncertainty, nor held-out-oracle generalization. We therefore define the audit's output as the evidence vector
\begin{equation}\label{eq:vector}
  \mathcal A=\bigl(\,E_{\mathrm{range}},\ \Delta_{\mathrm{prior}},\ \Delta_{\mathrm{gamer}},\
  \Delta_{\mathrm{undirected}},\ \Delta_{\mathrm{held\text{-}out}},\ g_{\mathrm{aut}}\,\bigr),
\end{equation}
whose components are of \emph{different epistemic kinds}: one is a theorem, the rest are effect sizes with
intervals. A verdict is a statement about which components are established, never a single bit. The phrase
$\CF\!\to\!\MCF$ names a difficulty axis over target \emph{families}; a positive finding is a per-target
verified event, not a proof that the policy's realizable-target \emph{language} changed Chomsky class. A
finite set of realized targets is regular, so no finite experiment can establish that (\S\ref{sec:formal}). Underlying all four
$\Delta$'s is the held-out positive rate itself. On held-out crossing targets the system must produce designs
a crossing-aware predictor panel confirms fold to the target's crossing pair set, reported with its
uncertainty and its spread across targets, never as a single threshold.

\begin{equation}\label{eq:cert}
\begin{aligned}
E_{\mathrm{range}} &= \bigl[\,\Rze{\Ocf}{\pi_{\mathrm{before}}}\cap\PK=\varnothing\,\bigr]
  &&\text{exact (Prop.~\ref{prop:confine}), discharged before any run}\\
\Delta_{\mathrm{prior}} &= \widehat p^{\mathrm{panel}}_{\mathrm{after}}-\widehat p^{\mathrm{panel}}_{\mathrm{before}}
  &&\text{before/after under one \emph{shared} adjudicator}\\
\Delta_{\mathrm{gamer}} &= \widehat p^{\mathrm{panel}}_{\mathrm{after}}-\widehat p^{\mathrm{panel}}_{\mathrm{gamer}}
  &&\text{vs.\ single-oracle gaming, matched access}\\
\Delta_{\mathrm{undirected}} &= \widehat p^{\mathrm{panel}}_{\mathrm{after}}-\widehat p^{\mathrm{panel}}_{\mathrm{undir}}
  &&\text{vs.\ matched-nominal-cap undirected search}\\
\Delta_{\mathrm{held\text{-}out}} &= r_{\mathrm{ho}}^{\mathrm{after}}-r_{\mathrm{ho}}^{\mathrm{gamer}}
  &&\text{on the one genuinely held-out predictor}\\
g_{\mathrm{aut}} &= \widehat p_{\mathrm{after}}-\widehat p^{-\mathrm{solver}}_{\mathrm{after}}
  &&\text{verified non-use of the solver class }\mathcal T .
\end{aligned}
\end{equation}
Only $E_{\mathrm{range}}$ is decidable and discharged before any run; the five $\Delta$'s are measured,
model-relative, and each is reported with an effect size and an interval. \textbf{A study may establish some
components and fail to establish others, as ours does.} The audit's job is to say which, not to collapse
them. Table~\ref{tab:verdict} is this paper's own filled-in $\mathcal A$, stated up front so no reader has to
reconstruct it from the results section.
\end{definition}

\paragraph{How to read a mostly-negative vector.} It is tempting to read Table~\ref{tab:verdict} as a report
card our system failed. The system audited here was \emph{built to pass this audit}: it optimizes a
two-oracle conjunction because a single oracle is gameable. It still leaves two of six components
unestablished or weak. But our own failure to saturate the instrument says nothing about the
instrument until some arm succeeds. The discriminating power is established by the positive control of
\S\ref{sec:exp}, not by our low score.

\paragraph{What each side controls for, and what none of them does.} The limit first, since it is the most
easily overclaimed. \emph{This audit does not separate capability from search.} The operator we audit
\emph{is} a stochastic search against folding oracles, and our own ablation attributes its gain to
oracle-guided repair (Table~\ref{tab:det}). Nor should the two be opposed. A finite-description search
procedure that reliably realizes held-out family members is a perfectly good candidate capability. What the
audit controls for is narrower, and each component excludes a named alternative: target memorization and
one-off luck (only partly, since our split is disjoint by target identity and not by structure), matched-cap
undirected search, single-oracle gaming, and delegation to a packaged solver. No single component supports the
word \emph{acquisition}, and neither does their conjunction when some are unestablished, so the output is the
vector of Eq.~\eqref{eq:vector} read component-wise. The structure parallels the retrieval/search/discovery
distinction of \citet{selfrevising}, with one amendment. We place our result at \emph{search}, and the audit's
job is to \emph{ask} whether that search is transferable, separated from undirected search and undelegated,
not to certify that it is. On this study only the last comes back positive, and only as verified non-use.

\paragraph{Terminology.} \textbf{Audit} names the whole protocol of Def.~\ref{def:cert}, never a single result.
Within it, \textbf{exact exclusion} is the verifier-range theorem (Prop.~\ref{prop:confine}), the only exact
component; \textbf{empirical gain} is the before/after difference under a shared adjudicator; \textbf{verified
non-use} is the solver-removal result, relative to a declared tool class and an audit of the call graph rather
than a counterfactual (Rem.~\ref{rem:autonomy}); \textbf{cross-predictor consistency} is what the panel
measures, never physical truth; and \textbf{scientific confirmation}, the wet-lab rung, is not attempted.
\textbf{We reserve ``certified'' for the exact exclusion alone}: a panel verdict is \emph{adjudicated} or
\emph{mixed-panel-consistent}, and no design here is called certified. For the same reason we write
\emph{mixed-panel-consistent crossing} and not ``panel-robust ascent'', since two of three predictors are inside
the operator's objective and the third has $\sim\!1\%$ native recall.

\paragraph{Three metrics, named and kept apart.} A single ``panel'' number conflates quantities of different
evidential value, so we name all three and report them separately throughout:
\begin{equation}\label{eq:metricnames}
\begin{aligned}
  r_{\mathrm{in}}  &:\ \text{pKiss}\wedge\text{ProbKnot \ (both \emph{in} the objective)}
                   &&\Rightarrow\ \text{in-objective consistency}\\
  r_{\mathrm{ho}}  &:\ \text{ShapeKnots alone \ (the one oracle held out)}
                   &&\Rightarrow\ \text{held-out-oracle generalization}\\
  r_{\mathrm{mix}} &:\ \text{all three \ (two in-loop, one held out)}
                   &&\Rightarrow\ \text{mixed in-loop/held-out}.
\end{aligned}
\end{equation}
For the panel operator, $r_{\mathrm{in}}$ scores its own objective and so cannot evidence generalization;
$r_{\mathrm{ho}}$ is the only genuine held-out-oracle rate and is \emph{weak} (\S\ref{sec:exp});
$r_{\mathrm{mix}}$, the rate we report as primary because it is the most stringent, is a
\emph{mixture} of two in-loop conjuncts plus one held-out, and must not be read as three-way independent
confirmation. Every headline number below is labelled with which of the three it is. Relatedly, \textbf{``held-out'' in this paper
means held out of the optimization \emph{loop}} (the ShapeKnots adjudicator), and never ``held out of
development''. The primary $n{=}60$ pool is a \emph{development/evaluation} pool: the operator and every
hyperparameter were fixed on it. Pools are named in Table~\ref{tab:pools} and we use those names.

\begingroup
\centering\footnotesize
\setlength{\abovecaptionskip}{4pt}
\setlength{\tabcolsep}{5pt}\renewcommand{\arraystretch}{1.2}
\begin{xltabular}{\linewidth}{@{}>{\raggedright\arraybackslash}p{2.5cm}>{\raggedright\arraybackslash}p{2.0cm}L L@{}}
\caption{\textbf{The audit's output for this study: the evidence vector $\mathcal A$ of Eq.~\eqref{eq:vector},
filled in.} All rates are $r_{\mathrm{mix}}$ (Eq.~\eqref{eq:metricnames}) unless stated. This is the verdict
on \emph{our own system}: \emph{partial audit success: diagnostic crossing evidence, not robust capability
acquisition}. Table~\ref{tab:rungs} reports the arm that does clear the bar, which this vector has no row for.
Each row is a pointer, not an argument: the classification rule is in \S\ref{sec:cert} and every number is
derived where it is cited.}\label{tab:verdict}\\
\toprule
\textbf{component} & \textbf{verdict} & \textbf{basis} & \textbf{what it does \emph{not} show}\\
\midrule
\endfirsthead
\multicolumn{4}{@{}l}{\footnotesize Table~\ref{tab:verdict}, continued.}\\
\toprule
\textbf{component} & \textbf{verdict} & \textbf{basis} & \textbf{what it does \emph{not} show}\\
\midrule
\endhead
$E_{\mathrm{range}}$ \newline exact exclusion & \chip{vExact}{exact} &
Prop.~\ref{prop:confine}, discharged offline before any run &
Bounds the prior \emph{verifier's} range, not the prior policy's capacity. \textsc{No counterfactual}: a
theorem cannot be surprised by data\\
\addlinespace[2pt]
$\Delta_{\mathrm{prior}}$ \newline prior floor & \chip{vEst}{established} &
Prior policy $0/60$ at \emph{every} one of $24$ executions; after, $1$--$9/60$ (mean $3.29$) under the shared
panel &
The quoted ${+}0.017$--${+}0.15$ is the \textbf{range of point estimates}, not a $95\%$ interval. So this row
rests on the floor being an exact zero, not on the interval rule. \textsc{Weak counterfactual}: the prior arm
optimizes a different objective\\
\addlinespace[2pt]
$\Delta_{\mathrm{gamer}}$ \newline vs.\ single-oracle gaming & \chip{vNarrow}{narrow} &
\emph{Established, narrowly.} ${+}0.067$ $[{+}0.023,{+}0.120]$ bit-reproducibly, gamer $0/60$ at all five seeds
(Table~\ref{tab:det}); operator strictly higher in $14/15$ wall-clock executions and $15/15$ on the
structurally disjoint pool (\S\ref{sec:exp}) &
A stable \emph{magnitude}. The estimate fell ${+}0.080\!\to\!{+}0.056\!\to\!{+}0.048$ as replication was
added, and determinism removes the execution level, not the target set. A search-free probe puts $84\%$ of it
where a random compatible sequence already succeeds, leaving ${+}0.016$. Between-system; matched on oracle
\emph{access}, not \emph{use} ($2.3\times$)\\
\addlinespace[2pt]
$\Delta_{\mathrm{undirected}}$ \newline vs.\ matched undirected search & \chip{vNone}{not est.} &
\emph{Not established.} Exactly ${+}0.0000$ on $300$ paired bit-reproducible cells, $[{-}0.060,{+}0.050]$ (Table~\ref{tab:det}) &
An underpowered null. Execution noise can no longer be blamed, so this is \emph{measured absent} at this pool
size. Deleting the objective does not lower $r_{\mathrm{mix}}$ at all; only deleting the repair loop does.
Matched on allowance, not on realized use\\
\addlinespace[2pt]
$\Delta_{\mathrm{held\text{-}out}}$ \newline held-out oracle & \chip{vWeak}{weak} &
$r_{\mathrm{ho}}$ higher at every seed but significant in $1$ of $3$ (\S\ref{sec:exp}) &
A small \emph{absolute} signal: against its own ceiling the judge confirms our designs at $8.0\%$ where it
recovers only $11.3\%$ of natives, against the gamer's $3.7\%$ (App.~\ref{app:adjud}). Run in an unvalidated
de-novo mode; the comparison is unstratified\\
\addlinespace[2pt]
$g_{\mathrm{aut}}$ \newline verified non-use of packaged solvers & \chip{vEst}{established} &
No packaged inverse-design solver is ever called &
Anything counterfactual: the solver was never on the method path, so removing it changes nothing. Solver-free
is \emph{not} oracle-free, and not evidence that the gain is the system's own\\
\bottomrule
\end{xltabular}
\par\endgroup
\medskip

\begingroup
\centering\footnotesize
\setlength{\abovecaptionskip}{4pt}
\setlength{\tabcolsep}{5pt}\renewcommand{\arraystretch}{1.2}
\begin{xltabular}{\linewidth}{@{}>{\raggedright\arraybackslash}p{2.5cm}>{\raggedright\arraybackslash}p{2.0cm}L L@{}}
\caption{\textbf{The instrument's other output, which the vector above cannot carry.}
Table~\ref{tab:verdict} is the audit's verdict on \emph{our} system, so by construction it has no row for the
arm that does clear the bar. That arm is the frozen LLM-written operators of App.~\ref{app:invent}, which
optimize the same pKiss$\,\wedge\,$ProbKnot predicate and are scored by the ShapeKnots predictor held out of
every arm's loop. We report them on the three independently reachable rungs of \S\ref{sec:disc}. It is
simultaneously the positive control that shows the panel is not merely insensitive and the sharpest bound on
our own result. Quantifier: these are the $2$ of $6$ invented operators with no runtime failures, so the
claim is about \emph{some} agent-written operators, not the class.}\label{tab:rungs}\\
\toprule
\textbf{rung} & \textbf{verdict} & \textbf{basis} & \textbf{what it does \emph{not} show}\\
\midrule
\endfirsthead
\multicolumn{4}{@{}l}{\footnotesize Table~\ref{tab:rungs}, continued.}\\
\toprule
\textbf{rung} & \textbf{verdict} & \textbf{basis} & \textbf{what it does \emph{not} show}\\
\midrule
\endhead
(D1) \newline difference under an adjudicator no objective can flatter & \chip{vEst}{established} &
Paired and difficulty-matched on the $951$ units \emph{both} arms solved under the shared predicate:
$0.293$ vs.\ $0.095$, difference ${+}0.199$, target-clustered $[{+}0.108,{+}0.297]$ over $42$ clusters,
cluster-permutation $p{=}5\!\times\!10^{-5}$. Re-derived per design through a second code path with zero
disagreements &
Correctness, and not a within-system contrast. Carry-over measures agreement with a partly correlated panel
(\S\ref{sec:oracle}). We quote the clustered pair, not the exact McNemar $p{=}5\!\times\!10^{-37}$ the same
counts give, since the $951$ units come from $60$ targets. One pool; the model-level selection above\\
\addlinespace[2pt]
(D2) \newline not bought with compute & \chip{vEst}{established} &
Measured in both directions. Tripling the pKiss allowance ($82.2\!\to\!196.2$ calls/target) moves carry-over
$0.108\!\to\!0.108$. And the agents' \emph{realized} use is $\approx\!28$ and $\approx\!13$ calls/target
against this operator's $128.0$; an arm held to that volume ($26.0$) carries over at $0.107$ where the agent
reaches $0.367$ &
That no resource explains it. Only that oracle-call volume does not, in either direction. Resources we did
not count remain open\\
\addlinespace[2pt]
(D3) \newline mechanism identified and transferable & \chip{vNone}{not est.} &
\emph{Not established, informatively.} \textbf{Seven} mechanisms transplanted one at a time,
deterministically at $5$ seeds: four legible in the source, a compute control in each direction, and \emph{two}
properties that differ between the arms at matched composition and predict or explain carry-over
observationally. Every interval contains the baseline; none reaches the agents' lower bound of $0.275$. The two
observational leads are the informative part: each would have been reported as the mechanism by an analysis
that stopped before intervening (\S\ref{sec:exp}) &
What the difference \emph{is}. We withhold the word ``discovery'' until this rung is reached, and note in
\S\ref{sec:related} that the criterion we position against reaches neither D1 nor D2\\
\bottomrule
\end{xltabular}
\par\endgroup
\medskip

\paragraph{The audit is asymmetric.} The negative side has two components of different epistemic status:
an \emph{exact}, decidable verifier-range bound (the representational limit of \Ocf{}, Prop.~\ref{prop:confine})
and a \emph{separate}, empirical prior-policy floor ($0/60$ under the shared crossing-aware panel). Only the
first is a theorem. The positive side is also not exact: \Opk{} is a
heuristic, model-relative predictor whose verdict is oracle-dependent (\S\ref{sec:bg},\ref{sec:oracle}). We
therefore claim a decidable negative side and a model-relative, panel-adjudicated positive side, not a
uniformly ``exact'' audit, and report the positive rate with its cross-oracle envelope
(\S\ref{sec:oracle}).

\paragraph{Sealed-commitment protocol.} We distinguish three oracle roles: the \emph{optimization} oracles the
operator queries inside its repair loop (pKiss, and ProbKnot for the panel operator), the \emph{selection} of a
final candidate, and the \emph{held-out adjudicator} (chiefly ShapeKnots), held out of every loop. For each
target the agent commits a hash-frozen prediction, either a candidate sequence or an explicit abstention claim
(\S\ref{sec:impossible}), before the held-out adjudicator is run, so the prediction is risky against an oracle
it never optimized and post-hoc selection there is foreclosed. Sealing does \emph{not} prevent overfitting to
the in-loop oracles; our $43/60\!\to\!1/60$ collapse is exactly such overfitting. \emph{So what does sealing buy
if the dominant failure is one it does not prevent?} The ability to see that failure. Without a seal an arm may
select among candidates using the held-out predictor, and any collapse it would have shown is selected away
before adjudication, making $43\!\to\!1$ unobservable in principle rather than merely unobserved. Sealing does not stop in-loop overfitting. It makes in-loop overfitting \emph{measurable}. This operationalizes a Lakatosian
risky prediction \citep{popper1959,lakatos1978}, the design-time analogue of pre-registration and of agentic
falsification loops \citep{huang2025popper,liu2024aigs}, but with an exact refuter, not a statistical one.

\paragraph{The certification procedure and its soundness.} Auditing an ascent requires each component to be computed by a stated procedure, not judged. $E_{\mathrm{range}}$ is discharged offline by
Prop.~\ref{prop:confine}: no search, no budget, no seed. Each $\Delta$ is a paired difference between two arms
scored by the \emph{same} adjudicator on the \emph{same} targets inside the \emph{same} execution, so
common-mode noise cancels; the pairing, resampling unit and interval type are fixed in App.~\ref{app:repro}.
$g_{\mathrm{aut}}$ is a $\pm$solver ablation over the declared tool class. Soundness is conditional on the
panel: a panel that errs systematically moves every positive component with it, so it abstains on disagreement, why one predictor is kept out of every loop, and why \S\ref{sec:exp} reports a positive control.
No procedure here converts a model-relative verdict into a physical one.

\noindent The verdict is a demarcation, not a probability, and it is component-wise:
$E_{\mathrm{range}}$ is a theorem (Prop.~\ref{prop:confine}), whereas the five $\Delta$'s are model-relative
measurements whose uncertainty we quantify (\S\ref{sec:exp}, Tables~\ref{tab:sep},~\ref{tab:det}) and two of
which this study does not establish (Table~\ref{tab:verdict}).

\begin{table}[t]\centering\small
\setlength{\tabcolsep}{4.5pt}\renewcommand{\arraystretch}{1.15}
\caption{\textbf{$r_{\mathrm{mix}}$ separated from single-oracle gaming}, matched design ($n{=}60$
development/evaluation pool; panel-unanimous $=$ pKiss\,$\wedge$\,ProbKnot\,$\wedge$\,ShapeKnots, all exact).
Gamer and panel operator run on the \emph{same} five seeds and targets. $b$ solved only by the panel operator,
$c$ only by the gamer; $\widehat\Delta_s$ is the per-seed paired risk difference. Note $c{=}0$ at every seed.
Single $60$-target seeds are underpowered, and this ${+}0.080$ is the \emph{first} of four estimates of
$\Delta_{\mathrm{gamer}}$; \S\ref{sec:exp} gives the sequence and what it does and does not settle.
(\texttt{stats\_separation.py}, \texttt{large\_pool\_stats.py}.)}\label{tab:sep}
\begin{tabularx}{\linewidth}{@{}Xcccc@{}}
\toprule
\textbf{seed} & \textbf{gamer $r_{\mathrm{mix}}$/60} & \textbf{panel $r_{\mathrm{mix}}$/60} & \textbf{paired McNemar ($b,c,p$)} & $\widehat\Delta_s$\\
\midrule
seed 0 & $1$ & $8$ & $b{=}7,c{=}0,p{=}0.016$ & $+0.117$\\
seed 1 & $0$ & $3$ & $b{=}3,c{=}0,p{=}0.25$ & $+0.050$\\
seed 2 & $0$ & $5$ & $b{=}5,c{=}0,p{=}0.0625$ & $+0.083$\\
seed 3 & $0$ & $5$ & $b{=}5,c{=}0,p{=}0.0625$ & $+0.083$\\
seed 4 & $0$ & $4$ & $b{=}4,c{=}0,p{=}0.125$ & $+0.067$\\
\midrule
pooled & $1/300$ & $25/300$ & \multicolumn{2}{l}{$\Delta{=}{+}.080$; CI $[{+}.040,{+}.127]$ / two-way $[{+}.030,{+}.140]$}\\
\bottomrule
\end{tabularx}
\end{table}

\subsection{The Positive Side Is Model-Relative}\label{sec:oracle}
Only the negative side's verifier-range bound is exact (Prop.~\ref{prop:confine}); its prior-policy floor is empirical, and the positive side is neither exact nor decidable. A ``formed pseudoknot'' verdict comes from a heuristic predictor
over a restricted pseudoknot class, not from biophysical ground truth, and it is oracle-dependent. Our
companion measured the fraction of state-of-the-art pseudoknot designs judged to form their target swinging
across a $[\,15,60\,]\%$ envelope over pKiss, ProbKnot and ShapeKnots, with only fair pairwise agreement
\citep{verifiereconomics}. That disagreement is mechanistic and not noise, since the three cover different
pseudoknot subclasses and optimize different objectives. pKiss enumerates H-type and kissing-hairpin motifs
under a heuristic folding model \citep{janssen2015pkiss}, ProbKnot assembles maximum-expected-accuracy pairs
from a partition function \citep{bellaousov2010probknot}, and ShapeKnots folds under experimental probing
constraints \citep{hajdin2013shapeknots}. A single-predictor rate is therefore class-biased, and a rate under
one lenient predictor would be an oracle artifact. Abstaining when the panel disagrees declines to convert a
modeling artifact into a claimed capability. We call the predictors \emph{distinct} and not statistically
independent, since all three are computational secondary-structure models sharing thermodynamic and
structural assumptions, so their errors can correlate. Panel-unanimity is cross-predictor consistency under
the declared computational panel, not independent physical confirmation.

\paragraph{How many independent voices does the panel have? (Measured, not argued.)} We measure pairwise
agreement on the $80$ native bpRNA pseudoknots of our calibration set, where each predictor's call is whether
folding a native's \emph{own} sequence recovers its known crossing (\texttt{panel\_agreement.py}). We use Cohen's $\kappa$, not raw agreement, because at these marginal rates (pKiss $20\%$, ProbKnot $4\%$,
ShapeKnots $11\%$) two predictors that mostly say ``no'' agree $\sim\!84\%$ of the time by construction. The
panel is \textbf{not three exchangeable voices}: pKiss and ShapeKnots agree at $\kappa=0.673$ $[0.422,0.875]$,
an interval clear of zero, so they behave as a correlated block, while the two pairings involving ProbKnot are
$\kappa=0.270$ $[0.000,0.524]$ and $\kappa=0.117$ $[-0.074,0.421]$. We resist reading this as ``two voices, not
three'': ProbKnot recovers only $3$ of $80$ natives, so every $\kappa$ involving it is poorly determined and
whether it is a genuinely independent third voice cannot be settled at $n{=}80$. The defensible statement is
weaker and still consequential: at least two of three members are substantially correlated, so unanimity buys
less independent confirmation than its arity suggests. This also gives the low $r_{\mathrm{mix}}$ a mechanism
instead of leaving it a bare number, since the three-way conjunction is bound largely by ProbKnot, which is
exactly where the gamer's designs die ($2/43$, Table~\ref{tab:contrast}).

\emph{Does that correlation undercut D1?} Not the comparison, and the reason is structural. Both arms in the D1
comparison optimize the \emph{same} predicate and the comparison is conditioned on both having satisfied it on
the same target, so whatever agreement the correlation hands out is handed to both arms equally and cancels in
the paired difference. What it does bound is the \emph{absolute} reading: carry-over measures agreement with a
partly correlated panel, not correctness, so $0.293$ is not an estimate of how often those designs really fold
as intended. The same distinction disposes of the mirror worry about $\Delta_{\mathrm{gamer}}$, where the two
arms optimize \emph{different} predicates so the correlation does not cancel, which is one more reason we treat
that component as narrow and this one as established.

\emph{Then why not make the least-correlated pair the primary metric?} The answer is structural, not a
preference. ProbKnot is the member least correlated with the others, so pKiss$\,\wedge\,$ProbKnot is indeed
the most mechanistically diverse pairing available. But that conjunction \emph{is} $r_{\mathrm{in}}$, the
panel operator's own success predicate, so for that arm it cannot serve as a held-out measure at all; scoring
an arm on its own objective is the failure this whole paper is about. Where ProbKnot \emph{is} held out we
already use it as primary, and it carries the paper's sharpest single result: the gamer optimizes pKiss alone,
so its $2/43$ against the minimum-free-energy solver's $26/43$ is measured on exactly that least-correlated
predictor (Table~\ref{tab:contrast}). The pairing the question asks for is therefore used wherever it is
available and unavailable for the one arm whose objective it is. What no rearrangement of three thermodynamic
predictors fixes is the underlying problem, which is why owed experiment~(3) asks for a predictor from a
different model family rather than a different conjunction of these.

We adjudicate the positive side accordingly. A held-out crossing target counts as realized only under an
explicit rule, and we report all three: (i) \emph{panel-unanimous} (all three predictors confirm the exact
crossing pair set) as the primary rate; (ii) the reference (pKiss) rate; and (iii) the cross-predictor
envelope $[\min,\max]$. Targets on which predictors disagree are abstained rather than counted either way.
The audit is thus asymmetric by construction: a negative side exact and decidable on its verifier-range component (plus an empirical floor) and a model-relative,
panel-adjudicated, abstaining positive side.

\section{Formalization: the Realizable-Target Language}\label{sec:formal}
A pumping lemma constrains languages of structures; an agent manipulates sequences. The object the lemma acts
on is therefore not the agent's operators but the set of target structures the agent realizes under an
oracle. This fixes a category error in a naive ``operators place pairs'' account: an operator edits a
sequence, and whether the result is nested or crossing is a property of the fold, decided by the oracle, not
of the operator.

\paragraph{Policies, oracles, and the realizable-target language.}
\begin{definition}[Structures, oracles, policies]\label{def:setup}
Fix $\Sigma=\{\mathrm A,\mathrm C,\mathrm G,\mathrm U\}$ and let $\mathcal T=\bigcup_n\mathcal T_n$ be the
dot-bracket structures, partitioned into the \emph{nested} language $\NESTED$ (no two base pairs cross) and
the \emph{crossing} language $\PK=\mathcal T\setminus\NESTED$. A \emph{folding oracle} is a length-preserving
map $\Omega:\Sigma^\ast\!\to\mathcal T$; a \emph{design policy} is a length-preserving map
$\pi:\mathcal T\!\to\Sigma^\ast$, realized by a search over a repertoire of reusable components $\mathcal O$.
The \emph{realizable-target language} of $\pi$ under $\Omega$ and the \emph{designable set} of $\Omega$ are
\begin{equation}\label{eq:realizable}
  \Rze{\Omega}{\pi}=\{\,T\in\mathcal T:\Omega(\pi(T))=T\,\},\qquad
  \mathcal D_\Omega=\operatorname{im}\Omega=\{\,T:\exists\,x\in\Sigma^\ast,\ \Omega(x)=T\,\}.
\end{equation}
$\Rze{\Omega}{\pi}$ is a language over dot-bracket strings, the object a Chomsky-class statement must range
over, since a pumping lemma constrains languages of \emph{structures}, not the sequence edits an operator
performs.
\end{definition}

\begin{lemma}[Realizability is capped by the oracle range]\label{lem:bound}
For every policy $\pi$ and oracle $\Omega$, $\ \Rze{\Omega}{\pi}\subseteq\mathcal D_\Omega=\operatorname{im}\Omega$,
independently of the policy's search budget or repertoire.
\end{lemma}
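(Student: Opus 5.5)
The argument is a direct unwinding of Definition~\ref{def:setup}, with no appeal to grammar theory or to the pumping lemma. Take an arbitrary $T\in\Rze{\Omega}{\pi}$. By Eq.~\eqref{eq:realizable}, membership means exactly $\Omega(\pi(T))=T$.

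Set $x:=\pi(T)$. Because $\pi$ is a map $\mathcal T\to\Sigma^\ast$, $x$ is a well-defined element of $\Sigma^\ast$, and $\Omega(x)=T$. This $x$ is therefore the existential witness that the definition of $\mathcal D_\Omega$ requires, so $T\in\operatorname{im}\Omega=\mathcal D_\Omega$. Since $T$ was arbitrary, the inclusion follows.

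The independence clause comes from a second observation. The argument never used how $\pi$ is produced: it did not use the search budget, the repertoire $\mathcal O$, seeds, or stochasticity. It used only that $\pi(T)$ is some sequence. Every policy, however it is realized, yields such a sequence, so the bound is uniform over policies.

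There is one point of care, and it is the only place the proof could go wrong. If the policy is stochastic, or may abstain, then $\pi$ is not literally a function into $\Sigma^\ast$. I would handle this in two parts.
\begin{itemize}
\item For a stochastic policy, I would read $\pi(T)$ as the committed (sealed) output sequence of a given run, so the argument applies run by run. Equivalently, one can define realizability via "some run outputs $x$ with $\Omega(x)=T$"; the same witness argument then goes through unchanged.
\item For an abstention, no sequence is output, so the target is simply not in $\Rze{\Omega}{\pi}$. The inclusion is preserved trivially.
\end{itemize}

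Length preservation plays no role beyond guaranteeing that $\Omega(x)$ and $T$ live in the same $\mathcal T_n$.
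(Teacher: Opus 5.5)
Your proof is correct and matches the paper's: both use $x=\pi(T)$ as the witness that $T\in\operatorname{im}\Omega$, and both note that the argument never uses how $\pi$ is produced, which gives the budget- and repertoire-independence. Your remarks on stochastic and abstaining policies go beyond the paper's formal setting, where $\pi$ is a deterministic map $\mathcal T\to\Sigma^\ast$, but they are sound and do not affect the inclusion.
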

\begin{proof}
If $T\in\Rze{\Omega}{\pi}$ then $T=\Omega(\pi(T))$, so $T$ is in the image of $\Omega$; i.e.\
$T\in\operatorname{im}\Omega=\mathcal D_\Omega$. The bound quantifies over \emph{all} $\pi$, so no amount of
search enlarges $\Rze{\Omega}{\pi}$ beyond $\operatorname{im}\Omega$.
\end{proof}
\noindent Lemma~\ref{lem:bound} is the engine of the negative side: it makes confinement a property of the
oracle's representation, not of any operator, and reduces Prop.~\ref{prop:confine} to computing
$\operatorname{im}\Ocf$.

\paragraph{Where this object sits in designability theory.} $\Rze{\Omega}{\pi}$ is bounded above by the
designable structures under $\Omega$ (those admitting some sequence that folds to them), and inherits the
non-uniformity of the sequence-to-structure map: designable structures form vast neutral networks, while
many structures are realized by no sequence \citep{schuster1994,aguirre2011}. Its extreme is undesignability,
where no sequence realizes $T$ under the energy criterion, provable by construction \citep{zhou2024rigende}
and the complement of the designable set. Hence a protocol consequence (\S\ref{sec:exp}): a held-out
crossing pool must be pre-screened for designability, so an intrinsically unrealizable target is scored as an
impossibility (\S\ref{sec:impossible}), not a capability failure. The realizable-target language is thus the
intersection of what is designable in principle and what $\pi$ actually finds; an observed crossing gain concerns
the second factor, over targets the first admits (Fig.~\ref{fig:lang}).

\begin{figure}[t]\centering
\includegraphics[width=0.9\linewidth]{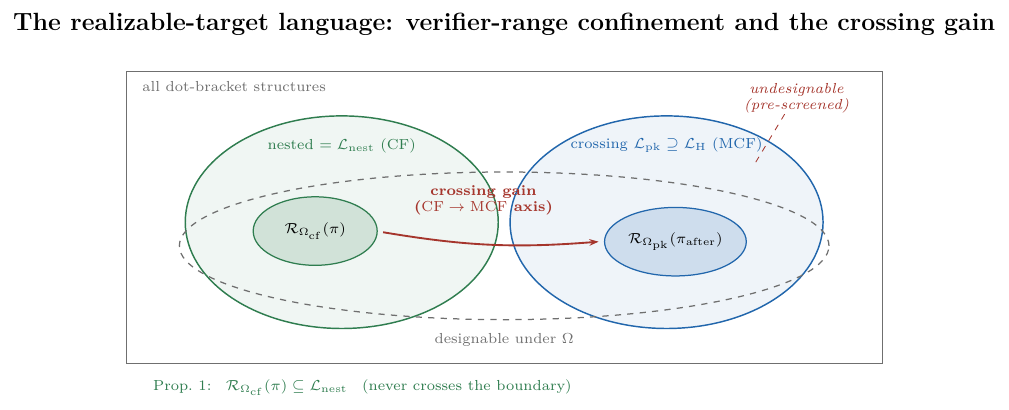}
\caption{\textbf{The realizable-target language, confinement, and the ascent.} Nested structures form a
context-free region $\NESTED$ and crossing structures a disjoint region $\PK$ (whose H-type sub-family
$\mathcal L_{\mathrm H}$ is non-\CF, strictly higher); the designable set (dashed) is the subset realized by
some sequence. Under the pseudoknot-free oracle, the
realizable-target language $\Rze{\Ocf}{\pi}$ is confined to $\NESTED$ (Prop.~\ref{prop:confine}) and cannot
enter $\PK$; an empirically observed crossing-family gain moves the measured $\Rze{\Opk}{\pi_{\mathrm{after}}}$ into the designable part of $\PK$,
while undesignable crossing targets are pre-screened out.}\label{fig:lang}
\end{figure}

\begin{proposition}[Decidable \CF-confinement]\label{prop:confine}
The pseudoknot-free oracle satisfies $\operatorname{im}\Ocf\subseteq\NESTED$ (every single-structure output it
returns is nested, admitting no crossing pair; the image is in fact the strictly smaller \emph{designable}-nested subset, but $\subseteq$ is
all the negative side needs). Hence, by Lemma~\ref{lem:bound}, $\Rze{\Ocf}{\pi}\subseteq\NESTED$ for \emph{every} policy $\pi$, and
for every crossing target $T\in\PK$ we have $T\notin\Rze{\Ocf}{\pi}$. This non-membership is
\emph{decidable} in $O(|P|^{2})$ (test whether any two pairs of $T$ cross; App.~\ref{app:proofs}) and is
independent of the policy's search budget.
\end{proposition}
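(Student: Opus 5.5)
The plan has four steps, carried out in this order.

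\textbf{Step 1: the image claim.} I would prove $\operatorname{im}\Ocf\subseteq\NESTED$ from the output alphabet fixed in \S\ref{sec:bg}. There \Ocf{} is declared as a named function whose output is a dot-bracket string over $\{\texttt{(},\texttt{)},\texttt{.}\}$. The pair set $P$ of such a string is defined by the matching brackets of a balanced single-type word, so position $j$ closes the most recently opened unmatched $i$. I would show that two pairs $(i,j),(k,l)$ with $i<k<j<l$ cannot arise. When $j$ is read, $k$ was opened after $i$. Since $k$ is closed only at $l>j$, it is still unmatched at time $j$. So $j$ would match $k$ or something later, never $i$. This is a contradiction.

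Equivalently, I would use the grammar $S\to(\,S\,)\,S\mid\mathord{.}\,S\mid\varepsilon$ of \S\ref{sec:cf}. By induction on derivations, every pair set it parses is non-crossing. Hence every output of \Ocf{} lies in $\NESTED$. I would only remark on the parenthetical strictness (designable-nested is a proper subset), citing undesignable nested structures \citep{zhou2024rigende}, since the negative side needs only $\subseteq$.

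\textbf{Step 2: lift to all policies.} I would invoke Lemma~\ref{lem:bound} with $\Omega=\Ocf$. This gives $\Rze{\Ocf}{\pi}\subseteq\operatorname{im}\Ocf\subseteq\NESTED$ for every $\pi$. Since $\PK=\mathcal T\setminus\NESTED$ by Definition~\ref{def:setup}, any $T\in\PK$ is outside $\NESTED$ and hence outside $\Rze{\Ocf}{\pi}$. The budget independence comes directly from the lemma, which quantifies over all $\pi$ and never refers to the search.

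\textbf{Step 3: decidability.} Non-membership reduces to deciding $T\in\PK$. I would give the obvious algorithm: enumerate all unordered pairs of base pairs in $P$ and test $i<k<j<l$ after ordering each pair by left endpoint. This takes $\binom{|P|}{2}=O(|P|^2)$ constant-time comparisons. If some pair crosses, $T\in\PK$, and Step 2 certifies $T\notin\Rze{\Ocf}{\pi}$ for all $\pi$. If none crosses, $T\in\NESTED$ and the proposition says nothing about $T$. (A linear stack scan would also work, but the quadratic bound is what is claimed.)

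\textbf{Main obstacle.} The main difficulty is not mathematical but one of scoping Step 1. The claim holds for \Ocf{} as the declared function with bracket-only output alphabet. It would fail for \texttt{RNAPKplex}, whose alphabet includes extra bracket types. I would therefore state explicitly that the proof uses only the single-bracket-type output format. I would also make sure the parse from string to $P$ is the standard stack matching, so that no implementation detail of \texttt{RNA.fold} enters the argument.
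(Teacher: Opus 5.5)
Your proposal is correct and follows essentially the same route as the paper's proof in App.~\ref{app:proofs}. Every \Ocf{} output is nested because its representation cannot encode a crossing pair; this lifts to every policy $\pi$ via Lemma~\ref{lem:bound} and $\PK=\mathcal T\setminus\NESTED$; and the $O(|P|^2)$ pairwise crossing test is the offline decision procedure. Your stack-matching argument makes explicit the one clause the paper asserts without proof (``its representation admits no crossing pair''), and your \texttt{RNAPKplex} scoping caveat is the same one the paper makes in \S\ref{sec:bg}.
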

\noindent The proof (App.~\ref{app:proofs}) is immediate from the oracle's output space: the negative side
of Def.~\ref{def:cert} is a theorem discharged before any run, not a failed search.

\paragraph{What the Chomsky framing buys, and what it does not.} It buys three things: an
\emph{externally fixed} difficulty axis that we did not choose to flatter the result, a decidable negative
side, and an unbounded family over which extrapolation is meaningful. It does \emph{not} buy four others, and
we name them because each has been read into work of this kind. (i) It does not show the agent's policy
changed Chomsky class: any finite set of realized designs is regular, so no finite experiment can. (ii) It
does not make the positive side exact, since crossing realization is adjudicated by heuristic predictors, and only
the exclusion is a theorem. (iii) It does not license ``the agent learned a grammar'': we measure realized
targets, not induced rules. (iv) It does not make the crossing-pair count a grammar dimension; the count is a
structural covariate that happens to track difficulty. The framing is a source of exactness on one side and a
vocabulary on the other, and we keep those roles apart throughout.

\paragraph{The boundary is a genuine class boundary (invoked for the general results, verified only for our
own family).} $\NESTED$ is context-free
\citep{rivaseddy2000,durbin1998}. We use $\PK=\mathcal T\setminus\NESTED$ for \emph{all} crossing structures
(a structural set, used as such on the negative side). The formal-language claim is scoped to the specific
unbounded family: the H-type family $\mathcal L_{\mathrm H}\subseteq\PK$ is non-context-free
\citep{nebelweinberg} and MCFG of dimension $2$ \citep{rivaseddy2000,katosekikasami2006}; we do \emph{not}
claim the whole crossing set $\PK$ is uniformly MCFG-$2$ (richer topologies may need higher dimension,
\S\ref{sec:bg}). We take these as established and tie the agent to them via Prop.~\ref{prop:confine}:
``could not before'' is $\Rze{\Ocf}{\pi_{\mathrm{before}}}\subseteq\NESTED\not\ni T$; ``can now'' is
$T\in\Rze{\Opk}{\pi_{\mathrm{after}}}$ for a crossing $T\in\PK$ (the class-level claim, \S\ref{sec:formal},
concerns the non-CF family $\mathcal L_{\mathrm H}$).

\begin{definition}[Empirical crossing-family capability gain (fixed oracle)]\label{def:ascent}
Let $\mathcal D_{\mathrm{ho}}\subseteq\PK$ be a held-out pool of crossing targets (their designability under
$\Opk$ is generally unknown; the exact screen of \S\ref{sec:exp} removes only \emph{provably}-undesignable
targets, so an unsolved target may be a search failure \emph{or} oracle-relative undesignability). A policy
pair $(\pi_{\mathrm{before}},\pi_{\mathrm{after}})$ exhibits an empirical crossing-family capability gain,
adjudicated by the \emph{single, fixed} crossing-aware oracle $\Opk$, if
\begin{equation}\label{eq:ascent}
  \Rze{\Opk}{\pi_{\mathrm{before}}}\cap\mathcal D_{\mathrm{ho}}=\varnothing
  \qquad\text{and}\qquad
  \Rze{\Opk}{\pi_{\mathrm{after}}}\cap\mathcal D_{\mathrm{ho}}\neq\varnothing .
\end{equation}
Both sides are scored by the \emph{same} $\Opk$, so the ascent is not an artifact of exchanging oracles; the
decidable Prop.~\ref{prop:confine} gives the stronger, oracle-independent
$\Rze{\Ocf}{\pi_{\mathrm{before}}}\cap\PK=\varnothing$ as a pre-run bound. The empirical instance measured in
\S\ref{sec:exp} is the left condition at $0/60$ (floor) against the right at $3$--$8/60$ (panel operator).
\end{definition}

\begin{definition}[Transferable crossing-family capability]\label{def:cap}
The agent's \emph{crossing capability} is a set $\Delta\mathcal{O}=\mathcal{O}_{\mathrm{after}}\setminus
\mathcal{O}_{\mathrm{before}}$ of reusable components such that, with $\Delta\mathcal{O}$ enabled and the
external pseudoknot solver disabled, $\Rze{\Opk}{\pi_{\mathrm{after}}}$ contains \emph{held-out} targets of
$\PK$. It is \emph{transferable} if this holds on targets never used to form $\Delta\mathcal{O}$, and
\emph{autonomous relative to a tool class $\mathcal T$} if it holds with every member of $\mathcal T$ disabled
(here $\mathcal T=$ packaged inverse-design solvers; the crossing-aware folding oracles are \emph{not} in
$\mathcal T$ and remain available; see Rem.~\ref{rem:autonomy}).
\end{definition}

\begin{remark}[``Solver-free'' would have been the better word]\label{rem:autonomy}
In this paper ``autonomous'' means exactly one thing, that no packaged inverse-design solver is on the method
path, and that is far weaker than the word carries in the agent literature. Our system queries crossing-aware
folding oracles continuously and could not function without them. Where the distinction matters we write
\emph{solver-free}, and a reader who substitutes it everywhere loses nothing we intend. Two consequences.
Before/after differ in feedback instrument as well as policy: the prior repertoire is scored through a
pseudoknot-free channel and the after-policy queries crossing-aware predictors, so acquiring a new feedback
instrument is part of what happened. And the entity credited with a gain is the oracle-augmented system under a
declared protocol, not a policy in isolation, because the before/after step changes the policy, the in-loop
objective, oracle access and the candidate-selection rule at once. Our design does not decompose that bundle,
and \S\ref{sec:exp} shows a matched ablation failing to isolate any single component under the primary metric.
Every ``capability'', ``gain'' and ``new relative to the prior self'' below is therefore a system-level
empirical gain under augmented oracle access, and specifically not an identification of autonomous learning
inside a policy. The gamer-versus-operator comparison runs at identical oracle access precisely so the
crossing-aware channel is held fixed.
\end{remark}

\begin{remark}[Component attribution strengthens the audit; it is not a necessary condition]\label{rem:attrib}
We do not require that ablating $\Delta\mathcal O$ destroy the held-out rate. A component can be genuinely
acquired and load-bearing as a bundle while no single element is individually necessary, so an all-or-nothing
attribution test would reject real acquisitions. What we do instead is report which attributions we ran and how
they came out, which for this study is mostly negatively: the agreement objective is decisive under
$r_{\mathrm{in}}$ and not distinguishable from its deletion under $r_{\mathrm{mix}}$; the decompose/compose
scaffolding is inert under both; and the undirected control runs at a matched nominal cap rather than matched
realized compute (\S\ref{sec:exp}, Table~\ref{tab:det}). Component attribution is therefore an optional
strengthening, graded like the rest of the verdict. The audit's necessary conditions remain Eq.~\eqref{eq:cert},
in which no per-component clause appears.
\end{remark}

\begin{remark}[What we do and do not specify]\label{rem:scope}
We define the \emph{audit} that recognizes a transferable, autonomous crossing capability; we do \emph{not} here
specify the algorithm that \emph{produces} $\Delta\mathcal{O}$ (how the agent invents reusable components).
That algorithm is the \emph{object under test}: Def.~\ref{def:cap} is exactly what distinguishes a genuine
invented capability from a memorized instance (fails transfer) or a borrowed one (fails autonomy). This is
the honest boundary between an instrument (ours) and a system (to be built and then measured). We cross that
boundary in \S\ref{sec:exp}, where an LLM agent, given only primitives, \emph{writes} a $\Delta\mathcal O$ that
the audit then evaluates.
\end{remark}

\subsection{Two Impossibilities, Kept Separate}\label{sec:impossible}
The audit's negative side and undesignability are different statements and must not be conflated.
Confinement (Prop.~\ref{prop:confine}) is about the repertoire: $\Rze{\Ocf}{\pi_{\mathrm{before}}}$ excludes
$T$ because the \CF{} oracle cannot represent it. Undesignability \citep{zhou2024rigende} is about the target
and energy model: no sequence realizes $T$ under a criterion. It is provable by rival-structure
construction: from the target's motifs one exhibits a rival at least as favorable as $T$ for every
compatible sequence, so $T$ is never the unique optimum \citep{zhou2024rigende,zhou2025motifs}. Since
deciding ``is $T$ designable?'' is NP-hard in general \citep{bonnet2020hard}, a positive undesignability
certificate, when the construction succeeds, is a valuable decidable witness on an otherwise hard question.
Undesignability binds the ascended agent too: no repertoire realizes a target that no sequence realizes. The
ascent's negative side uses confinement, not undesignability.

\paragraph{An optional, distinct capability.} Undesignability enables a separate two-sided predictive
capability: per held-out target the agent commits either a design or an undesignable claim, adjudicated
respectively by the oracle panel or by an undesignability certificate. We delineate it from the crossing axis
and do not pursue it here, noting only that a committed undesignability claim is a computation under sealed commitment, not a hypothesis in the usual empirical sense.

\section{Experiment: Measuring Oracle-Gaming, and Auditing the Crossing Claim It Inflates}\label{sec:exp}
This section does three things. It \emph{measures} how far a single fallible oracle can inflate a capability
claim: an invented, solver-free operator reaches $43/60$ under the predictor it optimizes and $1/60$ under
three, with the discriminating comparison paired against an external solver that optimizes something else. It
\emph{audits} the system we then built to survive that measurement, reporting each component of the evidence
vector separately, including the two it does not establish. And it establishes that the instrument
\emph{discriminates}, by measuring an arm that clears the bar our own does not. That is where this section's
one clearly positive result lives, and why it is not filed as a supplement. The three arguments are kept in
separate subsections because they are answerable separately and a reader may want only one:
\S\ref{sec:exp:gaming} is the measurement, \S\ref{sec:exp:ours} the audit of our own operator,
\S\ref{sec:exp:pc} the positive control and the seven interventional attribution tests,
\S\ref{sec:exp:pools} what the target sets can support, and \S\ref{sec:exp:stats} the statistical
conventions and remaining robustness checks. Everything is on the $n{=}60$ Pseudobase++ crossing pool unless a larger or structurally disjoint
pool is named; every rate is one of the three metrics of Eq.~\eqref{eq:metricnames}, never a blended
``panel'' number.

\paragraph{Setup: an exact difficulty axis, a validated panel, and an exact designability screen.}
\emph{(i) The grade is intrinsic and exact.} Each target's family type and crossing-stem count are properties
of the target, decidable in $O(|P|^2)$ and never inferred from whether the agent solves it, so the graded
frontier carries no survivorship bias and the boundary is the established $\CF\!\to\!\MCF$ line
(\S\ref{sec:cf}), not a learned proxy. \emph{(ii) Verifier validity is checked before verdicts are trusted.}
The \CF{} ladder and the pseudoknot panel are reused instruments \citep{verifiereconomics}; we run the
specificity half here (skeleton designs, expected zero crossing recall) and inherit native recall from that
companion. Without it a lenient panel could credit skeletons as pseudoknots and manufacture an ascent.
\emph{(iii) Undesignable targets are excluded only on an exact certificate}, by rival-structure construction
\citep{zhou2024rigende,zhou2025motifs} and never on solver failure, so hard-but-designable targets stay in; on
the primary pool it excludes none. We also do not infer designability from native provenance, since our
calibration shows $\Omega(x_{\mathrm{nat}})=T$ usually fails ($\sim\!1\%$ native recall). Oracle-relative
designability of the pool is therefore \emph{unknown}, and an unsolved target may be a search failure or an
oracle-relative impossibility.

\paragraph{Positive-side adjudication under oracle-dependence.} Because the pseudoknot verdict is
oracle-dependent (\S\ref{sec:bg}), a target counts as realized only under an explicit rule: (i)
mixed-panel-consistent, the primary rate; (ii) reference (pKiss); and (iii) envelope $[\min,\max]$ across
predictors, reported alongside. We fix two acceptance predicates explicitly, because conflating them is the
classic pseudoknot-evaluation error. Writing $P(\cdot)$ for a pair set and $P_{\mathrm{nest}}(\cdot)$ for its
\emph{nested projection} (crossing pairs deleted, which is what a pseudoknot-free parser sees),
\begin{equation}\label{eq:predicates}
  A_{\mathrm{full}}(x,T)=\mathbf 1\!\left[P(\Omega(x))=P(T)\right],
  \qquad
  A_{\mathrm{nest}}(x,T)=\mathbf 1\!\left[P_{\mathrm{nest}}(\Omega(x))=P_{\mathrm{nest}}(T)\right].
\end{equation}
\textbf{Every rate in this paper uses $A_{\mathrm{full}}$}: a predictor confirms iff its predicted pair set
equals the target's \emph{entire} pair set: nested \emph{and} crossing pairs, after bracket-type
normalization, with no extra or missing pairs (\texttt{forms\_target}); the mixed panel requires this of all
three. $A_{\mathrm{nest}}$ is reported only as the \emph{blind rule whose false accepts we count}
(Table~\ref{tab:panel}): for a crossing target $T\in\PK$, Prop.~\ref{prop:confine} makes
$A_{\mathrm{full}}(x,T)=0$ under \Ocf{} for every $x$, whereas $A_{\mathrm{nest}}$ can and does return $1$. So
$A_{\mathrm{nest}}$ acceptances are evaluator errors, never evidence that a target needs no crossing.
(The $\sim\!1\%$ native recall of \S\ref{sec:oracle} is measured under a \emph{laxer}
crossing-pair-\emph{recovery} predicate, so the panel's recall under $A_{\mathrm{full}}$ is no
higher.) Disagreements trigger abstention; we never treat a single predictor as ground truth.

%


\subsection{The Measurement: One Fallible Oracle Credits $43$ Designs, Three Credit $1$}\label{sec:exp:gaming}

\paragraph{The naive operator crosses only by gaming the oracle.} We instantiate the full audit on
the $n=60$ development/evaluation pool with an invented operator (\textsc{pkdecomp}): decompose the target into its
context-free skeleton, sub-solve it with a pseudoknot-free designer, compose the crossing stem by
complementarity, and repair against the pseudoknot oracle, the one move the \{REMC, DesiRNA\} toolbox never
makes. It never calls an external pseudoknot solver, so its autonomy is structural. Under the single
reference oracle (pKiss) the outcome looks like a decisive ascent: $43/60$ solved, matching the external SOTA solver while calling nothing, far above the $0/60$
floor. The negative side is firm on both its components: the $\Ocf$-certification bound is exact (Prop.~\ref{prop:confine})
and the empirical \CF-confined floor is $0/60$, with every skeleton design a naive-MFE-style false positive the
panel refutes (a $0/60$ panel credit on the E0 skeleton control below).

The positive side does not survive the panel. The operator optimized pKiss in the loop; re-adjudicated by
a held-out predictor, only $2/43$ of its solves are confirmed and one is panel-unanimous
(Table~\ref{tab:panel}). The apparent $43/60$ ascent collapses to
$1/60$ (Fig.~\ref{fig:collapse}): an oracle artifact, the operator having learned the blind spots of the predictor it was scored on
\citep{skalse2022,gao2023overopt}. A single-oracle evaluation would have reported a new autonomous
capability; under the abstaining panel (\S\ref{sec:oracle}) almost nothing survives. The gap between $43$
and $1$ is what a one-sided criterion cannot see.

\emph{What the ratio is, and what carries the claim.} We name the collapse by its ratio because that is the
quantity a practitioner cares about, but the ratio's denominator is a single target and we will not pretend
otherwise: $1/60$ has an exact binomial interval of $[0.0004,0.089]$ against $43/60$'s $[0.586,0.825]$, so the
data are compatible with an inflation anywhere from roughly $7\times$ upward. Had two designs survived instead
of one it would read $21.5$-fold. \textbf{Nothing in this paper rests on the ratio.} What carries the finding is the paired contrast below, on the same $43$ targets: $2$ confirmed against $26$,
exact McNemar $p{=}8.1\!\times\!10^{-7}$. Its uncertainty is small because it is paired and its denominator is
$43$, not $1$.

\begin{table}[t]\centering\small
\setlength{\tabcolsep}{6pt}\renewcommand{\arraystretch}{1.15}
\caption{\textbf{Cross-oracle panel adjudication of the invented operator's $43$ pKiss-solves.} Read the top block down
the count column: $43\to2\to1$ as adjudication moves from the oracle the operator optimized to predictors it
never saw. The lower block is the exclusion side, and its two numbers are \emph{not} confirmations of the
design: all $43$ targets are crossing ($3$--$9$ crossing pairs each), so a pseudoknot-free fold
realizes none, and the $4$ counts how often the pseudoknot-blind predicate would have been fooled.}\label{tab:panel}
\begin{tabularx}{\linewidth}{@{}>{\raggedright\arraybackslash}p{5.0cm}cL@{}}
\toprule
\textbf{adjudicator / predicate} & \textbf{count\,/\,43} & \textbf{what the number counts}\\
\midrule
\multicolumn{3}{@{}l}{\emph{crossing-aware adjudicators: the collapse}}\\
pKiss, $A_{\mathrm{full}}$ & $43/43$ & the in-loop oracle, the rate the operator was optimized to produce\\
ProbKnot, $A_{\mathrm{full}}$ & $2/43$ & first crossing-aware predictor the gamer never saw (in-loop for the
panel operator)\\
all three predictors, $A_{\mathrm{full}}$ & $\mathbf{1/43}$ & mixed panel, $2$ in-loop $+\,1$ held out;
$1/60$ of the pool, the audited outcome\\
\addlinespace[2pt]
\multicolumn{3}{@{}l}{\emph{pseudoknot-free adjudicators: the exclusion side, not confirmations of the design}}\\
pseudoknot-free MFE, $A_{\mathrm{full}}$ & $\mathbf{0/43}$ & a theorem, not a measurement
(Prop.~\ref{prop:confine}: $\operatorname{im}\Ocf\subseteq\NESTED$)\\
pseudoknot-free MFE, \emph{blind} $A_{\mathrm{nest}}$ & $4/43$ & designs the nested-projection rule would
\emph{falsely accept}: that rule's own error rate, not a property of the targets\\
\bottomrule
\end{tabularx}
\end{table}

\begin{figure}[t]\centering
\includegraphics[width=0.78\linewidth]{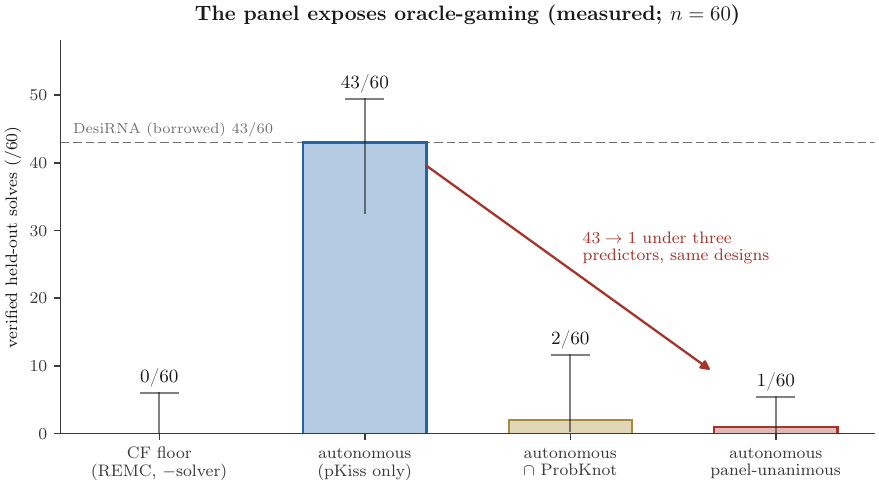}
\caption{\textbf{The panel exposes oracle-gaming (measured, $n=60$).} The invented operator's single-oracle
solve rate ($43/60$, matching the borrowed DesiRNA ceiling) collapses to $2$ under a held-out predictor
and to $1$ panel-unanimous, above the decidable context-free floor of $0/60$. Bars carry exact binomial
intervals, and the collapse is annotated as $43\!\to\!1$ on the \emph{same} designs, not as a ``$43\times$'' ratio: the last bar is a single design, so that quotient is unstable and nothing in this paper
rests on it (\S\ref{sec:exp}). What a one-sided criterion cannot see is the gap, not its
quotient.}\label{fig:collapse}
\end{figure}

\paragraph{Oracle-gaming vs.\ oracle-robustness: a contrast with the borrowed solver.} Is the collapse
specific to our operator? Re-adjudicating the external solver's $43$ pKiss-solves under the same panel
(Table~\ref{tab:contrast}): DesiRNA optimizes minimum free energy, not the pseudoknot oracle, and $26/43$ of
its solves survive ProbKnot, whereas our operator, which optimized pKiss, retains only $2/43$.

\emph{Is that a paired comparison?} Two ``$43/60$'' figures need not name the same $43$ targets, in which case
the ratio would mix method with target composition and difficulty. We checked (\texttt{contrast\_paired.py}), and the check comes out cleanly: the two pKiss-solved sets are
\emph{identical} (intersection $43$, Jaccard $1.000$, neither arm solving a target the other misses), so the
comparison is exactly paired by construction, on targets with an identical difficulty profile (mean crossing-pair
count $5.26$ for both, against a pool mean of $5.48$). On that common subset the paired McNemar is
$b{=}25$ (DesiRNA only), $c{=}1$ (ours only), exact $p{=}8.1\!\times\!10^{-7}$, paired risk difference
${+}0.558$ with bootstrap $95\%$ CI $[{+}0.395,{+}0.721]$; at the composition-free full $n{=}60$ denominator it
is $26/60$ vs.\ $2/60$, ${+}0.400$ $[{+}0.267,{+}0.533]$. So target composition is excluded as an explanation.

\emph{What that still does not license.} The two arms are two different methods, not a randomized
manipulation of one method's training oracle, so the design is observational in the arm: pKiss-training is
confounded with every other difference between DesiRNA and our operator (search algorithm, energy model,
candidate quality, budget). The defensible reading is therefore \emph{association}, not causal localization:
the result is \emph{consistent with substantially stronger oracle-specific over-optimization by the
pKiss-trained method}, and we do not claim it localizes gaming \emph{to} that training. Two further caveats.
Under the full three-predictor unanimous panel neither arm is robust ($0/43$
for DesiRNA, $1/43$ for ours): ShapeKnots, run de novo without probing data, rejects almost everything, so
the cleaner comparison is the two de-novo predictors ($\text{pKiss}\wedge\text{ProbKnot}$), where the
$2$-vs-$26$ gap stands. And that even a SOTA solver is only $\sim\!60\%$ cross-predictor-robust confirms the
thesis: ``forms the pseudoknot'' is deeply model-relative.

\begin{table}[t]\centering\small
\setlength{\tabcolsep}{6pt}\renewcommand{\arraystretch}{1.15}
\caption{\textbf{Oracle-gaming vs.\ cross-predictor carry-over} (each arm's $43$ pKiss-solves, re-adjudicated). The
pKiss-optimizing operator is $13\times$ less ProbKnot-robust than the MFE-optimizing external solver. The comparison is paired: the two pKiss-solved sets are the same $43$ targets (Jaccard $1.000$, matched
mean crossing-pair count $5.26$), so target composition cannot explain the gap. Paired McNemar $b{=}25$,
$c{=}1$, exact $p{=}8.1\!\times\!10^{-7}$, risk difference ${+}0.558$ $[{+}0.395,{+}0.721]$
(\texttt{contrast\_paired.py}). Because the arms are two different methods rather than a randomized
manipulation of one method's training oracle, we read this as an \emph{association}. Under the full unanimous
panel neither arm is robust.}\label{tab:contrast}
\begin{tabular}{@{}lccccc@{}}
\toprule
\textbf{method (objective)} & \textbf{pKiss} & \textbf{ProbKnot} & \textbf{ShapeKnots} & \textbf{unanimous}
& \textbf{ProbKnot\,/\,60}\\
\midrule
DesiRNA (minimum free energy) & $43/43$ & $\mathbf{26/43}$ & $1/43$ & $0/43$ & $26/60$\\
invented operator (pKiss in-loop) & $43/43$ & $\mathbf{2/43}$ & $3/43$ & $1/43$ & $2/60$\\
\midrule
\multicolumn{6}{@{}l@{}}{\footnotesize\emph{paired, same $43$ targets}: $b{=}25$, $c{=}1$, exact McNemar
$p{=}8.1\!\times\!10^{-7}$, $\widehat\Delta{=}{+}0.558$ $[{+}0.395,{+}0.721]$}\\
\bottomrule
\end{tabular}
\end{table}

\subsection{The Audit of Our Own System, Which It Largely Fails}\label{sec:exp:ours}

\paragraph{It is a search, and the ablation says which part of it works.} We do not pose this as capability
\emph{versus} search: the operator is a search, and the useful question is which ingredient carries the gain.
We answer it in the bit-reproducible regime only (Table~\ref{tab:det}; App.~\ref{app:repro} gives the fix, and
the earlier wall-clock ablations it supersedes are not reported, since every conclusion below is stronger
without them). Five arms, the same operator at an identical $150$-call pKiss allowance, differing in exactly
one component, five noise-free seeds, paired on target and seed.

Three findings, all sharp, and only the first flattering. \emph{(i) The repair loop is the operator.} Delete
it and $r_{\mathrm{in}}$ falls $37.0\to1.0$ ($\Delta{=}{+}0.600$ $[{+}0.487,{+}0.707]$) and $r_{\mathrm{mix}}$
to $0$. The crossing capability is stochastic search against the pseudoknot oracle, which is precisely why it
games: search against a scorer finds its blind spots. \emph{(ii) The objective is load-bearing only on the
metric it optimizes.} Deleting it costs ${+}0.140$ $[{+}0.063,{+}0.220]$ on $r_{\mathrm{in}}$ and
\emph{exactly zero} on $r_{\mathrm{mix}}$: ${+}0.0000$, $[{-}0.060,{+}0.050]$, on $300$ paired cells. With
execution noise removed this is a measured absence, not an underpowered null. So ``optimize agreement'' is
supported as a \emph{choice of success predicate} and not as a search objective that earns its keep.
\emph{(iii) The \CF{} decomposition is load-bearing after all}, ${+}0.093$ $[{+}0.023,{+}0.173]$ on
$r_{\mathrm{in}}$, where the noisier estimate had it inert: the one place removing execution noise reversed a
conclusion, and we flag it instead of quietly adopting the new sign.

One matching caveat survives determinism and cannot be removed. Equal \emph{allowance} is not equal
\emph{use}: the full operator often solves and stops, so realized pKiss calls run $52.6$--$105.5$ across arms
(Table~\ref{tab:det}, last column). Equalizing realized use would mean penalizing success. The ablated arms
consume \emph{more} compute, so their lower $r_{\mathrm{in}}$ is not a budget artifact; the comparison is
nonetheless matched on access, not on use.

\emph{How much does (ii) actually say?} Less than the interval suggests, and we would rather state that than
let a reader find it. $r_{\mathrm{in}}$ is pKiss\,$\wedge$\,ProbKnot and the undirected walk optimizes
neither, so ``deleting the objective costs ${+}0.140$ on $r_{\mathrm{in}}$'' is close to tautological: the
arm fails the conjunct it stopped pursuing. The informative half of (ii) is the other number, the
\emph{exact} zero on $r_{\mathrm{mix}}$, which is not tautological, because $r_{\mathrm{mix}}$ adds a
predictor neither arm optimizes, and because a two-sided interval of $[{-}0.060,{+}0.050]$ on $300$ paired
noise-free cells is a measurement rather than a failure to find one.

\emph{What this licenses about the success predicate, stated precisely.} The natural conclusion, that the success predicate and not the search is the causal lever, is not supported in that form. Changing the
predicate from one predictor to a two-predictor conjunction is \emph{associated} with a higher
$r_{\mathrm{mix}}$ in the gamer-versus-operator comparison, while deleting the objective from the operator
does not lower $r_{\mathrm{mix}}$ at all. Those reconcile only if something other than the objective
distinguishes the two systems, and this design cannot say what. We report the association and stop.

\emph{The uncomfortable reading, stated in our own words.} Put (i) and (ii) together and the description of
our operator that the data actually supports is: \textbf{a stochastic search against a fallible oracle, with a
better success predicate and no better guidance}. Not ``a design principle''. We prefer to write that
sentence ourselves than to shelter behind ``a search procedure can itself be a capability'', which is true
but is not what these numbers show. It also sharpens what the positive control is for. The agent-written
operators clear the \emph{same} predicate on the \emph{same} targets, so the predicate is held fixed between
them and us; whatever separates them is in the search, which is exactly the dimension our own ablation says
we did not move.

\begin{table}[t]\centering\small
\setlength{\tabcolsep}{3.5pt}\renewcommand{\arraystretch}{1.15}
\caption{\textbf{Component ablation and the headline comparison, bit-reproducibly at equal allowance}: $5$ nominal
seeds, one noise-free run each, gamer included. Counted budgets replace wall-clock caps and \textsc{ViennaRNA}'s
generator is seeded per attempt, so a seed fixes the run bit-for-bit; every arm gets the same $150$-call pKiss
allowance. $\Delta$ is the paired difference against the reference with a two-level bootstrap over targets and
seeds ($B{=}20{,}000$); \textbf{bold} excludes zero. $r_{\mathrm{in}}$ is each arm's \emph{own} predicate,
pKiss alone for the gamer, so it is not comparable across those groups; $r_{\mathrm{mix}}$ is the cross-arm
metric. ``real pKiss'' is realized calls per target, which differ because the full operator often solves and
stops. (\texttt{panel\_deterministic.sh}, \texttt{det\_stats.py}.)}\label{tab:det}
\begin{tabularx}{\linewidth}{@{}Lccccc@{}}
\toprule
& \multicolumn{2}{c}{$r_{\mathrm{in}}$ (own predicate)} & \multicolumn{2}{c}{$r_{\mathrm{mix}}$ (all three)} & \\
\cmidrule(lr){2-3}\cmidrule(lr){4-5}
\textbf{arm} & \textbf{mean/60} & $\Delta$ [CI] & \textbf{mean/60} & $\Delta$ [CI] & \textbf{real pKiss}\\
\midrule
gamer (single-oracle) & $42.6$ & \emph{n/c} & $0.00$ & \textbf{ref} & $52.6$\\
full operator (reference) & $37.0$ & --- & $4.00$ & --- & $82.2$\\
\midrule
\multicolumn{6}{@{}l}{\emph{headline:} full operator $-$ gamer on $r_{\mathrm{mix}}$ $=\mathbf{+.067}$ $[{+}.023,{+}.120]$, gamer $0/60$ at all five seeds}\\
\midrule
$-$ objective (undirected) & $28.6$ & $\mathbf{+.140}$ $[{+}.063,{+}.220]$ & $4.00$ & $+.000$ $[{-}.060,{+}.050]$ & $99.5$\\
$-$ objective, $-$ bias (blind) & $25.4$ & $\mathbf{+.193}$ $[{+}.107,{+}.287]$ & $2.40$ & $+.027$ $[{-}.017,{+}.077]$ & $105.5$\\
$-$ repair loop entirely & $1.0$ & $\mathbf{+.600}$ $[{+}.487,{+}.707]$ & $0.00$ & $\mathbf{+.067}$ $[{+}.023,{+}.123]$ & $1.0$\\
$-$ \CF{} decomposition & $31.4$ & $\mathbf{+.093}$ $[{+}.023,{+}.173]$ & $3.40$ & $+.010$ $[{-}.033,{+}.050]$ & $100.1$\\
$-$ complementarity composition & $34.4$ & $+.043$ $[{-}.030,{+}.137]$ & $5.40$ & $-.023$ $[{-}.080,{+}.030]$ & $90.3$\\
\bottomrule
\end{tabularx}
\end{table}

\paragraph{The panel-optimizing operator, and its held-out rate.} The operator whose repair loop optimizes
agreement between pKiss and ProbKnot solves $34$--$37/60$ on its own two oracles across five seeds against the
gamer's ProbKnot-robust $2/60$; $3$--$8$ of those (mean $5.0$) are also panel-unanimous, against $\sim\!1/60$
for the gamer and $0/60$ for the SOTA solver. Read every band of this form as the range of the runs named
beside it: pooling all $24$ executions gives $30$--$39/60$ and $1$--$9/60$ (mean $3.29$). Panel-unanimity puts
two of its three conjuncts inside the operator's objective, so it cannot by itself show generalization. On the
one genuinely held-out predictor the operator is higher at every seed but significant in one of three ($n{=}125$
ID-disjoint targets; $\widehat\Delta = {+}0.008, {+}0.056, {+}0.064$). We record that as \emph{weak} and do not
build on it. In \emph{absolute} terms it is not small once given its reference class: the same adjudicator
recovers only $11.3\%$ of native pseudoknots from their own sequences and confirms our designs at $8.0\%$,
about $70\%$ of its own ceiling, against the gamer's $3.7\%$ (App.~\ref{app:adjud}).

\begin{figure}[t]\centering
\includegraphics[width=\linewidth]{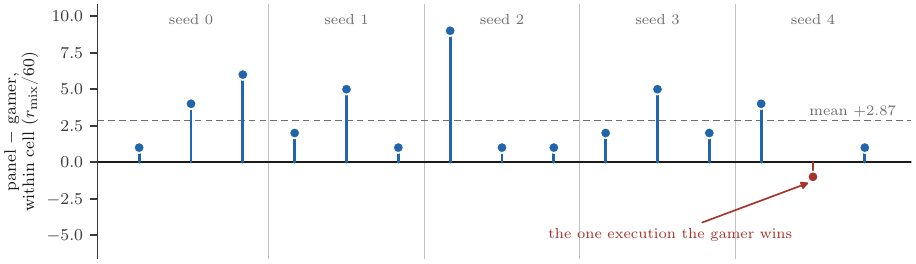}
\caption{\textbf{The statement about the headline effect that uses no interval at all.} Within-cell paired
difference in $r_{\mathrm{mix}}$, panel operator minus single-oracle gamer, for each of the $15$ wall-clock
executions ($5$ nominal seeds $\times$ $3$ executions) on the $60$ common targets. Fourteen point one way and
one points the other. This is the form the headline result is most defensible in, because it assumes nothing
about the resampling unit and nothing about which variance components a bootstrap should nest: it is a sign test on independent executions, and that is why \S\ref{sec:exp:ours} settles the sign while declining to settle
the magnitude. Bar heights are the per-cell McNemar discordance $b-c$, parsed from
\texttt{gvp\_stats.txt}.}\label{fig:paired}
\end{figure}

\paragraph{$\Delta_{\mathrm{gamer}}$: separated, but by how much depends on the estimator, and we say so.}
Run on the \emph{same} seeds and targets as the operator, the gamer is panel-unanimous on
$\{1,0,0,0,0\}/60$ against $\{8,3,5,5,4\}/60$, and per seed the paired McNemar has $c{=}0$: the gamer never
solves a target the operator misses. \textbf{The quantity is estimated four times in this paper and the point
estimate falls every time the design is tightened}: ${+}0.080$, ${+}0.056$, ${+}0.048$ as execution
replication is added, then ${+}0.067$ $[{+}0.023,{+}0.120]$ in the bit-reproducible regime with the gamer at
$0/60$ at every seed (Table~\ref{tab:det}). App.~\ref{app:repro} gives each design, its intervals, and the
conservative one-execution-per-seed variant that reaches $0.000$. Three readings, and only the first is
comfortable. Determinism removes the \emph{execution} level of variation but not the \emph{target-set} level
the decline pointed at, and its narrower interval reflects one fewer variance component, not more evidence. The most robust statement uses no interval at all (Fig.~\ref{fig:paired}): the operator is strictly higher in $14$ of $15$
independent executions and $15$ of $15$ on the structurally disjoint pool. And ${+}0.067$ sits at $77\%$
power for this design (\S\ref{sec:exp}), i.e.\ at its resolution limit, which is what ``established,
narrowly'' means in units a reader can check. What is settled is the sign; the magnitude is not, and the
search-free probe below removes most of it on a different ground.

\emph{Two confounds this comparison exposes rather than removes.} The arms are matched on oracle
\emph{access} and not on oracle \emph{use}: per target the operator makes $128.0$ pKiss and $63.5$ ProbKnot
calls against the gamer's $55.2$ and $0$, because the gamer's single-predictor objective is satisfied at
$43/60$ and stops. Realized compute is a \emph{mediator} of the objective change, so
$\Delta_{\mathrm{gamer}}$ measures ``better designs'' and ``searches $2.3\times$ longer'' summed, and this
design cannot split them. And it is a between-\emph{system} contrast: two different methods, not a randomized
manipulation of one method's oracle.

\paragraph{$r_{\mathrm{mix}}$ is not metric-neutral, so we redo the comparison on a statistic that is.}
$r_{\mathrm{mix}}=r_{\mathrm{in}}\wedge\text{ShapeKnots}$, and $r_{\mathrm{in}}$ is \emph{the panel operator's
own success predicate}. Scoring both arms on $r_{\mathrm{mix}}$ therefore scores one of them on its objective
plus one term, which by our own argument in \S\ref{sec:oracle} cannot by itself evidence generalization. The
information $r_{\mathrm{mix}}$ adds over $r_{\mathrm{in}}$ is the conditional term, so we make that term the
statistic:
\begin{equation}\label{eq:carry}
\begin{aligned}
  C(\text{arm}) \;=\; \Pr\bigl[\,&\text{a predictor never in \emph{that arm's} loop confirms the design}\\[-1pt]
  &\;\big|\; \text{the arm's own success predicate is satisfied}\,\bigr].
\end{aligned}
\end{equation}
$C$ is neutral because every arm is scored on how far \emph{its own} successes carry to a predictor \emph{it}
never optimized, with its own predicate as the denominator. ShapeKnots is in no arm's loop, so carry-over to
ShapeKnots is defined identically for all of them (Fig.~\ref{fig:carry}). Three readings follow, and they do
not all point the same way.

\emph{The gamer comparison survives the change of metric.} Difficulty-matched to the $528$ (target, seed,
execution) cells where both arms satisfy their own predicate, ShapeKnots confirms $48$ of the panel operator's
designs against $22$ of the gamer's ($b{=}41$, $c{=}15$, exact $p{=}6.9\!\times\!10^{-4}$, difference
${+}0.049$). That the neutral statistic reproduces the $r_{\mathrm{mix}}$ value of ${+}0.048$ almost exactly is
reassuring rather than circular: the two are computed from different denominators. One conditioning does
\emph{not} agree and we report it: if both arms are instead held to the panel operator's bar
($r_{\mathrm{in}}$ for both), the gamer's rate is nominally higher ($5/32$ against $48/530$). That comparison
conditions the gamer on having passed ProbKnot, the very carry-over at issue, and retains $32$ of its $643$
designs, so we do not treat it as primary; but with an interval of $[0.053,0.328]$ it does not refute the
primary reading either, and a reader who prefers it should read $\Delta_{\mathrm{gamer}}$ as unestablished.

\emph{The objective ablations do not survive it.} The undirected walk and the oracle-blind walk share the panel
operator's success predicate and differ only in guidance, and on $C$ they are indistinguishable from it
($0.095$ and $0.110$ against $0.094$; paired $p{=}0.57$ and $p{=}0.29$, both point estimates \emph{against}
the full operator). So what the two-oracle conjunction gains comes from the \emph{predicate}, not from the search that pursues it. That is the same conclusion $\Delta_{\mathrm{undirected}}$ reached on
$r_{\mathrm{mix}}$, now on a metric that cannot be accused of favouring either arm.

\emph{Cross-predictor robustness is pairwise, not global.} DesiRNA carries over to ProbKnot at $26/43$ against
our gamer's $2/43$ (\S\ref{sec:exp}, Table~\ref{tab:contrast}), and to ShapeKnots at $1/43$, the
\emph{lowest} rate of any arm in Fig.~\ref{fig:carry}. An arm can be $13\times$ more robust under one
un-optimized predictor and the least robust under another. ProbKnot and DesiRNA's minimum-free-energy
objective are both built on the same nearest-neighbour thermodynamic parameters, and ShapeKnots run de novo is
not; the natural reading is that ``robustness'' here tracks shared model family, not correctness. That is a limitation of the whole panel, not of one arm, and it is the strongest argument in this
paper for adding a predictor from a different model class (\S\ref{sec:disc}).

\begin{figure}[t]\centering
\includegraphics[width=\linewidth]{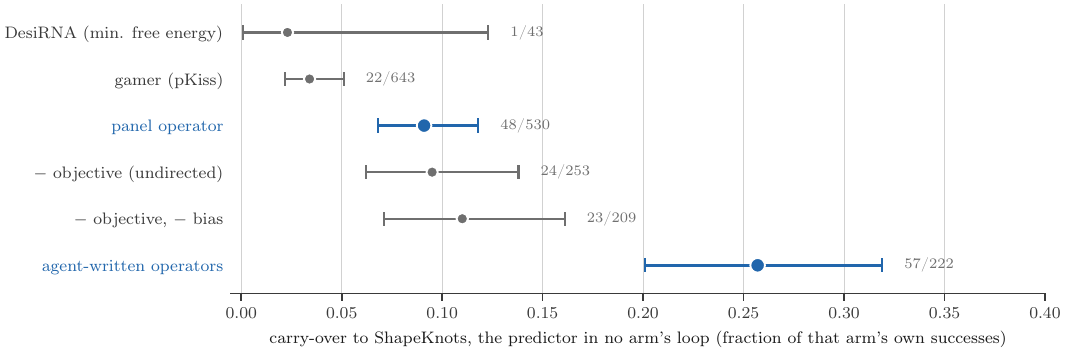}
\caption{\textbf{Conditional carry-over $C$ (Eq.~\eqref{eq:carry}): the metric-neutral comparison.} Each arm is scored
on the fraction of its own successes that a predictor never in its loop confirms, so the denominators differ by
design and that is what makes the statistic neutral. Bars are exact binomial intervals; counts are successes /
own successes. The two coloured rows are the comparison the figure exists to make: the hand-built panel
operator against the frozen agent-written operators of App.~\ref{app:invent}, which optimize the same
predicate. The ProbKnot direction is meaningful only for the two arms that do not optimize it
(Table~\ref{tab:contrast}). The agent row pools six single runs; its two zero-failure models replicated at
$5\times3$ give $0.302$ $[0.275,0.330]$.}\label{fig:carry}
\end{figure}

\subsection{Does the Instrument Discriminate? A Positive Control That Becomes a Result}\label{sec:exp:pc}

\paragraph{Does the instrument discriminate, or is it merely insensitive? A positive control.} We have said
more than once that nothing we build saturates the panel and offered that as evidence of stringency. It is
not: an audit nothing passes and an audit that cannot detect anything produce the same table, and at
$\sim\!1\%$ recall on natives insensitivity is the live alternative. Separating them needs an arm that
\emph{does} clear the bar, measured identically. We already had one, filed as a supplement. The frozen
LLM-written operators of App.~\ref{app:invent} optimize pKiss\,$\wedge$\,ProbKnot, the panel operator's own
predicate, so ShapeKnots is held out from them in the same sense.

Replicated at the depth of the headline experiment (the two operators with no runtime failures, frozen code
re-executed at $5$ seeds $\times$ $3$ executions, \texttt{agent\_replicate.py}), their carry-over is
$333/1104=\mathbf{0.302}$ $[0.275,0.330]$ against the hand-built operator's $48/530=0.091$ $[0.068,0.118]$;
the intervals do not overlap. The three executions inside a seed are \emph{bit-identical} in $5/5$ seeds, so
unlike our own operator these are reproducible runs.

\emph{Paired, at the standard we impose on the transplant arms.} Those are two conditional rates on two
different denominators, and Fig.~\ref{fig:attrib}'s caption forbids exactly that reading for every other arm,
so we apply the same standard here, not a weaker one, to the arm that carries the paper's discriminating-power claim. Both arms optimize the same pKiss$\,\wedge\,$ProbKnot predicate and are scored by
ShapeKnots, held out of both, so restricting to the (target, cell) units where \emph{both} arms cleared
that shared bar gives a paired comparison. Same target, same nominal seed, same execution index,
same in-loop bar met; the only open question is whether the held-out predictor confirms. On those
$n{=}951$ units the agent operators carry over at $0.293$ against $0.095$, with $b{=}218$ discordant pairs
favouring the agents against $c{=}29$ favouring us: a paired difference of ${+}0.199$.

\emph{We do not quote the exact McNemar these counts give.} It is $5.1\!\times\!10^{-37}$, and it assumes $951$
independent units when they come from $60$ targets contributing ${\sim}16$ correlated observations each.
Fig.~\ref{fig:attrib}'s caption rules out that anticonservatism for every transplant arm, so applying it to the
arm carrying this paper's positive result would be indefensible. Resampling whole \emph{targets} gives
$[{+}0.108,{+}0.297]$ over $42$ clusters; a permutation test flipping each target's discordant pairs together
gives $p{=}5\!\times\!10^{-5}$. Per model the clustered intervals are $[{+}0.151,{+}0.382]$ and
$[{+}0.059,{+}0.241]$. The correction moves the $p$-value by $32$ orders of magnitude and changes no conclusion.
Matching costs the point estimate almost nothing ($0.302$ to $0.293$), which is informative: the
advantage was not an artifact of the two arms solving different targets (\texttt{arep\_matched.py}). One
quantifier the matching does not repair: these are the $2$ of $6$ invented operators with no runtime failures,
a model-level survivorship selection on top of the target-level one, so the supported claim is that
\emph{some} agent-written operators carry over better.

\emph{How much compute do the agents actually spend? We stopped arguing from their source and counted.} The
allowance control above is only informative if it reaches the regime it is meant to rule out, and the frozen
sources make that doubtful: one asks for $15$ restarts $\times$ $3000$ iterations against our $6\times800$.
But a constant in the source is what the code \emph{requests}, not what it spends under the $20$\,s per-target
cap the replication imposed, so we instrumented the two crossing-aware primitives and re-ran the frozen code
in that regime (\texttt{agent\_budget.py}). The measurement inverts the worry. Realized pKiss calls per target
are $\approx\!28$ for gemini-3.1-pro and $\approx\!13$ for kimi-k3, against the hand-built operator's
$\mathbf{128.0}$ in the same regime: $4.6\times$ and $9.8\times$ \emph{fewer}, with the $3\times$-allowance
control sitting $7\times$ above the hungriest of them. This licenses a stronger statement than ``not bought
with compute'': \textbf{the agent operators reach $0.293$ carry-over while spending a fraction of the oracle
calls of the operator they beat at $0.095$}. (Two independent runs of the instrumented measurement gave
$28.6/12.9$ and $27.8/13.1$; our harness runs it in-process and serially, so it does not reproduce the
fork-per-call isolation of the replication, and the small spread is that. It does not touch the conclusion.)

\emph{The inverse control, and the compute-matched comparison it buys.} The measurement also generated a
hypothesis and let us kill it in the same pass. If the agents are not searching harder, perhaps they benefit
from searching \emph{less}. kimi-k3's source contains an explicit \texttt{budget = [30]}, a self-imposed
oracle-call cap, and a search forced to commit early might accept candidates from a different region of
sequence space. We ran the compute control the other way: the hand-built operator at a $30$-call pKiss
allowance, nothing else changed, deterministic, five seeds (\texttt{tinypk.sh},
\texttt{tinyctl\_stats.py}). Carry-over is $8/75=\mathbf{0.107}$ $[0.030,0.203]$ against the baseline's
$0.108$, while $r_{\mathrm{in}}$ collapses from $37.0$ to $15.0$ per $60$. By the rule fixed before the
numbers, that a mechanism counts only if carry-over rises \emph{and} $r_{\mathrm{in}}$ holds, that is a fifth
null, and a clean one: the rate did not even inflate in the artifactual way a halved denominator invites.

That arm is worth more than the hypothesis it refutes, because it realizes $26.0$ pKiss calls per target
against gemini-3.1-pro's $27.8$. \textbf{At matched realized compute, within $7\%$ on the coordinate that
matters, the hand-built operator carries over at $0.107$ and the agent-written one at $0.367$.} This
discharges the compute-matched (rather than cap-matched) control the previous revision listed as owed, and it
closes the compute question in both directions: the advantage does not appear when we are given three times
the allowance, and it does not appear when we are held to the agents' own.

\emph{A structural property of the designs, gated then transplanted.} Composition and compute are excluded, so
the surviving hypothesis is that the accepted designs differ in \emph{where} their nucleotides sit relative to
the target's pair set. We pre-stated six such features and two gates: (a) the feature differs between the arms,
and (b) it predicts carry-over \emph{within} the hand-built arm, where the program is held fixed. Overall GC
went in as a negative control, and earned its place at once by passing gate (b) outright (AUC $0.749$,
$p{=}1.9\!\times\!10^{-10}$), so an unadjusted gate admits anything GC-correlated. With both gates adjusted for
GC by logistic regression and a permuted feature as the adjusted null ($p{=}0.813$), \texttt{gc\_cross}
($p{=}0.558$) and \texttt{a\_loop} ($p{=}0.176$) fall to fingerprints, and two features are undefined on $513$
of $530$ designs because a bare H-type pseudoknot has no non-crossing pairs. One survives: \texttt{decoy}, the
count of off-target windows complementary to the crossing stem's $5'$ arm, of which agent designs carry $0.003$
each against our $0.060$.

That earned a transplant, and the transplant refutes it. \textbf{Our first attempt does not count and we say
so}: adding \texttt{decoy} as an acceptance tiebreak left the accepted designs' decoy count unchanged
($0.0757\!\to\!0.0761$), and by the standard of Fig.~\ref{fig:attrib} a transplant that changes nothing
measurable is a bug. Moving the intervention to the step that determines the feature, choosing the crossing
stem's bases to minimize decoys and still calling no oracle, engages properly: decoys fall to $0.0452$ and the
designs carrying any halve from $12$ to $6$. Carry-over then goes the wrong way, $16/177=0.090$
$[0.034,0.157]$ against the baseline's $0.108$, with a paired difference of ${-}0.031$ on the $162$ cells both
arms solved ($p{=}0.33$) and $r_{\mathrm{in}}$ at $35.4/60$.

\emph{A seventh test, distributional rather than scalar.} A reviewer asked the sharpest remaining question: a
difference could live in the \emph{whole distribution} of accepted designs while every hand-picked scalar came
back null. Answering it took three designs and the first two failed their own controls. Unstratified, all six
pre-stated quantities came back significant, the signature of a test measuring composition six times over.
GC-decile stratification did not repair it: the negative control, GC against itself within GC bins, came back
significant. Caliper-matching on GC ($\pm0.01$, no replacement, $399/530$ pairs) finally calibrates it, with the
control at $p{=}0.98$ and matched mean GC $0.4643$ against $0.4640$. \textbf{And a difference survives at
matched composition}: the agent designs' ensemble gap $(\mathrm{MFE}-\mathrm{EFE})/n$ is $0.005$ against our
$0.012$ (KS $D{=}0.466$), with longer unpaired loops ($\max$ $13.9$ against $10.8$). They concentrate the
Boltzmann ensemble on a single fold, and a dominant fold is one any model confirms
(\texttt{d3\_distributional.py}).

That is a mechanism-shaped lead, and the transplant kills it. Minimizing the ensemble gap where it is
determined, using a nested partition function outside every arm's objective and outside the counted allowance,
engages properly ($0.0126\to0.0080$ per design, $60\%$ of the way to the agents' value). Carry-over does not
follow: $15/151=0.099$ $[0.043,0.167]$ against $0.108$, and on the $141$ cells both arms solved the paired
difference is \textbf{exactly zero} ($b{=}7$, $c{=}7$, $p{=}1.0$), with $r_{\mathrm{in}}$ falling $37.0\to30.2$
(\texttt{dominant.sh}, \texttt{dom\_stats.py}). Two independent leads, both composition-controlled, both dead
under intervention.

\begin{figure}[t]\centering
\includegraphics[width=\linewidth]{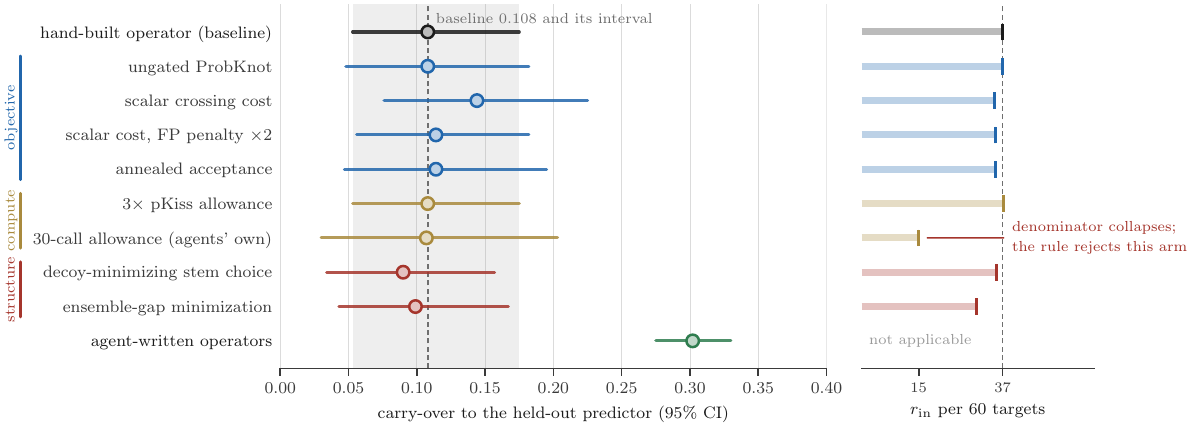}
\caption{\textbf{Nine interventions on the hand-built operator, against the arm they are trying to
reproduce.} Each row is that operator with exactly one thing changed, run deterministically at $5$ seeds.
Carry-over is $P(\text{ShapeKnots confirms}\mid\text{the arm's own pKiss}\wedge\text{ProbKnot predicate
confirmed})$; intervals are percentile bootstraps resampling \emph{targets}, the resampling unit throughout,
because units inside a target are correlated. Three standards are fixed here and invoked by reference
elsewhere. (i) Two conditional rates on different denominators are not a comparison, so every cross-arm claim
is made paired on the cells both arms solved. (ii) An independence-assuming $p$-value on these units is
anticonservative and is not quoted. (iii) A mechanism counts only if carry-over rises \emph{and}
$r_{\mathrm{in}}$ holds, since a rate bought by collapsing its own denominator is not an improvement; the
$30$-call arm is the case that rule exists to catch, and a transplant that changes nothing measurable is a
bug rather than a null. Every interval contains the baseline's and none reaches the agents' lower bound of
$0.275$. Exact counts are in the committed outputs the panels are drawn from
(\texttt{attr\_stats.txt}, \texttt{tinyctl\_stats.txt}, \texttt{ndc\_stats.txt},
\texttt{dom\_stats.txt}).}\label{fig:attrib}
\end{figure}

\paragraph{What seven interventional tests bought, which is not what we set out to buy.} The mechanism hunt
failed: the advantage is robust, is not compute in either direction, is not legible in the source, and is not
reproduced by either property that distinguishes the arms at matched composition. We had been reporting that as
a confession. It is better read as the paper's most portable result, and reading it that way costs no evidence.

\emph{Four times in this study a statistically supported and mechanistically plausible attribution was wrong,
and a control fixed in advance caught it.} (i) \texttt{decoy} differs between the arms at
$p{=}2.1\!\times\!10^{-4}$ and predicts carry-over \emph{within} the hand-built arm at
$p{=}2.6\!\times\!10^{-3}$ after adjusting for composition; biasing toward it moves carry-over from $0.108$ to
$0.107$ and reverses the sign. (ii) The ensemble gap differs at matched composition with $D{=}0.466$ under a
calibrated null and has a clean mechanism, and biasing toward it gives a paired difference of \emph{exactly}
zero. (iii) Two successive designs of the distributional test were confounded by GC, and in both cases the
negative control, not our judgement, said so. (iv) The agent source asks for $15$ restarts $\times$ $3000$
iterations against our $6\times800$, which reads as ``they searched harder''; counting realized calls shows
they spend $4.6$--$10\times$ \emph{fewer}.

The standard way to attribute a capability difference in a learned system is step one of what we did: find a
property that differs between the systems and correlates with the outcome, then name it the mechanism. On this
dataset that procedure would have produced two confident, well-powered, mutually independent false mechanisms,
each with a publishable $p$-value, plus two further false conclusions from a plausible confound and a source
constant. What separated them from real mechanisms was \emph{intervening} on each and re-measuring under a rule
fixed in advance. We therefore report the two-stage gate as a method rather than as our misfortune:
observational screen, then transplant, with the transplant's \emph{engagement} verified before its outcome is
read. Its cost is one extra sweep per candidate; its benefit is that seven candidates are excluded by
measurement, and no exclusion rests on our judgement about which properties matter.

None of this identifies the mechanism, and we do not soften that: a \emph{discovery}-grade claim on this axis
needs a transferable design principle, and one cannot transfer what one cannot identify. So we continue to
claim no attribution, and no transfer either: on the structurally
disjoint pool $9.7/27$ and $1.1/27$ targets error out per cell, leaving a denominator filtered towards easy
targets, and rates there are recorded in App.~\ref{app:invent} and used for nothing. Only two of six models
had no runtime failures; the other four time out on $19$--$32$ of $60$ targets, and that exclusion is itself a
selection we cannot fully audit.

Fairness requires putting these operators on the same axis we used to bound our own result, and they survive it
better than we do. Of the panel operator's $r_{\mathrm{mix}}$ wins $85\%$ fall on zero-margin targets; for the
two replicated agent operators the figures are $76\%$ and $63\%$, so on the $30$ targets a random compatible
sequence never reaches they take $45$ and $54$ wins against our $7$. The advantage is therefore not an artifact
of picking easier targets. It is \emph{larger} on the stratum where search is actually required
(\texttt{runs/cc\_ascent/pool\_margin.py}).

\subsection{What the Target Sets Will and Will Not Bear}\label{sec:exp:pools}

\paragraph{How much of this is the method, and how much is which targets are easy?} Every rate in this
section is a property of a method \emph{and} of a target set, and we had no way to separate them, because the
only difficulty statistic we carried, the crossing-pair count, describes the target's topology and not how hard
it is to hit. So we built a search-free probe. Draw $60$ sequences merely \emph{compatible} with
$T$ (Watson--Crick or GU at every paired position, uniform elsewhere), fold each under pKiss, and record
\begin{equation}\label{eq:margin}
  \Delta_{\mathrm{bp}}(T)\;=\;\min_{x\ \text{compatible}}\ \bigl|\,\mathrm{pKiss}(x)\ \triangle\ P(T)\,\bigr|
  \;\big/\;|P(T)| ,
\end{equation}
how close the best of them lands with no optimizer involved (\texttt{runs/cc\_ascent/pk\_margin.py}).
The $60$ draws are a budget, not a property of the target, and the direction that matters is known: sampling
more can only \emph{lower} $\Delta_{\mathrm{bp}}$, never raise it, so a larger probe would move targets into
the $\Delta_{\mathrm{bp}}{=}0$ stratum and never out of it. The $84\%$ below is therefore a \emph{lower}
bound on how much of $\Delta_{\mathrm{gamer}}$ sits where search is not required, and the residual
${+}0.016$ an \emph{upper} bound on what the operator buys. Both errors run against us, which is why we did
not spend more compute enlarging the probe.

It is a better predictor of our own failures than anything we optimized. $\Delta_{\mathrm{bp}}$ separates the
targets we solve under pKiss from those we do not at $\mathrm{AUC}=0.824$, against $0.780$ for length and
$0.624$ for the crossing-pair count, and it is not a restatement of that count
($r={+}0.124$). The consequence for the headline rate is blunt: \textbf{the median of our $43$
pKiss-solves has $\Delta_{\mathrm{bp}}=0$}. Thirty of the $43$ are hit \emph{exactly} by at least one of $60$
random compatible sequences. For most of what we count as solved, the operator is not what got us there.

\begin{figure}[t]\centering
\includegraphics[width=0.92\linewidth]{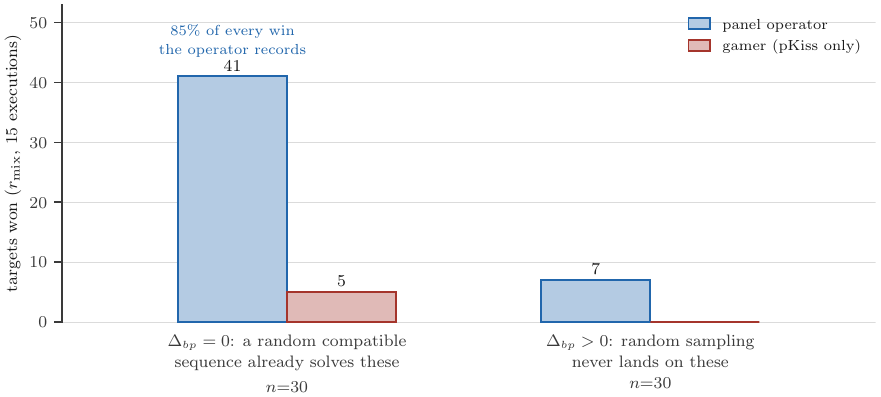}
\caption{\textbf{The largest caveat this paper raises against its own positive result.} Discordant wins over
the $15$ executions, with the $60$ targets split by whether any of $60$ merely \emph{compatible} random
sequences already hits the target exactly. Almost the whole separation between the panel operator and the
single-oracle gamer sits on the half where random sampling has already succeeded, so on most of what we count
as solved the operator is not what got us there. The residual on the other half ($7$ to $0$, exact McNemar $p{=}0.016$) is what
\S\ref{sec:exp:pools} reports as the size of the effect. Strata and counts parsed from
\texttt{pk\_margin.txt}.}\label{fig:margin}
\end{figure}

Stratifying $\Delta_{\mathrm{gamer}}$ by whether the probe can reach the target at all (Fig.~\ref{fig:margin}): on the $30$ targets
with $\Delta_{\mathrm{bp}}=0$ the panel operator takes $41$ discordant wins to the gamer's $5$
(${+}0.080$); on the $30$ with $\Delta_{\mathrm{bp}}>0$ it takes $7$ to $0$ (${+}0.016$, exact McNemar
$p{=}0.016$). So $84\%$ of the headline effect sits where a random sequence already succeeds. Something does
survive where search is genuinely required, and we now report \emph{that} as the size of the effect. This is
the fourth independent measurement pointing the same way: undirected search matches guided search, the
decomposition scaffolding is inert, the objective ablation is unattributable, and now the separation is
concentrated on zero-margin targets. The consistent reading is that the dimension we spent our design effort
on is not the one that decides the outcome. What this does \emph{not} touch is the oracle-gaming measurement:
$43\!\to\!1$ and the paired $2/43$ versus $26/43$ carry-over are statements about adjudication, and whether a
design survives a second predictor is a different question from whether its target was easy to hit.

\paragraph{Which pool is which, and a structurally disjoint evaluation.} ``Held-out'' and ``transfer'' have
been doing too much work, so the pools are named once (Table~\ref{tab:pools}) and those names used throughout.
Two facts belong in the open. The \emph{primary $n{=}60$ pool is a development/evaluation pool}: the
operator's design and every hyperparameter were fixed on it, so its rates are in-development performance. And
the ID-disjoint pool $[126{:}250]$ is disjoint by target \emph{identity}, not by structure: $39/125$ of its
targets are exact structural duplicates of a touched target and $94/125$ lie within $15\%$
(App.~\ref{app:data}). So results there are an ID-held-out evaluation, not a transfer claim.

So we built the split the evidence calls for: cluster all $251$ crossing targets by single linkage at $15\%$
normalized dot-bracket distance and discard every cluster containing \emph{any} target from ids $0$--$125$.
\emph{The first thing this measures is the benchmark.} $251$ targets $\to$ $60$ clusters $\to$ \textbf{$24$
clean ones holding $27$ targets} (minimum realized distance $0.167$): Pseudobase++ is \emph{structurally
exhausted}, and no clean transfer test on it can be well powered at a $\sim\!5\%$ base rate, where the
expected count is $\approx\!1.4/27$. That is a property of the benchmark, not of the method, and we suspect it
constrains other work reporting on it.

\emph{Within that limit the direction survives cleanly, and this is the strongest single piece of evidence for
it.} Over $5$ seeds $\times$ $3$ executions the panel operator is mixed-panel-consistent on $1.87/27$
($6.9\%$) and the gamer on $\mathbf{0.00/27}$: zero in \emph{all fifteen} executions; pooled discordance
$b{=}28$ against $c{=}\mathbf{0}$, exact McNemar $p{=}7.5\!\times\!10^{-9}$, cell-level sign test $15/15$
($p{=}6.1\!\times\!10^{-5}$). The operator's rate on structurally novel targets is \emph{not lower} than on
development ($5.3\%$), and the search-free probe says this pool is measurably harder (median
$\Delta_{\mathrm{bp}}$ $0.176$ against $0.059$; $15\%$ of targets reachable by a random compatible sequence
against $50\%$). Scoring higher on a harder pool is the opposite of the memorization signature a structural
split is built to detect. We deliberately do \emph{not} lean on the interval ($[{+}0.030,{+}0.116]$ at
$n{=}27$), which is underpowered by construction; and the sign test licenses ``a fresh execution on
\emph{these} structurally novel targets'', not ``on a fresh structurally novel target set''. The honest
summary: $\Delta_{\mathrm{gamer}}$'s \emph{sign} now has a structurally disjoint replication, its
\emph{magnitude} remains unestimable on any pool this benchmark can supply.

\begin{table}[t]\centering\small
\setlength{\tabcolsep}{4.5pt}\renewcommand{\arraystretch}{1.15}
\caption{\textbf{The pools, named by their actual contact with development.} We use these names throughout and
retire the earlier label ``strictly untouched'', which invited a structural reading the index split does not
support. The last row is the structure-aware split the evidence calls for; we built and ran it, and its size is a finding about the benchmark. The $\Delta_{\mathrm{bp}}$ column is the pool's median
search-free margin (Eq.~\eqref{eq:margin}, $60$ random compatible sequences per target): the pools are
\emph{not} difficulty-matched, and two readings in this paper turn on that
(\texttt{runs/cc\_ascent/pool\_margin.py}).}\label{tab:pools}
\begin{tabularx}{\linewidth}{@{}>{\raggedright\arraybackslash}p{2.7cm}>{\raggedright\arraybackslash}p{1.3cm}>{\centering\arraybackslash}p{1.15cm}L@{}}
\toprule
\textbf{name used here} & \textbf{ids\,/\,$n$} & \textbf{med.\ $\Delta_{\mathrm{bp}}$} &
\textbf{contact with development, and what it can support}\\
\midrule
development / evaluation pool & $0$--$59$ ($60$) & $0.059$ & operator design and \emph{all} hyperparameters fixed here.
Supports in-development comparison between arms; \textbf{not} a held-out benchmark, and we do not call it one.
Half of it ($30/60$) has $\Delta_{\mathrm{bp}}{=}0$ (Eq.~\eqref{eq:margin}): reachable by a random compatible
sequence, and carrying $84\%$ of $\Delta_{\mathrm{gamer}}$\\
inspected secondary pool & $60$--$109$ ($50$) & --- & inspected during development but not tuned on. Supports a
same-regime replication check\\
invention training pool & $110$--$125$ ($16$) & --- & the agent-invention loop's training set (App.~\ref{app:invent})\\
ID-disjoint follow-up pool & $126$--$250$ ($125$) & $0.000$ & untouched by development, tuning and model feedback, but
\emph{structurally overlapping} with the touched set ($39/125$ exact duplicates, $94/125$ within $15\%$;
App.~\ref{app:data}). Supports an \emph{ID-held-out evaluation}; does \textbf{not} support a transfer claim\\
cross-source pool & bpRNA ($60$) & $\mathbf{0.581}$ & independent source; both our operator and the SOTA
solver score $0/60$, on a pool an order of magnitude harder than development on the search-free
coordinate, which is most of why\\
\midrule
\textbf{structurally disjoint pool} & $24$ clusters ($27$) & $\mathbf{0.176}$ & whole structure clusters
assigned, keeping only
clusters with \emph{no} touched member; every target $>15\%$ from every developed-on target. Supports the
\emph{direction} of $\Delta_{\mathrm{gamer}}$ ($15/15$ executions, gamer $0/27$ throughout;
\S\ref{sec:exp}); too small for its magnitude, and that smallness is a fact about Pseudobase++, whose
$251$ targets contain only $60$ structural clusters. \emph{Harder} than development ($15\%$ of its targets
reachable by a random compatible sequence against $50\%$), which is why the panel operator's higher rate there
counts for more, not less\\
\bottomrule
\end{tabularx}
\end{table}

\subsection{What Every Rate Here Is Bounded By, and the Unbounded Family}\label{sec:exp:stats}
\paragraph{The panel is a low-recall instrument, and that bounds every number above.} On $80$ native bpRNA
pseudoknots, folding each native's \emph{own} sequence, per-predictor crossing recall is pKiss $20\%$, ProbKnot
$4\%$, ShapeKnots $11\%$, and the three-way conjunction $\mathbf{1\%}$ ($1/80$). Two consequences we carry
throughout: a low $r_{\mathrm{mix}}$ is weak evidence of \emph{absence} of capability, and $r_{\mathrm{mix}}$
is a high-stringency consistency measure, not a lower bound on true capability. The predictors also have
\emph{no fixed stringency order}: on natives it is pKiss\,$>$\,ShapeKnots\,$>$\,ProbKnot and on designs it
inverts, because agreement measured on designs is dominated by model-family sharing between the predictor and
the objective that produced the design.

\emph{What can an instrument this insensitive resolve?} Two things must be kept apart. \emph{Resolution} is a
property of $n$, the pairing and the base rate, and recall does not enter: simulating the design actually run
($60$ targets $\times$ $5$ seeds, target-clustered bootstrap, base rate $4/60$; \texttt{resolution.py}) gives
$80\%$ power at an observed gap of $\approx\!6/60$, so it separates $4/60$ from $\approx\!8/60$ and $5/60$ from
$15/60$ at power $1.00$, but not $4/60$ from $5/60$. That places our own headline: $\Delta_{\mathrm{gamer}}$ at
${+}0.067$ sits at $77\%$ power, at the design's resolution limit, which is ``established, narrowly'' in units
a reader can check. \emph{Attenuation} is where recall enters: an observed gap $d$ stands for a true gap
$d/\rho$, so low recall rescales the comparison rather than blurring it and makes every rate conservative. The
rescaling assumes $\rho$ is common to both arms, which holds for $r_{\mathrm{mix}}$ and not for carry-over
whose point is that $\rho$ differs by arm; we use it only for the former.

\emph{Generalization, and its limit.} On a second disjoint Pseudobase pool the regime repeats ($0/50$ floor,
$5/50$ panel-unanimous); on a cross-source bpRNA pool our operator \emph{and} the SOTA solver both reach
$0/60$. The margin probe says most of that is the pool: its median $\Delta_{\mathrm{bp}}$ is $0.581$ against
the development pool's $0.059$, and $5\%$ of its targets are reachable by a random compatible sequence against
$50\%$ there. The negative side generalizes across datasets; the positive claim is limited to the Pseudobase++
distribution. Statistical conventions, the leave-one-target-out check, the quasi-held-out/exploratory split and
the per-oracle cross-source breakdown are in App.~\ref{app:repro} and App.~\ref{app:robust}.

\paragraph{The one axis that is not structurally exhausted, run at scale.} Every magnitude above is
unestimable on Pseudobase++, and the resolution calculation says why in units: the design reaches $80\%$ power
at an observed gap of $\approx\!6/60$, and $\Delta_{\mathrm{gamer}}$ is $4/60$. The parameterized family
$\mathcal L_{\mathrm H}=\{T_{n,m}\}$ has no such ceiling. It is non-context-free for every $m\ge1$, it is the
family the class-level reading is about, and it is unbounded, so power there is a budget decision rather than a
property of a benchmark. An earlier revision ran $16$ of its members at one seed and filed them in an appendix
as not relied on; that was the wrong call. We run the full grid $n,m\in\{3,\dots,12\}$: $100$ targets, five
seeds, deterministic, $1500$ cells (\texttt{family\_grid.py}, \texttt{family\_stats.py}).

The regime reproduces and sharpens. The \CF{} floor is $\mathbf{0/300}$, so the negative side's empirical
counterpart holds at five times the pool size. The gamer reaches $500/500$ under the oracle it optimizes and
$9/500$ under three, on a set where every target is non-\CF{} by construction. The panel operator reaches
$167/500$. Paired within (target, seed) and clustered on target, $\Delta_{\mathrm{gamer}}={+}0.316$
$[{+}0.258,{+}0.378]$, $b{=}159$ against $c{=}1$, exact McNemar $p{=}2.2\!\times\!10^{-46}$. \textbf{That is the
first estimable magnitude in this paper}, five times the Pseudobase++ estimate, which is a statement about the
two target sets and not about the operator.

\begin{figure}[t]\centering
\includegraphics[width=\linewidth]{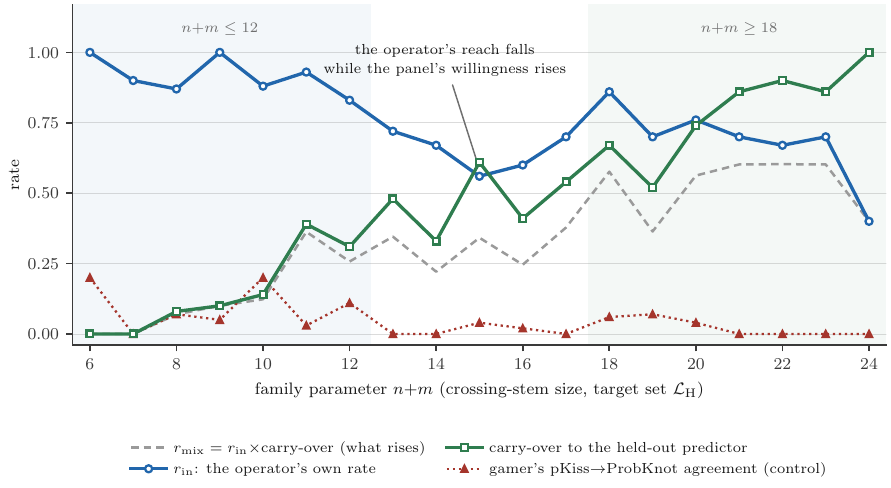}
\caption{\textbf{Why the rising confirmed rate on $\mathcal L_{\mathrm H}$ does not mean what it looks like.}
The panel-unanimous rate $r_{\mathrm{mix}}$ (grey) rises with the family parameter, which reads as a
fixed-description operator holding up as the family grows. It is a product, $r_{\mathrm{mix}}=r_{\mathrm{in}}
\times\text{carry-over}$, and only the second factor rises: the operator's own reach \emph{falls} between the
shaded strata while carry-over to the held-out predictor more than triples, because longer crossing stems are
thermodynamically dominant enough that a predictor which never saw them agrees anyway. The gamer's
pKiss$\to$ProbKnot agreement (red) is the control that blocks the obvious objection, since it moves the
\emph{other} way: ``longer stems make predictors agree'' is not a blanket effect, and satisfying two
predictors on a long stem selects sequences whose crossing stem dominates where satisfying one does not. $100$ targets $\times$ $5$ seeds, deterministic (\texttt{family\_grid.py},
\texttt{family\_stats.py}).}\label{fig:family}
\end{figure}

\emph{The success-vs-size curve, and the diagnostic that stops us mis-reading it.} Panel-unanimity rises with
the family parameter, from $0.186$ at $n{+}m\le12$ to $0.529$ at $n{+}m\ge18$. That reads as a
fixed-description operator holding up as the family grows, and Fig.~\ref{fig:family} refutes it: the
operator's own success rate \emph{falls} with size ($0.900\to0.736$) while carry-over to the held-out
predictor rises sharply ($0.206\to0.718$). What grows is not the procedure's reach but the panel's
willingness to confirm. The honest reading is narrower and still useful: there is no size
cliff in the confirmed rate up to $n{+}m{=}24$, which is what a uniform procedure needs and a memorized table
cannot supply, but the operator does have a size-related limitation and it is visible in $r_{\mathrm{in}}$.
The gamer's pKiss$\to$ProbKnot agreement moves the \emph{other} way with size ($0.093\to0.036$), which rules
out a blanket size effect on predictor agreement and is the clearest support in this paper for the
success-predicate reading.

\emph{Scope, stated beside the numbers rather than under them.} These rates are not comparable to the
Pseudobase++ rates and must not be quoted alongside them. This is one synthetic distribution with a uniform
topology and no natural sequence constraints; it is also the exact topology the operator's decompose--compose
scaffold was built for, a home-field advantage we name rather than discount; and small-size failures may be
undesignability, since we ran no designability reference on this grid. What the family buys is the one thing
Pseudobase++ cannot supply at any $n$: a magnitude, on targets sharing no structural cluster with anything we
developed on. It buys no generalization.

\paragraph{Where this leaves the ascent.} An operator optimizing panel agreement beats the gamer's $1/60$, if
modestly: $3$--$8/60$ across five seeds, $1$--$9/60$ over all $24$ executions, and
$2$--$17/60$ for agent-invented operators (App.~\ref{app:invent}), above a decidable $0/60$ floor. These are
mixed-panel-consistent, autonomous crossings on the Pseudobase++ distribution; we do not claim they are the
first in the literature, only the first we observe under this protocol. On those designs the target is the
pKiss MFE for all $17$ whose suboptimal enumeration completed, dominant by a median $1.2$\,kcal/mol, so the
positive verdict is not a marginal coincidence of pair sets, within the pKiss energy model, and on the
shorter designs only. The remaining gap ($\sim\!5/60$) is itself the finding: robust autonomous pseudoknot
design is hard, and two-oracle agreement generalizes to a third only partially.

\section{Discussion, Related Work, and Conclusion}\label{sec:discussion}

\subsection{Related Work}\label{sec:related}
Our novelty is a conjunction; we position against each line and claim the bundle.

\paragraph{Self-improving agents, and the one-sided statistics that adjudicate them.} Agentic systems propose,
run and refine scientific work end-to-end \citep{coscientist2025,agenticscience2025survey}, self-improving
agents grow skill libraries or edit their scaffolds \citep{voyager2023}, and categorical self-revising
discovery \citep{selfrevising} gates on an MDL criterion. Throughout, evidence for a capability gain is a
benchmark delta or a novelty score: one-sided statistics. Four adjacent lines share that shape.
\emph{Demarcation}: falsifiability demarcates science \citep{popper1959,lakatos1978}, and AI results have been
argued ``not born scientific'' until falsified \citep{falsify2025}; sequential falsification
\citep{huang2025popper} and falsification-centred discovery \citep{liu2024aigs} operationalize this with
deliberately inexact evidence, where our negative side is a decidable theorem. \emph{Compression}: a deep
tradition equates discovery with compression \citep{grunwald2007mdl}, but a shorter description improves fit
without establishing impossibility for a prior repertoire, which decidable non-membership supplies.
\emph{Verifiable rewards}: RLVR and process verification \citep{lightman2023verify} assume a reward cannot be
gamed, yet reward hacking and proxy over-optimization are well documented
\citep{amodei2016concrete,skalse2022,gao2023overopt}; our $43/60\!\to\!1/60$ collapse is an instance.
\emph{Exact-verifier construction search}: FunSearch \citep{funsearch}, AlphaEvolve \citep{alphaevolve},
AlphaTensor \citep{alphatensor} and AlphaDev \citep{alphadev} couple a model to an exact evaluator for scalar
improvement over a fixed objective, with no negative side (AlphaTensor emits upper bounds later beaten, never
a lower bound).

\paragraph{The nearest criterion, placed on the D1--D3 rungs.} \S\ref{sec:intro} states the comparison with
categorical self-revising discovery \citep{selfrevising}; three points complete it. \emph{Mechanism}: an
information criterion is computed from the revised system and the evidence that system was fitted on, so no
predictor is held outside the loop, and a system that improves its own compression cannot be distinguished
from one that adapted to the evidence it is scored on, which is the $43\!\to\!1$ collapse, exactly.
\emph{Reach}: as \citet{selfrevising} state, the residual measures the \emph{additional bits} a revision
needs, not the prior regime's incapacity, so there is no counterpart to our exact exclusion; this is a
difference in reported evidence, not in ambition, and D3 is reached by neither line. \emph{The checkable
part}: on the rungs where evidence can be demanded, this paper supplies a held-out adjudicator, a compute
control, matched baselines, ablations, intervals and its own two unestablished components; the case studies of
the categorical line report neither baselines nor ablations nor significance tests across roughly five runs.
We compare what each line reports, on one axis neither was designed against, and a reader who thinks regime
change is the more important question should discount this accordingly.

\paragraph{Independent-model validation, and the precedent we inherit.} Adjudicating a design with a model
that was not in the generator's loop is standard in protein design, not something we introduce: hallucination
and diffusion pipelines generate with one network and validate by \emph{self-consistency} against a separately
trained structure predictor \citep{anishchenko2021hallucination,watson2023rfdiffusion}. Our positive side
\emph{is} that protocol. What we add is (i) a formal reading of what such a check can license, (ii) a
\emph{measurement} of how far the in-loop and held-out verdicts diverge, the quantity that protocol leaves
unreported, and (iii) a decidable negative side, which those pipelines have no analogue of because their
verifiers have no exactly characterized range. Abstaining on disagreement connects to selective prediction and
conformal abstention, used here as a convention rather than a calibrated guarantee.

\paragraph{Chomsky hierarchy, formal RNA, and design complexity.} The hierarchy has benchmarked which
architectures generalize on which tier \citep{deletang2023}: a static characterization of models, not an
audited gain by a design system. A concurrent line maps an agent's memory architecture to automaton classes
\citep{agentsautomata2025}, classifying the computational substrate; we instead classify the \emph{target
family}. RNA's formal-language structure is classical (SCFGs \citep{rivaseddy2000,durbin1998}, non-context-free
pseudoknots \citep{nebelweinberg}, the MCFG characterization \citep{katosekikasami2006}) and to our knowledge
has not been repurposed as an audit axis for a claimed capability gain. Pseudoknot folding is NP-complete
\citep{lyngso2000pknot}, inverse design NP-hard \citep{bonnet2020hard}, and undesignability admits exact
certificates \citep{zhou2024rigende}; we use these as ground truth and reuse the verifier ladder
\citep{verifiereconomics}, whose measured oracle-dependence is what forces our panel-plus-abstention positive
side.

\subsection{Discussion and Limitations}\label{sec:disc}
\paragraph{What would overturn each claim.} \emph{Negative side}: only a \CF-verified repertoire certified to
realize a crossing target, impossible since $\operatorname{im}\Ocf\subseteq\NESTED$. This does \emph{not} say
the prior policy structurally cannot generate a crossing; that floor is a separate, empirical, fixed-budget
$0/60$. \emph{Positive side}: crossing performance matching the \CF{} floor, or holding only under one lenient
predictor. \emph{Verified non-use}: a packaged solver found on the method path. \emph{Family-level class
claim}: $\mathcal L_{\mathrm H}$ turning out context-free, which it is not, by the MCFG-2 witness. We claim no
individual realized crossing is non-\CF, since a finite set of designs is regular; that claim is about
$\mathcal L_{\mathrm H}$ and never about a design.

\paragraph{What a cross-predictor-consistent result requires, and what we did not get.} Making the success
predicate a conjunction does help, and the help is small and does not survive attribution. At matched caps and
matched access an undirected walk is indistinguishable from the directed operator on both $r_{\mathrm{mix}}$
and carry-over, so ``optimize agreement'' is supported as a \emph{choice of success predicate} and not as a
search objective that earns its keep. Nor did we get generalization: the held-out predictor separates weakly,
cross-source agreement is $0/60$ for our operator and the SOTA solver alike, and carry-over follows shared
model family, not correctness. A stronger result needs a fourth predictor from a different model class, a
second structurally independent corpus, and a compute-matched control. The first two remain unavailable on
this benchmark; the third is now run (\S\ref{sec:exp}), and an arm held to the agents' own realized oracle
volume carries over at $0.107$ where they reach $0.367$.

\paragraph{What we mean by ``new'', and the three rungs.} ``New'' is relative to the prior self, not the base
model, whose latent competence is unauditable and beside the point. Our operator is oracle-guided search over
known moves, and at $1/60$ panel-unanimous we have not established even a robust acquisition on this pool.
But ``discovery-grade'' is not one bar. The evidence separates into rungs reachable independently
(Table~\ref{tab:rungs}), and the agent-written operators reach \textbf{D1} and \textbf{D2} and not
\textbf{D3}. D1 is paired and difficulty-matched on the $951$ units both arms solved under the shared in-loop
predicate: $0.293$ against $0.095$, target-clustered $[{+}0.108,{+}0.297]$, cluster-permutation
$p{=}5\!\times\!10^{-5}$. Two bounds go with it. The claim is about the $2$ of $6$ operators that ran without
failures, and D1 is a between-system association on one pool.

\emph{Why the rung split is not special pleading.} It would be, if we had invented the rungs after seeing
which we cleared. The order is the reverse: D3 is the bar an earlier draft set for itself, and we still fail
it. Splitting it changes no verdict about our own system, which reaches none of the three, since its $0.091$
is what the control is measured \emph{against}. It lets an arm that clears two rungs be reported as clearing
two rungs.

\paragraph{Complementary to the categorical self-revising line.} \S\ref{sec:related} places the two on the
D1--D3 rungs. The complementarity is concrete. A fit-plus-description-length criterion \emph{accepts} our
$43/60$ designs, because they fit the optimized oracle exactly and are produced by a short procedure, and only
$1/60$ survives three predictors. That $42$-target gap is invisible to any statistic computed from the system
and its own oracle, so a one-sided gate and a two-sided audit are layers and not competitors: the categorical
line equips the revision layer, this work the audit layer.

\paragraph{Why this is one paper and not two.} Splitting the audit framework and the $43\!\to\!1$ measurement
from the RNA operator's ablation and attribution would damage both halves. An instrument paper whose only
demonstration is that nothing passes cannot distinguish stringency from insensitivity, which is the argument
of \S\ref{sec:exp}'s positive control, and that control \emph{is} the application. Conversely the operator
results are uninterpretable without the instrument that grades them, since the same $43/60$ reads as an ascent
or as an artifact depending on which adjudicator is asked.

\paragraph{An audit layer for AI4S, and what it degrades to.} The lesson generalizes because the vulnerability
does. Most scientific oracles a self-improving system optimizes against are fallible surrogates: a docking
score, a force field, a learned reward, a structure predictor, a theorem-search heuristic. A closed loop can
improve the oracle's score without improving the capability, as ours drove pKiss to $43/60$ while a
three-predictor panel saw $1/60$. The protocol ports to molecular generation adjudicated against DFT or
assays, to protein design against binding readouts, and to theorem discovery against an exact checker. Where
no decidable range relation exists it degrades mostly to ordinary good practice. Without a verifier whose
image can be characterized, $E_{\mathrm{range}}$ has no analogue, the negative side collapses to an empirical
floor, and what remains is held-out adjudication, matched ablations, a compute control and interval reporting.
Most capability claims in this literature report none of those, and the $43\!\to\!1$ collapse is visible with
them alone, so the residue is not empty. It is not this paper's distinctive claim, though, and ``two-sided,
one side decidable'' should be read as verified in exactly one domain. The exact side ports where an exact
checker exists, as in theorem search or SAT-style constraint classes, and it does not port to a docking score
or a binding assay.

\paragraph{Limitations, in rough order of how much they bound the contribution.} \emph{(i) One benchmark,
structurally exhausted, plus one synthetic family that repairs power and nothing else.} The unbounded
$\mathcal L_{\mathrm H}$ grid gives an estimable magnitude but is a single uniform topology built for the
operator's scaffold, so it speaks to power, not generalization. Every natural-corpus number is on
Pseudobase++, whose structure-aware split has $27$ targets in $24$ clusters, so directions replicate and
magnitudes do not, and the cross-source pool gives $0/60$ for our operator and the SOTA solver alike.
\emph{(ii) The panel is mechanistically narrow, and we measured that.} All three predictors share
nearest-neighbour parameters; carry-over follows that shared family, not correctness (Fig.~\ref{fig:carry});
and pKiss and ShapeKnots form a correlated block ($\kappa=0.673$), so a unanimous verdict is worth less than
three votes. Whether ProbKnot is an independent third voice is undetermined at $n{=}80$. No wet-lab
confirmation is attempted. \emph{(iii) Attribution is partial and much of $\Delta_{\mathrm{gamer}}$ is
target-side}: a search-free probe puts $84\%$ of it on targets a random compatible sequence already solves,
leaving ${+}0.016$ where search is required. \emph{(iv) Most numbers were produced by operators that were not
bit-reproducible}, and our first diagnosis of why was wrong (App.~\ref{app:repro}); the ablation and headline
comparison are re-run bit-reproducibly, the pool-level and cross-source numbers are not. \emph{(v) The
invention step is not fully reproducible}: the operators were written by frontier models behind a routed alias
exposing no version hash, so a re-run may not query the same weights. Every measurement re-executes frozen
source, so nothing reported depends on it, but the claim ``an agent can invent such an operator'' does.
\emph{(vi) The audit does not separate capability from search}, by construction.

\paragraph{The experiments this study owes, in priority order.} \textbf{(1) A second, structurally
independent crossing corpus.} Pseudobase++ holds $251$ targets in $60$ structural clusters, only $24$ disjoint
from our development, so the \emph{magnitude} of every transfer claim on it is unestimable and no reanalysis
fixes that. \textbf{(2) Power}, partly discharged and only where the benchmark allows it. The unbounded family
gives $100$ targets $\times$ $5$ seeds and an estimable ${+}0.316$ $[{+}0.258,{+}0.378]$ (\S\ref{sec:exp});
on the natural corpus the limit is $24$ clusters, not compute. \textbf{(3) A non-thermodynamic predictor},
which is the one we would run first. All three current members share nearest-neighbour parameters, and
\S\ref{sec:oracle} measures the cost, with pKiss and the held-out ShapeKnots forming a correlated block at
$\kappa=0.673$. A learned predictor would break that block rather than add to it, and would test the one thing
our panel structurally cannot, namely whether ``carries over'' means anything beyond ``agrees with a
thermodynamic model''. None is installed in the environment these runs were produced in, and we do not report
an untested integration as an arm. \textbf{(4) An explanation for the positive control}, narrowed rather than
open. Measurement has excluded oracle-call volume in both directions, target selectivity, GC composition,
denominator collapse, four mechanisms legible in the source, and two properties that differ at matched
composition and then reproduce nothing under intervention (\S\ref{sec:exp}). Of the directions put to us,
finer-grained structural features are what those last two already were; information-theoretic measures are
different in kind but underpowered at $185$ designs per arm, which makes them a consequence of~(2); and
accumulating statistics on the unbounded family is the one that dissolves that obstacle. We name no further
candidate we cannot test here. What the difference \emph{is} is the open problem this paper hands on.

\subsection{Conclusion}\label{sec:conclusion}
We proposed an audit that \emph{separately reports} an empirical gain and an exact verifier-range exclusion
instead of collapsing them: on one side a measured before/after difference under a shared adjudicator, on the
other a decidable proof that the prior repertoire's pseudoknot-free verifier could not certify the targets at
all. We fixed the object a pumping lemma can constrain (the realizable-target language under an oracle),
proved a decidable confinement theorem for the negative side, defined the transferable solver-free crossing
capability, separated repertoire-confinement from target-undesignability, and were explicit that the positive
side is model-relative.

\paragraph{The result, stated at the strength the evidence supports.} Under a single pseudoknot oracle an
invented, solver-free operator reaches $43/60$ on crossing targets, above a decidable context-free floor of
$0/60$; under three predictors $1/60$ survives. We report that collapse as \emph{oracle-specific
over-optimization}, and the discriminating evidence is paired: of the operator's $43$ pKiss solves a held-out
predictor confirms $2$, against $26$ for a minimum-free-energy solver on the same targets. That contrast is an
\emph{association} between what an arm optimizes and how far its designs carry, not a causal localization,
since the arms are two methods and not a randomized manipulation of one method's oracle. Re-optimizing for
cross-oracle agreement recovers a small set of mixed-panel-consistent crossings, and that gain is narrow,
shrinks as replication is added, and is not attributable to the search's guidance: matched undirected search
is indistinguishable from the full operator, and a search-free probe puts $84\%$ of it on targets random
sampling already reaches. Carry-over follows shared model family, which bounds what any panel of
thermodynamically related predictors can certify. The exclusion side is exact and concerns the prior
\emph{verifier's} range, never the prior policy's capacity.

\paragraph{And the positive side, at the same strength.} The instrument is not merely insensitive, and
establishing that produced this paper's one clearly positive measurement. Scored by the predictor held out of
every arm's loop, and compared on the targets they and our operator both solved under the same in-loop bar,
the frozen agent-written operators carry over at $0.293$ against our $0.095$ ($n{=}951$ paired units,
target-clustered $[{+}0.108,{+}0.297]$, cluster-permutation $p{=}5\!\times\!10^{-5}$), while spending
$4.6$--$10\times$ \emph{fewer} oracle calls per target. On the rungs of Table~\ref{tab:rungs} that is D1 and
D2 established and D3 open: seven candidate mechanisms transplant into ours without moving the statistic,
including the two that differ at matched composition, so what the difference \emph{is} remains unidentified
and we do not call it a discovery. It is bounded by the two of six operators that ran without failures and by
a single pool. An unexplained, compute-controlled, held-out-verified difference between an agent-written
procedure and a human-written one is a more useful thing to put on the record than either a flat null or a
claim the evidence does not carry.

\appendix
\section{Proofs}\label{app:proofs}
\begin{proof}[Proof of Proposition~\ref{prop:confine}]
The pseudoknot-free oracle \Ocf{} returns a minimum-free-energy secondary structure in the pseudoknot-free
ensemble; its representation admits no crossing pair, so $\Ocf(x)\in\NESTED$ for every sequence $x$. Hence
for any policy $\pi$, every $T\in\Rze{\Ocf}{\pi}$ satisfies $T=\Ocf(\pi(T))\in\NESTED$, i.e.\
$\Rze{\Ocf}{\pi}\subseteq\NESTED$. If $T\in\PK$ then $T$ contains a crossing pair, so $T\notin\NESTED$ and
therefore $T\notin\Rze{\Ocf}{\pi}$. Testing $T\in\PK$ (a crossing pair among $|P|$ pairs) is $O(|P|^2)$ and
independent of any search over sequences; the non-membership is thus decided offline.
\end{proof}

\smallskip\noindent\textbf{Decision procedure for the negative side (and its complexity).} Given $T$ with
pair set $P(T)$, the crossing test returns \textsc{crossing} iff $\exists\,(i,j),(k,l)\in P(T)$ with
$i<k<j<l$. The naive double loop is $O(|P|^2)$; sorting $P(T)$ by left endpoint and sweeping with a stack of
open right-endpoints (pushing $j$ at $i$, popping at $j$, flagging a violation if a new pair opens inside an
open interval but closes outside it) decides it in $O(|P|\log|P|)$. The verdict for the entire held-out pool
is computed \emph{once, offline}, before any sequence is designed. No folding, search, or evaluation over
$\Sigma^\ast$ is involved. This is the precise sense in which the negative side is a theorem discharged
before any run (contrast the positive side, which requires folding each committed design).
\noindent\textbf{Class boundary (invoked), and the grammar witness (worked).} That $\NESTED$ is context-free,
and that the crossing families we treat are non-context-free yet admit MCFG grammars of dimension $\le2$, are
established \citep{rivaseddy2000,nebelweinberg,katosekikasami2006}; we do not reproduce their proofs, and we
make no claim that every pseudoknot topology is captured at dimension~2. We do make the witness explicit for
the family the operator targets.

\smallskip\noindent\emph{The parameterized crossing family.} Let
\begin{equation}\label{eq:family}
  \mathcal L_{\mathrm H}=\bigl\{\,\texttt{(}^{\,n}\,\ell\,\texttt{[}^{\,m}\,\ell\,\texttt{)}^{\,n}\,\ell\,\texttt{]}^{\,m}\ :\ n,m\ge 1\,\bigr\},
\end{equation}
the canonical H-type pseudoknots ($\ell$ a fixed unpaired linker): a nested round stem of $n$ pairs crossed by
a square stem of $m$ pairs. Its pairing skeleton is the interleaving $\{a^{n}b^{m}c^{n}d^{m}\}$ under
$a{=}\texttt{(},\,b{=}\texttt{[},\,c{=}\texttt{)},\,d{=}\texttt{]}$.

\smallskip\noindent\emph{Non-context-freeness, and what is invoked versus verified here.} The general results
are invoked, not re-proved: that crossing arcs are not context-free is \citet{nebelweinberg}, and that
pseudoknot classes sit in low-dimension MCFG subclasses is \citet{katosekikasami2006}. What we do verify is the
witness for \emph{our} family, since we constructed it and no citation covers it: $\mathcal L_{\mathrm H}$
carries the two equalities $\#a{=}\#c$ and $\#b{=}\#d$ of the count language, and no
$uvwxy$ decomposition with $|vwx|\le p$ can span more than two of four blocks, so pumping breaks one equality
\citep{ogden1968,hopcroftullman1979}; hence no single-stack generator realizes the family uniformly. This
matters for one thing only, and it is the thing \S\ref{sec:exp}'s grid rests on: it is what licenses calling a
synthetic ladder a test of the same capability rather than an unrelated construction.

\smallskip\noindent\emph{An MCFG of dimension~2 that does.} Two dimension-2 nonterminals each emit a
\emph{pair} of substrings (the two ends of one stem), and the start rule interleaves them:
\begin{align}\label{eq:mcfg}
  R(a\,x_1,\ c\,x_2) &\leftarrow R(x_1,x_2); &  R(\varepsilon,\varepsilon) &\leftarrow \ \ \text{(base)};
  \nonumber\\
  Q(b\,y_1,\ d\,y_2) &\leftarrow Q(y_1,y_2); &  Q(\varepsilon,\varepsilon) &\leftarrow \ \ \text{(base)};\\
  S(x_1\,\ell\,y_1\,\ell\,x_2\,\ell\,y_2) &\leftarrow R(x_1,x_2),\,Q(y_1,y_2). && \nonumber
\end{align}
$R$ derives the round-stem pair $(a^n,c^n)$, $Q$ the square-stem pair $(b^m,d^m)$, and $S$ places them at four
interleaved slots, exactly the crossing a single yield cannot produce. Dimension~$2$ (each nonterminal
coordinates two substrings) is sufficient (the grammar above) and necessary for this family (dimension~$1$ is
exactly \CF, excluded by the pumping argument), i.e.\ the minimal machinery above
\CF{} \citep{seki1991mcfg,katosekikasami2006}.

\smallskip\noindent\emph{Definition, result, and inference, kept apart.} The sentence ``realizing
$\mathcal L_{\mathrm H}$ uniformly is an \MCF{} capability'' is a \textbf{definition}, and it quantifies over
\emph{all} $(n,m)$: a finite-description operator that, for \emph{every} $(n,m)$, returns a sequence whose fold
is $S$'s yield would stand in for the grammar itself, which no CF generator can. Our \textbf{experimental
result} is strictly weaker and finite: the frozen operator succeeds on the tested ladder
$n{=}m\in\{5,\dots,14\}$ and selected asymmetric $(n,m)$ up to $(14,3)$, at parameters $3$--$4\times$ beyond
those it was fixed on (\S\ref{sec:exp}). The \textbf{inference} between them is therefore an extrapolation, and
we label it as such: the result is \emph{consistent with} uniform coverage of the family and rules out a
lookup table of the development targets, but it does not establish the definition's universally quantified
claim. A finite program with length-parameterized templates would pass the same ladder without realizing every
member, and the three asymmetric-extreme failures show the operator is not in fact uniform at the tested
budget. We never write that the operator realizes $\mathcal L_{\mathrm H}$ uniformly; we write that it
transfers to tested members well beyond its development range.

\smallskip\noindent\emph{Worked derivation ($n{=}m{=}2$, linker $\ell{=}\texttt{....}$).} Eq.~\eqref{eq:mcfg}
derives $T_{2,2}=\texttt{((....[[....))....]]}$ in four steps: two applications each
build the stem pairs $(aa,cc)$ and $(bb,dd)$, then the start rule interleaves them across the four slots
separated by $\ell$. A CFG cannot reproduce this table: its single-yield nonterminals cannot carry the two
ends of a stem to separated, interleaved positions.

\smallskip\noindent\textbf{The pseudoknot classes form an MCFG-dimension hierarchy.} The dimension-2 grammar
above captures the H-type family (one crossing stem). Richer pseudoknots (kissing-hairpin interactions,
complex $\ge3$-stem entanglements) are widely expected to require higher MCFG fan-out, but the minimal fan-out
for each specific class, and the inclusion/incomparability relations among pseudoknot classes as formal
languages, are delicate questions we do not settle here \citep{rivaseddy2000,nebelweinberg,katosekikasami2006}.
We therefore assert the dimension-2 witness \emph{only} for the H-type family we design against and make
\emph{no} per-class dimension-hierarchy claim: in particular we do not claim that MCFG dimension grows with
the crossing-stem count. The grades order targets by a structural difficulty statistic (crossing-stem count), so the empirical frontier organizes by grade, without
asserting that statistic is a formal grammar grade. The $\CF\!\to\!\MCF$ boundary, the object of this
paper, is crossed already at the first tier (one crossing stem); higher tiers are a finer hierarchy
\emph{within} \MCF{} that we do not attempt to climb.

\begin{table}[h]\centering\small
\setlength{\tabcolsep}{5pt}\renewcommand{\arraystretch}{1.2}
\caption{The oracles as maps $\Sigma^\ast\!\to\mathcal T$: folding model, range, and pseudoknot subclass
modeled.}\label{tab:oracles}
\begin{tabularx}{\linewidth}{@{}lLll@{}}
\toprule
oracle & folding model & range $\operatorname{im}\Omega$ & subclass\\
\midrule
$\Ocf$ (ViennaRNA) & Turner-2004 MFE, pseudoknot-free ensemble & $\subseteq\NESTED$ & none (nested)\\
pKiss & heuristic min-energy folding with knot motifs \citep{janssen2015pkiss} & $\subseteq\mathcal T$ & H-type, kissing hairpin\\
ProbKnot & max-expected-accuracy pairs from a partition function \citep{bellaousov2010probknot} & $\subseteq\mathcal T$ & pairs above a probability cut\\
ShapeKnots & MFE under SHAPE restraints (here run \emph{de novo}) \citep{hajdin2013shapeknots} & $\subseteq\mathcal T$ & low-order pseudoknots\\
\bottomrule
\end{tabularx}
\end{table}

\smallskip\noindent\textbf{The folding oracles (models and output spaces).} Table~\ref{tab:oracles} fixes each
oracle as a map $\Sigma^\ast\!\to\mathcal T$ with an explicit range. $\operatorname{im}\Ocf\subseteq\NESTED$ (generally strict, as undesignable nested structures are unrealizable) is exactly
what Prop.~\ref{prop:confine} (via Lemma~\ref{lem:bound}) requires; the three panel predictors admit crossing
outputs but model \emph{different} pseudoknot subclasses under different objectives, which is the mechanistic source
of the $[15,60]\%$ cross-oracle envelope of \S\ref{sec:oracle} and hence of the abstention rule.

\section{Reproducibility and Protocol Details}\label{app:repro}
Oracles are computed and replayable under a fixed configuration; harness, target lists, intrinsic grades,
and per-target JSON are released. The \CF{} ladder is ViennaRNA (Turner~2004, $37^\circ$C); the pseudoknot
panel is pKiss (reference) with ProbKnot and ShapeKnots for the oracle-robustness envelope and abstention.
The sealed-commitment step records a hash and timestamp of each prediction before the held-out adjudicator is run; the
$\pm$solver ablation toggles the external pseudoknot solver while holding the agent fixed; the minimal-hint
sweep (E3), a planned minimal-hint probe, parameterizes how much of the target's crossing structure is
revealed and reports the smallest hint under which the operator crosses. Every measured number in this manuscript
enters only from such a logged run (\S\ref{sec:exp}); the harness, target lists, intrinsic grades, and
per-target JSON are released for replay.

\smallskip\noindent\textbf{Software versions, commands, and release.} The oracles are pinned:
ViennaRNA~$2.7.2$ (Turner~2004 parameters, $37^\circ$C) for the \CF{} ladder and inverse folding; pKiss~$2.3.0$
(Bellman's GAP compiler \texttt{bellmans-gapc}~2024.01.12, bioconda) as the reference pseudoknot oracle;
RNAstructure~$6.6$ for ProbKnot and ShapeKnots (the latter run \emph{de novo}, no SHAPE constraints); all under
Python~$3.10$. Each oracle runs in its own pinned conda environment; the harness records the exact command line
and the raw predictor output for every call. Design seeds are the integers $0$--$4$.

\smallskip\noindent\textbf{Execution-level nondeterminism at a fixed seed, and the fix.} Two independent runs
of the same arm at the same nominal seed gave different counts. Our first diagnosis, subprocess timing under a
wall-clock cap, was incomplete: the operators also call \textsc{ViennaRNA}'s \texttt{inverse\_fold}, a
stochastic search on \textsc{ViennaRNA}'s \emph{own} global RNG, which is invisible to
\texttt{random.Random(seed)} and advances from call to call. The operators now expose
\texttt{deterministic=True}, which replaces both wall-clock caps with counted ones (iterations plus a $150$-call
pKiss allowance, above the $142.7$ the hungriest arm realized under the clock) and seeds \textsc{ViennaRNA}'s
generator per attempt. Validated against a $96$-process synthetic load: $4/4$ and $4/4$ targets bit-identical
for the two operators against $1/4$ and $0/4$ with the default. Independently confirmed by
\texttt{family\_grid\_par.py}, which re-runs a serially-produced seed under $50$-way parallelism and finds
per-target verdicts identical, so a $(\text{target},\text{seed})$ pair fixes the run regardless of what else is
executing.

\smallskip\noindent\textbf{How large the noise was, and why the paired reading survived it.} Pooling all six
independent executions per nominal seed (\texttt{exec\_spread.py}), the panel operator's $r_{\mathrm{mix}}$ has
a pooled within-seed s.d.\ of $\mathbf{2.04/60}$, the \emph{same order} as the execution-replicated method
effect of $2.87/60$: enough to make an unpaired comparison of two separately-reported counts meaningless, not
enough to swamp the paired one ($4.3$ standard errors from zero). The gamer is nearly deterministic ($0$--$1$, mean spread $0.3/60$), so pairing rescues the comparison though neither count is
individually stable. Two mechanisms drove the spread: subprocess timing jitter, and \emph{sweep composition},
since the ablation sweep ran six arms per worker and the paired sweep two, so the same nominal configuration got
through $87.3$ versus $130.7$ pKiss calls per target. A wall-clock-budgeted operator's result is thus a function
of what else was running beside it, which no seed records. Four consequences we state rather than hide.
Per-seed counts in Table~\ref{tab:sep} are single-execution draws, to be read with at least $\pm2/60$
granularity. A bootstrap over targets and seeds only understates uncertainty by one level, so the wall-clock
intervals quoted in \S\ref{sec:exp} use a three-level bootstrap. Every such comparison is paired \emph{within}
an execution, so this variation inflates intervals but cannot masquerade as a method effect. And no claim should
be read off a single count.

\smallskip\noindent\textbf{\CF{} floor measurement.} The floor is the \CF-confined REMC optimizer run on the
same targets under the same adjudicator, at the same nominal caps as the audited arms. It scores $0/60$ on
crossing targets, which is an empirical result at a finite budget and not a restatement of
Prop.~\ref{prop:confine}: the proposition bounds what the pseudoknot-free \emph{verifier} can certify, the
floor records what the prior \emph{policy} actually produced.

\smallskip\noindent\textbf{The three tests behind Table~\ref{tab:sep}.} Let $u^{(A)}_T\in\{0,1\}$ indicate
whether arm $A$ realizes held-out target $T$ panel-unanimously, over a pool of size $N$. \emph{Percentile
bootstrap}: a $95\%$ CI for a rate resamples the $N$ targets with replacement $B{=}20{,}000$ times.
\emph{Paired McNemar} (same targets, arms $A$ vs.\ gamer $G$): with
$b=\#\{T:u^{(A)}_T{=}1,u^{(G)}_T{=}0\}$ and $c=\#\{T:u^{(A)}_T{=}0,u^{(G)}_T{=}1\}$,
\begin{equation}\label{eq:mcnemar}
  p_{\mathrm{McN}}=\min\!\Big\{1,\ 2\!\!\sum_{k=0}^{\min(b,c)}\!\binom{b+c}{k}2^{-(b+c)}\Big\} .
\end{equation}
\emph{Fixed-baseline binomial}: against the gamer rate $p_0{=}1/60$, the one-sided $p$-value for $k$ successes
in $N$ trials is $\sum_{j=k}^{N}\binom{N}{j}p_0^{\,j}(1-p_0)^{N-j}$. The analysis is
\texttt{stats\_separation.py}, run offline on the saved per-target adjudications.

\smallskip\noindent\textbf{Statistical conventions.} Rates use a fixed denominator of the full pool: a target
on which the three predictors disagree counts as not-solved, so abstention withholds a positive claim rather
than dropping the target. Every comparison carries an effect size, never a $p$-value alone. Three levels of
variation matter (target, nominal seed, execution at a fixed seed) and are not interchangeable, so we treat the
\emph{deepest} interval run for a comparison as primary and label every interval with what was resampled. With
five nominal seeds the level-1 resampling distribution has few distinct states, so these are \emph{sensitivity
intervals} with no coverage claim, which is why the most robust statement about $\Delta_{\mathrm{gamer}}$ uses
no interval at all: the operator is strictly higher in $14$ of $15$ independent executions, and $15$ of $15$ on
the structurally disjoint pool.

\smallskip\noindent\textbf{Is the effect carried by a few lucky targets?} $r_{\mathrm{mix}}$ is a low-base-rate
conjunction, so we checked. Deleting each target in turn moves the paired $\Delta$ within
$[{+}0.068,{+}0.081]$ around ${+}0.080$; the most influential single target moves it by $-0.012$. The successes
do concentrate, $25$ target-seed wins on $13$ distinct targets with the top three accounting for $44\%$, which we report instead of treating as $25$ independent wins; but they are not one or two targets, and their mean
crossing-pair count ($5.46$) matches the pool ($5.48$). Targets are also not independent draws: single-linkage
clustering at $15\%$ collapses the $60$ into $18$ clusters, one holding $29$, so we resample clusters, which
widens the interval from $[{+}0.040,{+}0.127]$ to $[{+}0.024,{+}0.149]$ and still clears zero.

\smallskip\noindent\textbf{Quasi-held-out versus exploratory, stated once.} We do not use the word
``confirmatory'': we hold no timestamped pre-registration, and a confirmatory held-out analysis would need the
evaluation pool disjoint in the respect that matters for a structure-conditioned operator, namely structure.
\emph{Quasi-held-out}: the paired $r_{\mathrm{mix}}$ separation on the $125$ ID-disjoint targets.
\emph{Exploratory}: the ShapeKnots-only comparison, the crossing-pair-count association, the budget sweep, the
six-model invention draws and the cross-source pool. No family-wise correction is applied, since these answer
different questions on overlapping data; no exploratory $p$-value here is a test, and ``significant in $k/3$
seeds'' is a descriptive replication count.

\smallskip\noindent\textbf{The multi-level bootstrap, specified in full.} \emph{Estimand}: $\Delta$, the
mean over (target, seed, execution) cells of the paired indicator difference between two arms.
\emph{Pairing}: the unit is the paired difference formed \emph{inside} one cell, so both arms carry identical
resampling weights at every level. \emph{Level 1}: draw nominal seeds with replacement. \emph{Level 2}:
executions are nested within seed; we report \emph{full-nest} (three per drawn seed, matching the design run,
primary) and \emph{one-exec} (one per drawn seed, wider, a conservative sensitivity). \emph{Level 3}: draw
targets with replacement, \emph{synchronously} for every drawn cell and both arms, reported with the target as
the unit and with the structural cluster as the unit (single linkage at $15\%$ normalized dot-bracket
distance), since the pool clusters (App.~\ref{app:data}). \emph{Interval}: percentile, $2.5$/$97.5$ quantiles
of $B{=}20{,}000$ replicates. \emph{Why no coverage claim}: with three nominal seeds the level-1 resampling
distribution has few distinct states, so these are \emph{sensitivity intervals} and we do not use them as
tests. \emph{Implementation}: Python \texttt{random} seeded at $20260726$ in \texttt{pabrep\_stats.py},
\texttt{gvp\_stats.py} and \texttt{cluster\_boot.py}.

\smallskip\noindent\textbf{Is the index split a \emph{structural} split? (An audit, with a partly negative
answer.)} Splitting by target index guarantees no target is reused, but it does not guarantee the untouched
targets are structurally distant from the touched ones, and if they were near-duplicates, ``held-out
transfer'' would reduce to template-neighbour generalization. We measured it
(\texttt{runs/cc\_ascent/\allowbreak split\_audit.py}): for each of the $125$ ID-disjoint targets
(ids
$126$--$250$) we found its nearest neighbour among the $126$ touched targets (development $[0{:}60]$, inspected
pool $[60{:}110]$, agent-invention training $[110{:}126]$) under normalized dot-bracket edit distance.
The result is a genuine caveat: the nearest-neighbour distance has median $0.077$, and
\textbf{$39/125$ ID-disjoint targets are \emph{exact} structural duplicates} of a touched target, with $94/125$
within $15\%$. So the index split is \emph{not} a structural split, and it cannot carry a transfer claim
(Table~\ref{tab:pools}).

\smallskip\noindent\emph{Two observations that bound the problem without disposing of it.} First, structural
duplication is not sequence homology here: the IUPAC restraint strings of those nearest-neighbour pairs still
differ by a median $0.329$ normalized edit distance (minimum $0.243$), so a duplicated \emph{target} does not
come with a duplicated \emph{sequence} to copy. We are careful about what this does and does not exclude. The
operator's input is the target \emph{structure}, so a differing natural sequence rules out only the narrowest leakage channel, copying a neighbour's sequence, and leaves untouched every channel that runs through the
target: objective design, mutation heuristics, template construction, the stopping criterion, error weighting,
and researcher-level choices all saw dot-bracket strings that recur in the ID-disjoint pool. ``Different
sequence'' is therefore not evidence against template leakage.

\smallskip\noindent Second, success does not visibly concentrate on the duplicates, but the sample cannot
establish that it does not (\texttt{runs/cc\_ascent/cluster\_boot.py}). Pooled over three seeds, the panel
operator's $r_{\mathrm{mix}}$ is $2/117$ ($0.017$) on the $39$ \emph{exact} duplicates, $19/165$ ($0.115$) on
the $55$ near-but-not-exact neighbours, and $9/93$ ($0.097$) on the $31$ structurally distant targets. So if
anything the exact duplicates are the \emph{worst} stratum, the opposite of the memorization signature. Taking
near and exact together against distant, and making the \emph{structural cluster} the resampling unit inside
each stratum ($18$ clusters among the $94$ near-or-exact targets, $28$ among the $31$ distant ones),
$\Delta_{\mathrm{duplicate}}=p_{\mathrm{distant}}-p_{\mathrm{near}}={+}0.022$ with a cluster bootstrap $95\%$
interval of $[{-}0.048,{+}0.108]$. That interval excludes neither a substantial positive nor a substantial
negative difference, and the strata are not difficulty-matched by design (mean crossing-pair counts $5.41$,
$5.35$, $5.55$) though they happen to be close. The correct statement is therefore the narrow one: \emph{we did
not observe higher success on the near-duplicate subset, but the structurally non-disjoint split prevents a
clean transfer claim}, and not that the failure mode is absent. This pool therefore supports an ID-held-out
evaluation only.

\smallskip\noindent\textbf{The remedy, run.} The right design assigns whole structure clusters rather than
individual targets, and \S\ref{sec:exp} reports it: clustering all $251$ targets at the
same $15\%$ threshold and keeping only clusters with no touched member yields $24$ clusters holding $27$
targets, verified at minimum distance $0.167$ from the touched set. Two results. First, about the
\emph{benchmark}: $251$ crossing targets contain only $60$ structural clusters, of which $36$ touch our
development, so Pseudobase++ cannot furnish a well-powered structurally clean transfer test for an effect at a
$\sim\!5\%$ base rate, a constraint we suspect applies to other work reporting on this benchmark and which we
therefore state plainly. Second, about the \emph{effect}: on that pool the panel operator is
mixed-panel-consistent on $1.87/27$ ($6.9\%$, i.e.\ \emph{above} its $5.3\%$ development-pool rate) against the
gamer's $0.00/27$ in all fifteen executions, $b{=}28$ against $c{=}0$. The near-duplicate concern the audit
raised is thus tested directly and does not materialize. But on $27$ targets that settles direction, not size.

\section{Robustness Checks, the Adjudicator's Reference Class, and Materials}\label{app:support}

\subsection{Robustness checks in full}\label{app:robust}
\S\ref{sec:exp} states the two checks the argument uses. This appendix adds the three details it omits.

\smallskip\noindent\textbf{The skeleton negative control, and why we do not call it specificity.} Folding each
target's \CF{} skeleton under the panel, the panel credits the crossing on $0/60$ skeletons while recovering the
nested skeleton on $57/60$, whereas a naive $A_{\mathrm{nest}}$ evaluator over-accepts them. Skeletons are one
narrow negative class, and a real negative distribution would also contain near-miss designs, mispaired
variants and \emph{alternative} pseudoknot topologies on which the panel is untested. The defensible statement
is exactly \emph{no observed false positives on the skeleton negative control}; the false-positive rate is uncalibrated, so we treat $r_{\mathrm{mix}}$ as a high-stringency consistency measure and not a bound.

\smallskip\noindent\textbf{The cross-source $0/60$, per oracle.} It does not let us blame the strictest
predictor alone: DesiRNA forms $9/60$ crossings under pKiss but only $3/60$ survive ProbKnot and $0/60$ pass
ShapeKnots, so ProbKnot and not just de-novo ShapeKnots already rejects most, for the SOTA solver as much as for
ours (our operator reaches $1$--$2/60$ under pKiss\,$\wedge$\,ProbKnot across two runs of the same arm, and
$0/60$ either way). A larger repair budget ($240$\,s, $2500$ iterations) does not change it.

\smallskip\noindent\textbf{Budget, which established nothing.} Over $\{60,180,300\}$\,s on a fixed $20$-target
subset, panel-unanimous is $\{1,3,1\}/20$: no clean budget trend, dominated by search variance, consistent with
oracle-guided search rather than a budget-monotone capability.

\paragraph{Growing-parameter extrapolation: the earlier $16$-target version.} \emph{Superseded by the
$100$-target grid of \S\ref{sec:exp}, and kept only so the record shows what was run first.} A finite target set cannot
establish a class ascent, so we tested the strictly weaker property a finite set cannot exhibit at all:
freeze the operator on small stems ($n{=}m\le4$) and apply it unchanged to a larger unseen ladder of
$\mathcal L_{\mathrm H}$ (to $n{=}m{=}14$, length $68$\,nt). The \CF{} floor is $0/16$ throughout and the
frozen operator realizes $13/16$ under pKiss$\,\wedge\,$ProbKnot including the whole diagonal
$n{=}m{=}5\ldots14$, but strict three-oracle unanimity is $6/16$, and a finite program with
length-parameterized templates could produce exactly this. The result is therefore consistent with
unbounded-family coverage and equally consistent with a template, does not distinguish an oracle-agreement
ceiling from a search-budget limit, and no claim in this paper rests on it. We record it so that its absence
from the main text is not selective reporting.

\subsection{Adjudicator choice, abstention scoring, and what ShapeKnots costs}\label{app:adjud}
The choice of held-out adjudicator deserves scrutiny because the whole positive side rests on it. ShapeKnots is
the only crossing-aware predictor in the panel that neither operator optimizes, so it is the only one for which
``held out'' is true by construction rather than by convention. It also runs here in an \emph{unvalidated
de-novo mode}: the method was designed to fold under experimental SHAPE constraints
\citep{hajdin2013shapeknots} and we supply none, so we are using it outside the regime it was calibrated for
and say so wherever its numbers appear.

What that costs is stringency we cannot separate from strictness. Given its own reference class the judge is
not implausibly harsh: it recovers $11.3\%$ $[5.3,20.3]$ of native pseudoknots from their own sequences
(\texttt{heldout\_ceiling.py}) and confirms our panel operator's designs at $8.0\%$, about $70\%$ of that
ceiling, against the gamer's $3.7\%$. The low absolute rates are therefore substantially a property of the
adjudicator, and the \emph{ratio} between arms is the part we read.

\emph{Three caveats keep that an orientation rather than a corrected estimate.} The pooled interval ignores
that the same $125$ targets recur across seeds, so it is \textbf{too narrow}; the paired seed-level test is
unchanged and still separates in only one of three seeds; and natives and designs are different populations,
ours being $372/375$ H-type against $75/80$ natives carrying more than five crossing pairs. A sibling line
found precisely the unstratified version of this comparison to be confounded by topology, and our native set
carries no H-type/kissing label and only five low-crossing members, so we cannot stratify it. The
ceiling-relative reading therefore changes what $r_{\mathrm{ho}}$'s absolute size means; it does not change
its verdict.

\emph{Abstention scoring.} A target on which the three predictors disagree counts as not-solved against the
full-$60$ denominator, so abstention withholds a positive claim rather than dropping the target, and every
reported rate is conservative in the same direction. That is why $r_{\mathrm{mix}}$ is a high-stringency
consistency measure rather than a selective-prediction rate: we never report an accuracy conditioned on the
panel having agreed.

\section{Datasets, Intrinsic Grades, and Materials}\label{app:data}
Every target's difficulty \emph{grade} is a property of the target fixed before design: its \emph{formal-family
type} (nested vs.\ crossing) and, within crossing, its \emph{crossing-pair count} $g$, the number of base pairs
that cross another, computed in $O(|P|^2)$ from the dot-bracket. Table~\ref{tab:appdata} lists the pools.
\begin{table}[h]\centering\small
\setlength{\tabcolsep}{4.5pt}\renewcommand{\arraystretch}{1.15}
\caption{Datasets. Grade $g$ = crossing-pair count. EteRNA100 is the nested (\CF) competence baseline; the
three crossing pools are the crossing capability sets; see Table~\ref{tab:pools} for what each
supports.}\label{tab:appdata}
\begin{tabularx}{\linewidth}{@{}lLlL@{}}
\toprule
pool & source & $n$ / length & grade $g$ distribution\\
\midrule
held-out (primary) & Pseudobase++ \citep{rnainvbench} idx $0$--$59$ & $60$ / $21$--$137$\,nt & $g{:}n$ ---
 $3{:}4$, $4{:}6$, $5{:}22$, $6{:}18$, $7{:}7$, $8{:}1$, $9{:}2$\\
second pool & Pseudobase++ idx $60$--$109$ & $50$ / $21$--$137$\,nt & disjoint from primary\\
cross-source & RNAInvBench \emph{inverse\_rna\_folding} (bpRNA/ArchiveII) & $60$ / $21$--$120$\,nt & $g{:}n$ ---
 $1{:}4$, $4{:}13$, $5{:}6$, $6{:}16$, $7{:}8$, $8{:}5$, $9{:}3$, $10{:}3$, $12{:}1$, $13{:}1$\\
family $\mathcal L_{\mathrm H}$ & synthetic H-type (Eq.~\eqref{eq:family}) & $10$ / $24$--$44$\,nt & $n{=}m\in\{3,\dots,8\}$ $+$ asym\\
EteRNA100-V2 & \citep{koodli2021eterna100v2} & $100$ / $12$--$400$\,nt & nested (\CF)\\
\bottomrule
\end{tabularx}
\end{table}
The cross-source pool is reconstructed to crossing dot-bracket from the pair list (crossing pairs are encoded
as non-type-$0$; a greedy multi-bracket assignment yields $\le4$ bracket levels); multiplet structures are
excluded so each base is in $\le1$ pair (\texttt{runs/cc\_ascent/bprna\_pk.py}). The two pools overlap on the crossing-pair count, so the cross-source drop
(\S\ref{sec:exp}) is not explained by that one covariate. \emph{We do not call the comparison
``grade-fair''}: overlap on a single prespecified structural covariate is not distribution matching. The two
pools are \emph{not} matched on length, pseudoknot subtype, loop-length distribution, base-pair composition,
motif family, source database, or sequence homology, any of which could drive the drop. The cross-source
result should therefore be read as a distribution-shift observation with the shift only partly characterized.

\section{The Operator-Invention Protocol}\label{app:invent}
An LLM agent (GLM-5.2, temperature $0.4$, no output cap) is given a documented \emph{primitive API} and asked to
write one function \texttt{design(target,seed)}; it may not import anything or call an external pseudoknot
solver. The primitives are \texttt{fold\_cf} (ViennaRNA MFE, nested), \texttt{inverse\_fold\_cf},
\texttt{fold\_pkiss\_pairs} and \texttt{fold\_probknot\_pairs} (crossing-aware pair sets),
\texttt{target\_pairs}, \texttt{rand\_seq} and \texttt{enforce\_pairs}. Each round the agent's code is executed
on a $16$-target training pool disjoint from every evaluation pool, the per-target outcomes are returned as
feedback, and the agent rewrites. The best round by training score is frozen and adjudicated once on the
development/evaluation $60$; the frozen source is archived at
\texttt{runs/cc\_ascent/invent\_operator/invented\_design.py}.

Provenance and its limit. Runs call frontier models through a single provider (OpenRouter) at
temperature~$0.4$. Each \texttt{invent\_log.json} records the provider-qualified model identifier, per-round
prompt and completion tokens, per-round training score, and every compile or runtime error verbatim; each frozen
\texttt{design} program is stored in full in \texttt{result.json}. What the logs cannot contain is a
provider-side version hash, because the endpoint exposes none: a frontier model behind a routed alias is not a
fixed artifact. That is a reproducibility limit of the invention step and of nothing measured in this paper,
since every number here re-executes frozen source rather than re-querying a model.

The agent's training solve rate rises as it folds oracle feedback into successive rewrites, and the frozen best
round measures $9/60$ under all three predictors. The analogous frozen results for all six frontier models are
in Table~\ref{tab:models}, and every one peaks by round $\le5$.

\begin{figure}[t]\centering
\includegraphics[width=0.94\linewidth]{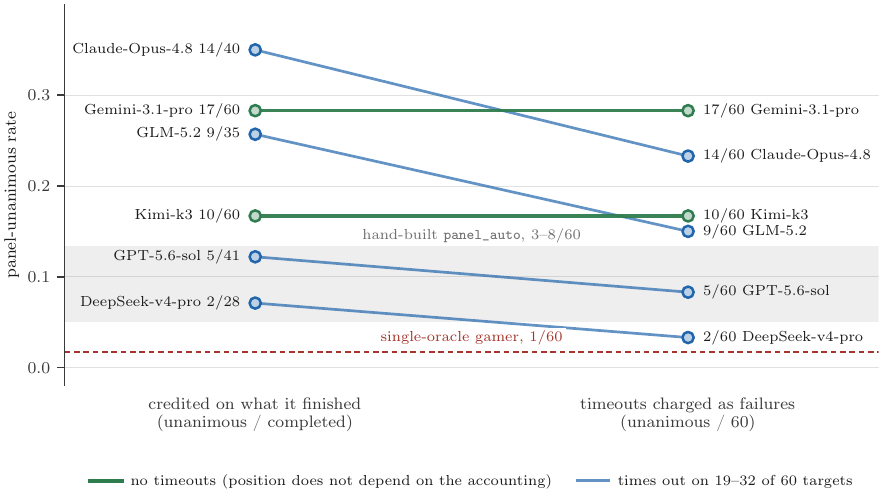}
\caption{\textbf{Why Table~\ref{tab:models} is alphabetical.} The same six agents under the two defensible
accountings of a timeout. Credit each model only on the targets it finished and Claude-Opus-4.8 leads;
charge timeouts as failures and Gemini-3.1-pro does. The crossing lines are the finding: an ordering that
reverses under a bookkeeping choice is not an ordering, so we report none. The two flat lines are the models
that never time out, and their position is the only one here that does not depend on the choice. What
survives both accountings is the single claim this appendix makes: all six clear the gamer and the decidable
floor (\texttt{model\_sensitivity.py}).}\label{fig:models}
\end{figure}

\begin{table}[t]\centering\small
\setlength{\tabcolsep}{6pt}\renewcommand{\arraystretch}{1.12}
\caption{\textbf{An agent invents the operator, across six frontier models} (three open, three closed). Each agent,
given only primitives and the two crossing oracles, writes and revises a \texttt{design} operator; the frozen
best is adjudicated by the panel on the development/evaluation $60$. Reference points: hand-built
\texttt{panel\_auto} $34$--$37$ / $3$--$8$; gamer $43$ / $1$; \CF{} floor $0$. \textbf{Rows are alphabetical,
not ranked}, and timeouts are charged as failures against the full $60$. Crediting models only on what they
finished would rank Claude-Opus-4.8 first ($14/40$); charging timeouts ranks Gemini-3.1-pro first ($17/60$).
That the order flips under an accounting change is why we claim no order, and Fig.~\ref{fig:models} is that flip. What survives both is the only claim
made here: all six clear the gamer and the floor (\texttt{model\_sensitivity.py}).}\label{tab:models}
\begin{tabularx}{\linewidth}{@{}Llccc@{}}
\toprule
agent & weights & train $/16$ & held-out pKiss$\wedge$ProbKnot & \textbf{panel-unanimous $/60$}\\
\midrule
Claude-Opus-4.8 & closed & $12$ & $40/60$ & $14$\\
DeepSeek-v4-pro & open & $5$ & $26/60$ & $2$\\
Gemini-3.1-pro & closed & $12$ & $38/60$ & $17$\\
GLM-5.2 & open & $12$ & $35/60$ & $9$\\
GPT-5.6-sol & closed & $12$ & $41/60$ & $5$\\
Kimi-k3 & open & $12$ & $42/60$ & $10$\\
\bottomrule
\end{tabularx}
\end{table}

\smallskip\noindent\textbf{Design limits of the six-model comparison.} One temperature-$0.4$ draw per model,
no seed replication, and the two models later replicated span $10$--$18$ on this metric across seeds, a range
wider than most gaps in Table~\ref{tab:models}. Four of the six also time out on $19$--$32$ of $60$ targets;
that table charges the timeouts as failures and shows the ordering flipping when they are not
(\texttt{model\_sensitivity.py}). The experiment therefore supports one statement only, that six single-run
agents produced nonzero cross-predictor consistency under a common invention protocol, and supports no ordering
among them. A model comparison would need several draws per model at several temperatures, a shared compute
budget, and a clustering of operator sources.

\section{Oracle Configurations, Hyperparameters, Compute, and Verifier Stringency}\label{app:config}
\noindent\textbf{Oracle configurations}, all deterministic and replayable with \texttt{DATAPATH} pinned to
the RNAstructure data tables: $\Omega_{\mathrm{cf}}$ is ViennaRNA Turner-2004 MFE at $37^\circ$C (env
\texttt{rna}, no timeout); pKiss is \texttt{pkiss mfe} \citep{janssen2015pkiss} (env \texttt{pk}, $60$\,s);
ProbKnot is RNAstructure MEA over the partition ensemble \citep{bellaousov2010probknot} (env \texttt{rnastr},
$60$\,s); ShapeKnots is RNAstructure MFE-based run \emph{de novo} with no SHAPE data
\citep{hajdin2013shapeknots} (env \texttt{rnastr}, $120$\,s).

\noindent\textbf{Operator hyperparameters.} REMC floor: wall $30$\,s. Design operators (fixed across all
targets, with size-independent budgets): skeleton budget $12$--$20$\,s, repair budget $90$--$120$\,s, $\le800$--$1500$
repair iterations, ProbKnot gate at $\mathrm{pk\_bp}\le3$. \textbf{Compute.} A single $256$-core host;
target-level parallelism at $\mathrm{NWORKERS}{=}60$ (the design operators are single-process, so this is
safe); the entire sweep suite is search-only and incurs \emph{no} model-inference cost, except the invention
agent (App.~\ref{app:invent}), a few dollars of GLM-5.2.

\smallskip\noindent\textbf{Positive-side stringency (the verifier ladder), and one rung beyond
panel-unanimity.} Panel-unanimity is already stronger than ``the target pairs appear somewhere'': it requires
the target to \emph{equal} each method's predicted structure, the MFE under pKiss and ShapeKnots and the
maximum-expected-accuracy structure over a partition-function ensemble under ProbKnot. Those are related but
not identical notions of a single dominant fold, since ProbKnot's MEA is not a unique-MFE criterion, so
panel-unanimity sits near the ``unique/dominant fold'' region of the
exact\,$\subseteq$\,unique-MFE\,$\subseteq$\,dominance ladder and not exactly on one rung, agreed by three predictors that are thermodynamically related rather than statistically independent.

\begin{figure}[t]\centering
\includegraphics[width=0.82\linewidth]{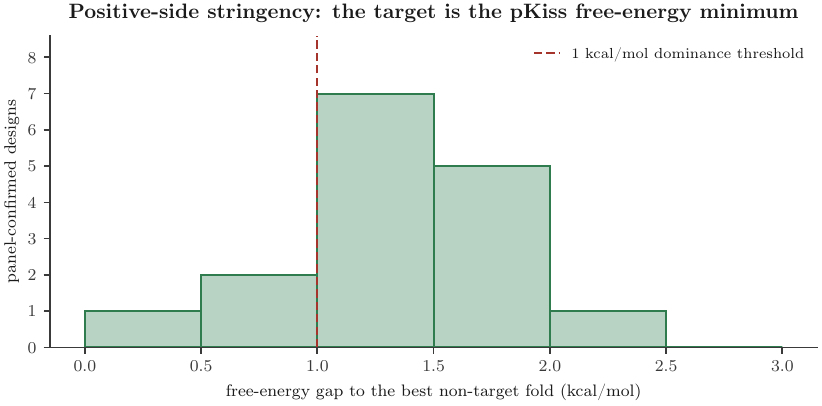}
\caption{\textbf{One rung above panel-unanimity, as a distribution rather than a median.} Free-energy gap
from the target fold to the best non-target pKiss suboptimal, for the $17$ panel-confirmed designs whose
enumeration completed. A median of $1.2$\,kcal/mol is compatible with a bimodal picture in which some designs
are marginal; the distribution shows it is not, with $14/17$ clearing $1$\,kcal and the left tail thin rather
than heavy. The scope limits stated in the text still apply: eight longer designs timed out, timeout
correlates with length, and the gap is a property of the pKiss energy model, not a model-independent
thermodynamic fact (\texttt{positive\_stringency.py}).}\label{fig:stringency}
\end{figure}

We climb one rung further and measure it (\texttt{runs/cc\_ascent/positive\_stringency.py}). Enumerating pKiss
suboptimals for the mixed-panel-consistent designs and taking the free-energy gap $\Delta$ from the target to
the best alternative fold: the target is the pKiss MFE for all $17$ designs whose enumeration completed within
budget, dominant by median $\Delta{=}1.2$\,kcal/mol (mean $1.44$), with $14/17$ clearing a $\ge1$\,kcal margin (Fig.~\ref{fig:stringency}).
Eight longer designs timed out, and since timeout correlates with length this rung covers the shorter
adjudicated designs only. The gap is a property of the pKiss energy model, not a model-independent
thermodynamic fact. Within that scope the positive verdict is not a marginal coincidence of pair sets: the
target is the free-energy-minimum fold by a real if modest margin.

Two further strengthenings are computable with the installed tooling: a
Boltzmann-probability threshold (RNAstructure's \texttt{partition} and \texttt{ProbabilityPlot}), and a
fourth, non-thermodynamic predictor. Physical confirmation by in-vitro probing is the
only route past model-relativity and is out of scope here.

\clearpage
\bibliographystyle{plainnat}
\bibliography{references}

\begin{thebibliography}{62}
\providecommand{\natexlab}[1]{#1}
\providecommand{\url}[1]{\texttt{#1}}
\expandafter\ifx\csname urlstyle\endcsname\relax
  \providecommand{\doi}[1]{doi: #1}\else
  \providecommand{\doi}{doi: \begingroup \urlstyle{rm}\Url}\fi

\bibitem[Aguirre et~al.(2011)Aguirre, Buld{\'u}, Stich, and
  Manrubia]{aguirre2011}
Jacobo Aguirre, Javier~M. Buld{\'u}, Michael Stich, and Susanna~C. Manrubia.
\newblock Topological structure of the space of phenotypes: The case of {RNA}
  neutral networks.
\newblock \emph{PLoS ONE}, 6\penalty0 (10):\penalty0 e26324, 2011.
\newblock \doi{10.1371/journal.pone.0026324}.

\bibitem[Amodei et~al.(2016)Amodei, Olah, Steinhardt, Christiano, Schulman, and
  Man{\'e}]{amodei2016concrete}
Dario Amodei, Chris Olah, Jacob Steinhardt, Paul Christiano, John Schulman, and
  Dan Man{\'e}.
\newblock Concrete problems in {AI} safety.
\newblock \emph{arXiv preprint arXiv:1606.06565}, 2016.

\bibitem[Anishchenko et~al.(2021)Anishchenko, Pellock, Chidyausiku,
  et~al.]{anishchenko2021hallucination}
Ivan Anishchenko, Samuel~J. Pellock, Tamuka~M. Chidyausiku, et~al.
\newblock De novo protein design by deep network hallucination.
\newblock \emph{Nature}, 600:\penalty0 547--552, 2021.

\bibitem[Baker and Pixley(1975)]{bakerpixley1975}
Kirby~A. Baker and Alden~F. Pixley.
\newblock Polynomial interpolation and the {Chinese} remainder theorem for
  algebraic systems.
\newblock \emph{Mathematische Zeitschrift}, 143\penalty0 (2):\penalty0
  165--174, 1975.
\newblock \doi{10.1007/BF01187059}.

\bibitem[Bellaousov and Mathews(2010)]{bellaousov2010probknot}
Stanislav Bellaousov and David~H. Mathews.
\newblock {ProbKnot}: Fast prediction of {RNA} secondary structure including
  pseudoknots.
\newblock \emph{RNA}, 16\penalty0 (10):\penalty0 1870--1880, 2010.
\newblock \doi{10.1261/rna.2125310}.

\bibitem[Bonnet et~al.(2020)Bonnet, Rz{\c{a}}{\.z}ewski, and
  Sikora]{bonnet2020hard}
{\'E}douard Bonnet, Pawe{\l} Rz{\c{a}}{\.z}ewski, and Florian Sikora.
\newblock Designing {RNA} secondary structures is hard.
\newblock \emph{Journal of Computational Biology}, 27\penalty0 (3):\penalty0
  302--316, 2020.
\newblock \doi{10.1089/cmb.2019.0420}.
\newblock prelim. RECOMB 2018; arXiv:1710.11513.

\bibitem[Busch and Backofen(2006)]{busch2006inforna}
Anke Busch and Rolf Backofen.
\newblock {INFO-RNA}---a fast approach to inverse {RNA} folding.
\newblock \emph{Bioinformatics}, 22\penalty0 (15):\penalty0 1823--1831, 2006.
\newblock \doi{10.1093/bioinformatics/btl194}.

\bibitem[Chen et~al.(2026)Chen, Mao, and Wang]{verifiereconomics}
Wenhui Chen, Qingqing Mao, and Chenghua Wang.
\newblock ``solved'' is a choice of verifier, not a fact: Deterministic
  cost--robustness frontiers of verifier choice in {RNA} and {SAT}.
\newblock Technical report, InceptLabs-Shanghai, 2026.
\newblock companion technical report.

\bibitem[Chomsky(1956)]{chomsky1956}
Noam Chomsky.
\newblock Three models for the description of language.
\newblock \emph{IRE Transactions on Information Theory}, 2\penalty0
  (3):\penalty0 113--124, 1956.
\newblock \doi{10.1109/TIT.1956.1056813}.

\bibitem[Del{\'e}tang et~al.(2023)Del{\'e}tang, Ruoss, Grau-Moya, Genewein,
  Wenliang, Catt, Cundy, Hutter, Legg, Veness, and Ortega]{deletang2023}
Gr{\'e}goire Del{\'e}tang, Anian Ruoss, Jordi Grau-Moya, Tim Genewein, Li~Kevin
  Wenliang, Elliot Catt, Chris Cundy, Marcus Hutter, Shane Legg, Joel Veness,
  and Pedro~A. Ortega.
\newblock Neural networks and the {Chomsky} hierarchy.
\newblock In \emph{ICLR}, 2023.
\newblock arXiv:2207.02098.

\bibitem[Dowell and Eddy(2004)]{dowelleddy2004}
Robin~D. Dowell and Sean~R. Eddy.
\newblock Evaluation of several lightweight stochastic context-free grammars
  for {RNA} secondary structure prediction.
\newblock \emph{BMC Bioinformatics}, 5:\penalty0 71, 2004.
\newblock \doi{10.1186/1471-2105-5-71}.

\bibitem[Durbin et~al.(1998)Durbin, Eddy, Krogh, and Mitchison]{durbin1998}
Richard Durbin, Sean~R. Eddy, Anders Krogh, and Graeme Mitchison.
\newblock \emph{Biological Sequence Analysis}.
\newblock Cambridge University Press, 1998.

\bibitem[Fawzi et~al.(2022)Fawzi, Balog, Huang, Hubert, Romera-Paredes,
  Barekatain, Novikov, Ruiz, Schrittwieser, Swirszcz, Silver, Hassabis, and
  Kohli]{alphatensor}
Alhussein Fawzi, Matej Balog, Aja Huang, Thomas Hubert, Bernardino
  Romera-Paredes, Mohammadamin Barekatain, Alexander Novikov, Francisco J.~R.
  Ruiz, Julian Schrittwieser, Grzegorz Swirszcz, David Silver, Demis Hassabis,
  and Pushmeet Kohli.
\newblock Discovering faster matrix multiplication algorithms with
  reinforcement learning.
\newblock \emph{Nature}, 610\penalty0 (7930):\penalty0 47--53, 2022.
\newblock \doi{10.1038/s41586-022-05172-4}.

\bibitem[Gao et~al.(2023)Gao, Schulman, and Hilton]{gao2023overopt}
Leo Gao, John Schulman, and Jacob Hilton.
\newblock Scaling laws for reward model overoptimization.
\newblock In \emph{ICML}, PMLR 202, pages 10835--10866, 2023.
\newblock arXiv:2210.10760.

\bibitem[Gottweis et~al.(2025)Gottweis, Weng, Daryin, et~al.]{coscientist2025}
Juraj Gottweis, Wei-Hung Weng, Alexander Daryin, et~al.
\newblock Towards an {AI} co-scientist, 2025.
\newblock arXiv:2502.18864.

\bibitem[Gr{\"u}nwald(2007)]{grunwald2007mdl}
Peter~D. Gr{\"u}nwald.
\newblock \emph{The Minimum Description Length Principle}.
\newblock MIT Press, 2007.

\bibitem[Hajdin et~al.(2013)Hajdin, Bellaousov, Huggins, Leonard, Mathews, and
  Weeks]{hajdin2013shapeknots}
Christine~E. Hajdin, Stanislav Bellaousov, Wayne Huggins, Christopher~W.
  Leonard, David~H. Mathews, and Kevin~M. Weeks.
\newblock Accurate {SHAPE}-directed {RNA} secondary structure modeling,
  including pseudoknots.
\newblock \emph{PNAS}, 110\penalty0 (14):\penalty0 5498--5503, 2013.
\newblock \doi{10.1073/pnas.1219988110}.

\bibitem[Hopcroft and Ullman(1979)]{hopcroftullman1979}
John~E. Hopcroft and Jeffrey~D. Ullman.
\newblock \emph{Introduction to Automata Theory, Languages, and Computation}.
\newblock Addison-Wesley, 1979.

\bibitem[Huang et~al.(2025)Huang, Jin, Li, Li, Cand{\`e}s, and
  Leskovec]{huang2025popper}
Kexin Huang, Ying Jin, Ryan Li, Michael~Y. Li, Emmanuel Cand{\`e}s, and Jure
  Leskovec.
\newblock Automated hypothesis validation with agentic sequential
  falsifications.
\newblock \emph{arXiv preprint arXiv:2502.09858}, 2025.
\newblock ICML 2025.

\bibitem[Janssen and Giegerich(2015)]{janssen2015pkiss}
Stefan Janssen and Robert Giegerich.
\newblock The {RNA} shapes studio.
\newblock \emph{Bioinformatics}, 31\penalty0 (3):\penalty0 423--425, 2015.
\newblock \doi{10.1093/bioinformatics/btu649}.

\bibitem[Joshi(1985)]{joshi1985tag}
Aravind~K. Joshi.
\newblock Tree adjoining grammars: How much context-sensitivity is required to
  provide reasonable structural descriptions?
\newblock In David~R. Dowty, Lauri Karttunen, and Arnold~M. Zwicky, editors,
  \emph{Natural Language Parsing}, pages 206--250. Cambridge University Press,
  1985.

\bibitem[Kato et~al.(2006)Kato, Seki, and Kasami]{katosekikasami2006}
Yuki Kato, Hiroyuki Seki, and Tadao Kasami.
\newblock {RNA} pseudoknotted structure prediction using stochastic multiple
  context-free grammar.
\newblock \emph{IPSJ Digital Courier}, 2:\penalty0 655--664, 2006.
\newblock \doi{10.2197/ipsjdc.2.655}.

\bibitem[Knudsen and Hein(1999)]{knudsenhein1999}
Bjarne Knudsen and Jotun Hein.
\newblock {RNA} secondary structure prediction using stochastic context-free
  grammars and evolutionary history.
\newblock \emph{Bioinformatics}, 15\penalty0 (6):\penalty0 446--454, 1999.
\newblock \doi{10.1093/bioinformatics/15.6.446}.

\bibitem[Koodli et~al.(2021)Koodli, Rudolfs, Wayment-Steele, and
  Das]{koodli2021eterna100v2}
Rohan~V. Koodli, Boris Rudolfs, Hannah~K. Wayment-Steele, and Rhiju Das.
\newblock Redesigning the {Eterna100} for the {Vienna} 2 folding engine.
\newblock \emph{bioRxiv}, 2021.
\newblock \doi{10.1101/2021.08.26.457839}.
\newblock preprint.

\bibitem[Koohestani et~al.(2025)Koohestani, Li, Podkopaev, and
  Izadi]{agentsautomata2025}
Roham Koohestani, Ziyou Li, Anton Podkopaev, and Maliheh Izadi.
\newblock Are agents just automata? on the formal equivalence between agentic
  {AI} and the {Chomsky} hierarchy.
\newblock \emph{arXiv preprint arXiv:2510.23487}, 2025.

\bibitem[Lakatos(1978)]{lakatos1978}
Imre Lakatos.
\newblock \emph{The Methodology of Scientific Research Programmes}.
\newblock Cambridge University Press, 1978.

\bibitem[Lightman et~al.(2023)Lightman, Kosaraju, Burda, Edwards, Baker, Lee,
  Leike, Schulman, Sutskever, and Cobbe]{lightman2023verify}
Hunter Lightman, Vineet Kosaraju, Yura Burda, Harri Edwards, Bowen Baker, Teddy
  Lee, Jan Leike, John Schulman, Ilya Sutskever, and Karl Cobbe.
\newblock Let's verify step by step.
\newblock \emph{arXiv preprint arXiv:2305.20050}, 2023.
\newblock ICLR 2024.

\bibitem[Liu et~al.(2024)Liu, Liu, Zhu, Lei, Yang, Zhang, Li, and
  Liu]{liu2024aigs}
Zijun Liu, Kaiming Liu, Yiqi Zhu, Xuanyu Lei, Zonghan Yang, Zhenhe Zhang, Peng
  Li, and Yang Liu.
\newblock {AIGS}: Generating science from {AI}-powered automated falsification.
\newblock \emph{arXiv preprint arXiv:2411.11910}, 2024.

\bibitem[Lorenz et~al.(2011)Lorenz, Bernhart, H{\"o}ner~zu Siederdissen, Tafer,
  Flamm, Stadler, and Hofacker]{lorenz2011}
Ronny Lorenz, Stephan~H. Bernhart, Christian H{\"o}ner~zu Siederdissen, Hakim
  Tafer, Christoph Flamm, Peter~F. Stadler, and Ivo~L. Hofacker.
\newblock {ViennaRNA} package 2.0.
\newblock \emph{Algorithms for Molecular Biology}, 6\penalty0 (1):\penalty0 26,
  2011.
\newblock \doi{10.1186/1748-7188-6-26}.

\bibitem[Lyngs{\o} and Pedersen(2000)]{lyngso2000pknot}
Rune~B. Lyngs{\o} and Christian N.~S. Pedersen.
\newblock {RNA} pseudoknot prediction in energy-based models.
\newblock \emph{Journal of Computational Biology}, 7\penalty0 (3-4):\penalty0
  409--427, 2000.
\newblock \doi{10.1089/106652700750050862}.

\bibitem[Mankowitz et~al.(2023)Mankowitz, Michi, Zhernov, Gelmi, Selvi,
  Paduraru, et~al.]{alphadev}
Daniel~J. Mankowitz, Andrea Michi, Anton Zhernov, Marco Gelmi, Marco Selvi,
  Cosmin Paduraru, et~al.
\newblock Faster sorting algorithms discovered using deep reinforcement
  learning.
\newblock \emph{Nature}, 618\penalty0 (7964):\penalty0 257--263, 2023.
\newblock \doi{10.1038/s41586-023-06004-9}.

\bibitem[McCaskill(1990)]{mccaskill1990}
John~S. McCaskill.
\newblock The equilibrium partition function and base pair binding
  probabilities for {RNA} secondary structure.
\newblock \emph{Biopolymers}, 29\penalty0 (6-7):\penalty0 1105--1119, 1990.
\newblock \doi{10.1002/bip.360290621}.

\bibitem[Nebel and Weinberg(2012)]{nebelweinberg}
Markus~E. Nebel and Frank Weinberg.
\newblock Algebraic and combinatorial properties of common {RNA} pseudoknot
  classes.
\newblock \emph{Journal of Computational Biology}, 19\penalty0 (10):\penalty0
  1134--1150, 2012.
\newblock PMC3469209.

\bibitem[Novikov et~al.(2025)Novikov, V{\~u}, Eisenberger, Dupont, Huang,
  Wagner, Shirobokov, et~al.]{alphaevolve}
Alexander Novikov, Ng{\^a}n V{\~u}, Marvin Eisenberger, Emilien Dupont, Po-Sen
  Huang, Adam~Zsolt Wagner, Sergey Shirobokov, et~al.
\newblock {AlphaEvolve}: A coding agent for scientific and algorithmic
  discovery.
\newblock \emph{arXiv preprint arXiv:2506.13131}, 2025.

\bibitem[Nussinov and Jacobson(1980)]{nussinov1980}
Ruth Nussinov and Ann~B. Jacobson.
\newblock Fast algorithm for predicting the secondary structure of
  single-stranded {RNA}.
\newblock \emph{PNAS}, 77\penalty0 (11):\penalty0 6309--6313, 1980.
\newblock \doi{10.1073/pnas.77.11.6309}.

\bibitem[Ogden(1968)]{ogden1968}
William Ogden.
\newblock A helpful result for proving inherent ambiguity.
\newblock \emph{Mathematical Systems Theory}, 2\penalty0 (3):\penalty0
  191--194, 1968.
\newblock \doi{10.1007/BF01694004}.

\bibitem[Popper(1959)]{popper1959}
Karl~R. Popper.
\newblock \emph{The Logic of Scientific Discovery}.
\newblock Hutchinson, London, 1959.
\newblock transl. of Logik der Forschung, 1934.

\bibitem[Reeder and Giegerich(2004)]{reedergiegerich2004}
Jens Reeder and Robert Giegerich.
\newblock Design, implementation and evaluation of a practical pseudoknot
  folding algorithm based on thermodynamics.
\newblock \emph{BMC Bioinformatics}, 5:\penalty0 104, 2004.
\newblock \doi{10.1186/1471-2105-5-104}.

\bibitem[Rivas and Eddy(1999)]{rivas1999pknots}
Elena Rivas and Sean~R. Eddy.
\newblock A dynamic programming algorithm for {RNA} structure prediction
  including pseudoknots.
\newblock \emph{Journal of Molecular Biology}, 285\penalty0 (5):\penalty0
  2053--2068, 1999.
\newblock \doi{10.1006/jmbi.1998.2436}.

\bibitem[Rivas and Eddy(2000)]{rivaseddy2000}
Elena Rivas and Sean~R. Eddy.
\newblock The language of {RNA}: A formal grammar that includes pseudoknots.
\newblock \emph{Bioinformatics}, 16\penalty0 (4):\penalty0 334--340, 2000.
\newblock \doi{10.1093/bioinformatics/16.4.334}.

\bibitem[RNAInvBench()]{rnainvbench}
RNAInvBench.
\newblock {RNAInvBench} and {Pseudobase++}: Pseudoknotted {RNA} inverse-design
  targets, 2024.
\newblock inverse-design benchmark suite.

\bibitem[Romera-Paredes et~al.(2024)Romera-Paredes, Barekatain, Novikov, Balog,
  Kumar, Dupont, Ruiz, Ellenberg, Wang, Fawzi, Kohli, and Fawzi]{funsearch}
Bernardino Romera-Paredes, Mohammadamin Barekatain, Alexander Novikov, Matej
  Balog, M.~Pawan Kumar, Emilien Dupont, Francisco J.~R. Ruiz, Jordan~S.
  Ellenberg, Pengming Wang, Omar Fawzi, Pushmeet Kohli, and Alhussein Fawzi.
\newblock Mathematical discoveries from program search with large language
  models.
\newblock \emph{Nature}, 625\penalty0 (7995):\penalty0 468--475, 2024.
\newblock \doi{10.1038/s41586-023-06924-6}.

\bibitem[Runge et~al.(2019)Runge, Stoll, Falkner, and Hutter]{runge2019learna}
Frederic Runge, Danny Stoll, Stefan Falkner, and Frank Hutter.
\newblock Learning to design {RNA}.
\newblock In \emph{ICLR}, 2019.
\newblock arXiv:1812.11951.

\bibitem[Sakakibara et~al.(1994)Sakakibara, Brown, Hughey, Mian, Sj{\"o}lander,
  Underwood, and Haussler]{sakakibara1994}
Yasubumi Sakakibara, Michael Brown, Richard Hughey, I.~Saira Mian, Kimmen
  Sj{\"o}lander, Rebecca~C. Underwood, and David Haussler.
\newblock Stochastic context-free grammars for {tRNA} modeling.
\newblock \emph{Nucleic Acids Research}, 22\penalty0 (23):\penalty0 5112--5120,
  1994.
\newblock \doi{10.1093/nar/22.23.5112}.

\bibitem[Schaefer(1978)]{schaefer1978}
Thomas~J. Schaefer.
\newblock The complexity of satisfiability problems.
\newblock In \emph{STOC}, pages 216--226, 1978.
\newblock \doi{10.1145/800133.804350}.

\bibitem[Schuster et~al.(1994)Schuster, Fontana, Stadler, and
  Hofacker]{schuster1994}
Peter Schuster, Walter Fontana, Peter~F. Stadler, and Ivo~L. Hofacker.
\newblock From sequences to shapes and back: A case study in {RNA} secondary
  structures.
\newblock \emph{Proceedings of the Royal Society of London B}, 255\penalty0
  (1344):\penalty0 279--284, 1994.
\newblock \doi{10.1098/rspb.1994.0040}.

\bibitem[Searls(1992)]{searls1992}
David~B. Searls.
\newblock The linguistics of {DNA}.
\newblock \emph{American Scientist}, 80\penalty0 (6):\penalty0 579--591, 1992.

\bibitem[Searls(2002)]{searls2002}
David~B. Searls.
\newblock The language of genes.
\newblock \emph{Nature}, 420\penalty0 (6912):\penalty0 211--217, 2002.
\newblock \doi{10.1038/nature01255}.

\bibitem[Seki et~al.(1991)Seki, Matsumura, Fujii, and Kasami]{seki1991mcfg}
Hiroyuki Seki, Takashi Matsumura, Mamoru Fujii, and Tadao Kasami.
\newblock On multiple context-free grammars.
\newblock \emph{Theoretical Computer Science}, 88\penalty0 (2):\penalty0
  191--229, 1991.
\newblock \doi{10.1016/0304-3975(91)90374-B}.

\bibitem[Skalse et~al.(2022)Skalse, Howe, Krasheninnikov, and
  Krueger]{skalse2022}
Joar Skalse, Nikolaus H.~R. Howe, Dmitrii Krasheninnikov, and David Krueger.
\newblock Defining and characterizing reward hacking.
\newblock In \emph{NeurIPS}, volume~35, 2022.
\newblock arXiv:2209.13085.

\bibitem[Vijay-Shanker and Weir(1994)]{vijayshanker1994}
K.~Vijay-Shanker and David~J. Weir.
\newblock The equivalence of four extensions of context-free grammars.
\newblock \emph{Mathematical Systems Theory}, 27\penalty0 (6):\penalty0
  511--546, 1994.
\newblock \doi{10.1007/BF01191624}.

\bibitem[Wang and Buehler(2026)]{selfrevising}
Fiona~Y. Wang and Markus~J. Buehler.
\newblock Self-revising discovery systems for science: A categorical framework
  for agentic {AI}.
\newblock \emph{arXiv preprint arXiv:2606.01444}, 2026.

\bibitem[Wang et~al.(2023)Wang, Xie, Jiang, et~al.]{voyager2023}
Guanzhi Wang, Yuqi Xie, Yunfan Jiang, et~al.
\newblock Voyager: An open-ended embodied agent with large language models,
  2023.
\newblock arXiv:2305.16291.

\bibitem[Watson et~al.(2023)Watson, Juergens, Bennett,
  et~al.]{watson2023rfdiffusion}
Joseph~L. Watson, David Juergens, Nathaniel~R. Bennett, et~al.
\newblock De novo design of protein structure and function with {RFdiffusion}.
\newblock \emph{Nature}, 620:\penalty0 1089--1100, 2023.

\bibitem[Wei et~al.(2025)Wei, Yang, Zhang, et~al.]{agenticscience2025survey}
Jiaqi Wei, Yuejin Yang, Xiang Zhang, et~al.
\newblock From {AI} for science to agentic science: A survey on autonomous
  scientific discovery, 2025.
\newblock arXiv:2508.14111.

\bibitem[Wirecki et~al.(2025)Wirecki, Lach, Badepally, Moafinejad, Jaryani,
  Klaudel, Nec, Baulin, and Bujnicki]{wirecki2025desirna}
Tomasz~K. Wirecki, Grzegorz Lach, Nagendar~Goud Badepally, S.~Naeim Moafinejad,
  Farhang Jaryani, Gaja Klaudel, Kalina Nec, Eugene~F. Baulin, and Janusz~M.
  Bujnicki.
\newblock {DesiRNA}: Structure-based design of {RNA} sequences with a replica
  exchange monte carlo approach.
\newblock \emph{Nucleic Acids Research}, 53\penalty0 (2), 2025.
\newblock \doi{10.1093/nar/gkae1306}.

\bibitem[Xu et~al.(2025)]{falsify2025}
Ce~Xu et~al.
\newblock Position: Falsify, don't just discover---{AI}-generated discoveries
  are not born scientific.
\newblock In \emph{ICML, Position Paper Track}, 2025.
\newblock OpenReview SlgXCLZFj3.

\bibitem[Zadeh et~al.(2011)Zadeh, Steenberg, Bois, Wolfe, Pierce, Khan, Dirks,
  and Pierce]{zadeh2011nupack}
Joseph~N. Zadeh, Conrad~D. Steenberg, Justin~S. Bois, Brian~R. Wolfe,
  Marshall~B. Pierce, Asif~R. Khan, Robert~M. Dirks, and Niles~A. Pierce.
\newblock {NUPACK}: Analysis and design of nucleic acid systems.
\newblock \emph{Journal of Computational Chemistry}, 32\penalty0 (1):\penalty0
  170--173, 2011.
\newblock \doi{10.1002/jcc.21596}.

\bibitem[Zhou et~al.(2023)Zhou, Dai, Li, Ward, Mathews, and
  Huang]{zhou2023samfeo}
Tianshuo Zhou, Ning Dai, Sizhen Li, Max Ward, David~H. Mathews, and Liang
  Huang.
\newblock {RNA} design via structure-aware multifrontier ensemble optimization.
\newblock \emph{Bioinformatics}, 39\penalty0 (Supplement 1):\penalty0
  i563--i571, 2023.
\newblock \doi{10.1093/bioinformatics/btad252}.

\bibitem[Zhou et~al.(2024)Zhou, Tang, Mathews, and Huang]{zhou2024rigende}
Tianshuo Zhou, Wei~Yu Tang, David~H. Mathews, and Liang Huang.
\newblock Undesignable {RNA} structure identification via rival structure
  generation and structure decomposition.
\newblock In \emph{RECOMB}, LNCS 14758, pages 270--287, 2024.
\newblock \doi{10.1007/978-1-0716-3989-4_17}.
\newblock arXiv:2311.08339.

\bibitem[Zhou et~al.(2025)Zhou, Malik, Tang, Mathews, and
  Huang]{zhou2025motifs}
Tianshuo Zhou, Apoorv Malik, Wei~Yu Tang, David~H. Mathews, and Liang Huang.
\newblock Scalable and interpretable identification of minimal undesignable
  {RNA} structure motifs with rotational invariance.
\newblock In \emph{RECOMB}, 2025.
\newblock arXiv:2402.17206.

\bibitem[Zuker and Stiegler(1981)]{zuker1981}
Michael Zuker and Patrick Stiegler.
\newblock Optimal computer folding of large {RNA} sequences using
  thermodynamics and auxiliary information.
\newblock \emph{Nucleic Acids Research}, 9\penalty0 (1):\penalty0 133--148,
  1981.
\newblock \doi{10.1093/nar/9.1.133}.

\end{thebibliography}

\end{document}